\documentclass[twoside,11pt]{article}

% Any additional packages needed should be included after jmlr2e.
% Note that jmlr2e.sty includes epsfig, amssymb, natbib and graphicx,
% and defines many common macros, such as 'proof' and 'example'.
%
% It also sets the bibliographystyle to plainnat; for more information on
% natbib citation styles, see the natbib documentation, a copy of which
% is archived at http://www.jmlr.org/format/natbib.pdf
\usepackage{booktabs}
\usepackage{multirow}
\usepackage{jmlr2e}
\usepackage{xcolor}
\usepackage{bm}
\usepackage{mathtools}
\usepackage{algorithm}
\usepackage{algorithmic}
\usepackage{cancel}
\usepackage{subfigure}
\usepackage{amsmath}
\usepackage{nicefrac}

\DeclareMathOperator*{\argmin}{arg\,min}
\DeclareMathOperator*{\argmax}{arg\,max}
\DeclareMathOperator{\proj}{proj}

% Definitions of handy macros can go here

\newcommand{\ie}{\emph{i.e.}}
\newcommand{\eg}{\emph{e.g.}}
\newcommand{\opt}{^{\star}}

\setlength{\parindent}{0pt}
\setlength{\parskip}{2mm plus 1mm minus 1mm}

% Heading arguments are {volume}{year}{pages}{submitted}{published}{author-full-names}
%\jmlrheading{1}{2000}{1-48}{4/00}{10/00}{Marina Meil\u{a} and Michael I. Jordan}

% Short headings should be running head and authors' last names

\ShortHeadings{Policy Gradient for Robust MDPs}{Wang, Ho, and Petrik}
\firstpageno{1}

\begin{document}

\title{Policy Gradient for Robust Markov Decision Processes}

\author{\name Qiuhao Wang \email jerrison.wang@my.cityu.edu.hk \\
       % \addr School of Data Science\\
       % City University of Hong Kong\\
       % 83 Tat Chee Avenue\\
       % Kowloon Tong, Hong Kong
       % \and
       \name Shaohang Xu \email shaohanxu2@cityu.edu.hk\\
       \name Chin Pang Ho \email clint.ho@cityu.edu.hk \\
       \addr Department of Data Science\\
       City University of Hong Kong\\
       % 83 Tat Chee Avenue\\
       Kowloon Tong, Hong Kong
       \AND
       Marek Petrik \email mpetrik@cs.unh.edu\\
       \addr Department of Computer Science\\
       University of New Hampshire \\
       Durham, NH, USA, 03861\\
       }

\editor{EDITOR}

\maketitle

\begin{abstract}%   <- trailing '%' for backward compatibility of .sty 
We develop a generic policy gradient method with the global optimality guarantee for robust Markov Decision Processes (MDPs). While policy gradient methods are widely used for solving dynamic decision problems due to their scalable and efficient nature, adapting these methods to account for model ambiguity has been challenging, often making it impractical to learn robust policies. This paper introduces a novel policy gradient method, Double-Loop Robust Policy Mirror Descent (DRPMD), for solving robust MDPs. DRPMD employs a general mirror descent update rule for the policy optimization with adaptive tolerance per iteration, guaranteeing convergence to a globally optimal policy. We provide a comprehensive analysis of DRPMD, including new convergence results under both direct and softmax parameterizations, and provide novel insights into the inner problem solution through Transition Mirror Ascent (TMA). Additionally, we propose innovative parametric transition kernels for both discrete and continuous state-action spaces, broadening the applicability of our approach. Empirical results validate the robustness and global convergence of DRPMD across various challenging robust MDP settings.
\end{abstract}

\begin{keywords}
Robust optimization, Markov Decision Processes
\end{keywords}

\section{Introduction}\label{sec:intro}

Markov decision processes~(MDPs) serve as a fundamental model in dynamic decision-making and reinforcement learning~\citep{puterman2014markov,sutton2018reinforcement,meyn2022control}. Classical MDPs typically consider stochastic environments and assume that the model parameters, such as transition probabilities, are precisely known. However, in most real-world applications, such as finance~\citep{sun2023reinforcement} and healthcare~\citep{goh2018data}, model parameters are estimated from noisy and limited observation data, and these estimation errors may lead to policies that perform poorly in practice. Using the idea of robust optimization, Robust MDPs (RMDPs) allow one to compute policies that exhibit resilience to parameter errors~\citep {iyengar2005robust,nilim2005robust}. RMDPs do not assume precise knowledge of transition probabilities but instead allow these probabilities to vary within a specific \emph{ambiguity set}, aiming to seek a policy that is optimal under the worst-case plausible realization of the transition probabilities~\citep{xu2006robustness,mannor2012lightning,hanasusanto2013robust,tamar2014scaling,delgado2016real}. Compared to MDPs, the performance of RMDPs is less sensitive to the parameter errors that arise when estimating the transition kernel from empirical data, as is often the case in reinforcement learning~\citep{xu2009parametric,ICML2012Petrik_283,ghavamzadeh2016safe}. 

The prevailing literature on RMDPs predominantly assumes rectangular ambiguity sets, which constrain the errors in the transition probabilities independently for each state~\citep{wiesemann2013robust,ho2021partial,Panaganti2021}. This assumption is crucial for RMDPs to maintain traceability, ensuring that an optimal policy of the RMDP can be computed using dynamic programming~\citep{iyengar2005robust,nilim2005robust,kaufman2013robust,ho2021partial}. In contrast, RMDPs with general ambiguity sets are NP-hard to solve~\citep{wiesemann2013robust}. The simplest rectangularity assumption, known as $(s,a)$-rectangularity, permits the adversarial nature to independently select the worst transition probability for each state and action pair. However, this $(s,a)$-rectangularity assumption can be overly restrictive, often yielding conservative policies. This paper, therefore, explores the more general $s$-rectangular ambiguity set~\citep{le2007robust,wiesemann2013robust,derman2021twice,wang2022geometry}, which allows the adversarial nature to choose transition probabilities without observing the action. Our results also readily extend to other notions of rectangularity, including $k$-rectangular~\citep{mannor2016robust} and $r$-rectangular RMDPs~\citep{goyal2022robust}. While rectangularity assumptions have been extensively studied particularly in the tabular setting~\citep{iyengar2005robust,nilim2005robust,le2007robust,kaufman2013robust,wiesemann2013robust,ho2018fast,Behzadian2021,ho2021partial,ho2022robust}, such rectangular ambiguity sets can be still quite restrictive in modeling ambiguity, particularly in large-scale or continuous contexts.

Policy gradient techniques have gained significant popularity in reinforcement learning (RL) due to their remarkable empirical performance and flexibility in large and complex domains~\citep{silver2014deterministic,xu2014reinforcement}. By parameterizing policies, policy gradient methods exhibit scalability across large state and action spaces, and these methods can also easily leverage generic optimization techniques~\citep{konda1999actor,bhatnagar2009natural,petrik2014raam,pirotta2015policy,schulman2015trust,schulman2017proximal,behzadian2021fast}. Recent studies show that many policy gradient algorithms can guarantee global optimality in tabular MDPs, even when optimizing non-convex objectives~\citep{mei2020global,agarwal2021theory,bhandari2021linear,xiao2022convergence,yuan2022general}. Despite the importance of policy gradient methods in RL, the development of gradient-based approaches for solving RMDPs, with provable optimal robustness guarantees, is still an open area of research.

The main goal of this paper is to address the limitations of the existing RMDP literature by developing a computationally tractable algorithm for RMDPs with provable convergence guarantees, applicable to both the tabular and large-scale continuous settings. Our framework comprises four components, each of which represents a novel contribution.

Our first contribution is \emph{Double-Loop Robust Policy Mirror Descent} (DRPMD), a new generic policy gradient scheme for solving RMDPs. DRPMD accommodates various commonly used parameterized policy classes, such as softmax parameterization, Gaussian parameterization, and neural policy classes, allowing for more flexible policy representations. Inspired by double-loop algorithms designed for saddle point problems~\citep{jin2020local,luo2020stochastic,razaviyayn2020nonconvex,zhang2020single}, DRPMD features two nested loops: an \emph{outer loop} that updates policies, and an \emph{inner loop} that approximately computes the worst-case transition probabilities.

As our second contribution, we prove that DRPMD is guaranteed to converge to a globally optimal policy with linearly increasing step sizes. While assuming an oracle for solving the inner problem, our proposed algorithm achieves a fast global convergence rate of $\mathcal{O}(\epsilon^{-1})$ for general RMDPs under both direct parameterization and softmax parameterization, even in non-rectangular cases. To address the potentially high computational costs of solving the inner loop optimally, we propose an adaptive schedule for reducing approximation errors, which is sufficient to ensure convergence to the optimal solution and enhance the algorithm's efficiency. To the best of our knowledge, DRPMD is the first gradient-based method for RMDPs with softmax parameterization that offers a provable convergence guarantee. For special $s$-rectangular RMDPs with direct parameterization, DRPMD achieves an even faster rate of $\mathcal{O}(\log(\epsilon^{-1}))$, achieving the best iteration complexity among policy gradient methods for rectangular RMDPs.

Our third contribution is a novel gradient-based algorithm for solving the inner maximization problems in $s$-rectangular RMDPs, named \emph{Transition Mirror Ascent} (TMA). While the outer loop closely resembles exact policy mirror descent updates in classical MDPs~\citep{xiao2022convergence}, the inner loop must optimize over an infinite number of transition probabilities in the ambiguity set. TMA employs mirror ascent updates to compute the inner worst-case transition kernel and enjoys a proven fast linear convergence rate, comparable to the best-known convergence result provided by value-based inner solution methods that rely on the contraction operator~\citep{iyengar2005robust,nilim2005robust,wiesemann2013robust,ho2021partial}.

As our fourth contribution, we propose two innovative and complete transition parameterizations to enhance the scalability of TMA. The first is inspired by the analytical form of the worst-case transition in KL-divergence constrained $(s,a)$-rectangular RMDPs, scaling well in RMDPs with large-scale state spaces and function approximation. The second is characterized by the Gaussian Mixture model, adapting well to RMDPs with continuous state spaces and function approximation. To facilitate learning these transition parameters without expensive gradient computations, we develop a stochastic variant of the TMA method, named \emph{Monte-Carlo Transition Mirror Ascent} (MCTMA).

Together, our contributions comprise a comprehensive robust policy gradient framework for solving RMDPs. Compared to an earlier conference version of this work~\citep{wang2023policy}, we introduce several significant enhancements in this paper. First, the DRPMD method generalizes the outer projected gradient descent update from DRPG by employing a more flexible mirror descent update, which also ensures provably faster convergence rates through refined difference analysis techniques for various parameterized policy classes. Notably, the softmax policy parameterization explored in this paper has not been previously studied in the context of gradient-based algorithms for RMDPs. Second, we extend the inner solution method to a more versatile TMA, outperforming prior projected gradient techniques by providing a faster convergence guarantee. Third, while our previous work included a transition parameterization, we now introduce updated entropy parametric transitions and propose a new Gaussian mixture transition parameterization to improve the scalability of TMA. Lastly, we present MCTMA, a stochastic variant that enhances the practicality of our approach for real-world applications. These advancements collectively represent a significant improvement over the prior work.

The remainder of the paper is organized as follows. We summarize relevant prior work in Section~\ref{sec:RelaWork} and review the basic notation and fundamental results of nominal MDPs and RMDPs in Section~\ref{sec:setup}. Section~\ref{sec:DRPMD} describes the outer loop of our proposed DRPMD algorithm and demonstrates its global convergence guarantee. The algorithms and transition parameterizations for addressing the inner loop are detailed in Section~\ref{sec:Inner-Rec}. Finally, in Section~\ref{sec:Numerical}, we present experimental results that illustrate the effective empirical performance of our algorithms. 

We use the following notation throughout the paper. Regular lowercase letters (\eg, $p$) denote scalars, boldface lowercase letters (\eg, $\bm{p}$) represent vectors, and boldface uppercase letters (\eg, $\bm{X}$) denote matrices. Indexed values are printed in bold for vectors and in the regular font for scalars. Specifically, $p_{i}$ refers to the $i$-th component of a vector $\bm{p}$, whereas $\bm{p}_{i}$ is the $i$-th vector of a sequence of vectors. All vector inequalities are understood to hold component-wise. Calligraphic letters and uppercase Greek letters (\eg, $\mathcal{X}$ and $\Xi$) are reserved for sets. The symbol $\bm{e}$ denotes a vector of all ones of the size appropriate to the context. The set $\mathbb{R}$ represents real numbers, and the set $\mathbb{R}_{+}$ represents non-negative real numbers. The probability simplex in $\mathbb{R}^S_+$ is denoted as $\Delta^{S}$. 
 For vectors, we use $\|\cdot\|$ to denote the $l_{2}$-norm. For a differentiable function $h(\bm{x},\bm{y})$, we use $\nabla_{\bm{x}}h(\bm{x},\bm{y})$ to denote the partial derivative of $h$ with respect to $\bm{x}$.

\section{Related Work}\label{sec:RelaWork}

Robust MDPs with rectangular uncertainty sets are typically tackled using value-based methods, which compute the optimal policy's value function by solving the robust Bellman equation~\citep{iyengar2005robust,nilim2005robust,kaufman2013robust,wiesemann2013robust,ho2021partial}. Subsequent research has extended these methods to sample-based approaches~\citep{roy2017reinforcement,tessler2019action,badrinath2021robust,wang2021online,liu2022distributionally,panaganti2022robust,panaganti2022sample,zhou2024natural}. Additionally, approximate dynamic programming (ADP) techniques~\citep{powell2007approximate} have been extended to approximate robust Bellman updates in value-based methods~\citep{tamar2014scaling,zhou2021finite,ma2022distributionally}. The application of ADP to value-based methods facilitates the use of function approximation to address the curse of dimensionality~\citep{roy2017reinforcement,badrinath2021robust,kose2021risk}, offering an efficient approach for improving algorithm scalability.

In addition to advancements in value-based methods, there has been growing interest in developing gradient-based methods in non-robust settings. Policy gradient~\citep{williams1992simple,sutton1999policy} and their extensions~\citep{konda1999actor,kakade2001natural,schulman2015trust} have demonstrated success in various applications. Although the surge of interest in policy gradient methods in RL, the theoretical understanding of convergence behavior remains limited to local optima and stationary points. It was not until recently that the global optimality of various policy gradient methods was established~\citep{mei2020global,agarwal2021theory,bhandari2021linear,li2021softmax,cen2022fast,xiao2022convergence,yuan2022general,bhandari2024global}. Stochastic policy gradient methods, which estimate first-order information via samples, have also been proposed, with studies focusing on both sample and iteration complexity~\citep{shani2020adaptive,xu2020improving,lan2023policy}.

Despite these advancements, gradient-based methods for solving RMDPs remain largely unexplored. A recent work by~\cite{wang2022policy} proposes a policy gradient method for solving RMDPs with a particular $(s,a)$-rectangular linear contamination ambiguity set. While this algorithm is compellingly simple, it is limited to the R-contamination set, which has been shown to be equivalent to ordinary MDPs with a reduced discount factor~\citep{wang2023policy}. Another related work by~\cite{li2022first} develops an extended mirror descent method for solving RMDPs. Our DRPMD algorithm shares similarities with their outer policy update but improves upon it in several respects. Specifically, their results are confined to $(s,a)$-rectangular RMDPs and appear to be challenging to generalize to $s$-rectangular sets. In contrast, our algorithm converges to the globally optimal policy for $s$-rectangular RMDPs. Moreover, their method assumes solving the inner loop optimally in each policy update, which can be computationally expensive, whereas DRPMD introduces a decreasing adaptive tolerance sequence without compromising convergence. Another noteworthy work by~\cite{zhou2024natural} explores a data-driven robust natural actor-critic algorithm to address $(s,a)$-rectangular RMDPs. Their approach introduces two well-constructed ambiguity sets and leverages function approximation techniques to efficiently manage large-scale robust RL problems.

Regarding RMDPs with the $s$-rectangular ambiguity set, \cite{kumar2023towards} propose an algorithm achieving the $\mathcal{O}(\epsilon^{-1})$ convergence rate; however, it relies on a strong assumption that the robust objective function is smooth, which may not hold for many ambiguity sets~\citep{lin2024single}. Another related work by~\cite{kumar2024policy} considers the ball-constrained ambiguity set, provides a closed-form expression of the worst-case transition kernel, and proposes a robust policy gradient method. Concurrently, \cite{li2023policy} introduces a double-loop algorithm for solving $s$-rectangular RMDPs, achieving the same $\mathcal{O}(\epsilon^{-4})$ convergence rate as~\cite{wang2023policy}. This work~\citep{li2023policy} also introduces an inner solution method for the $s$-rectangular RMDPs inner problem, though it converges at a slower global rate of $\mathcal{O}(\epsilon^{-2}_{\bm{\pi}})$ compared to TMA. Interestingly, \cite{li2023policy} also addresses RMDPs with a non-rectangular ambiguity set by employing projected Langevin dynamics, a Monte Carlo method for solving the inner problem. 

It is worth noting that all aforementioned related works feature a double-loop structure, alternately updating the outer policy and solving the inner problem optimally, whether by using analytical worst-case transition kernel for a specific ambiguity set~\citep{wang2022policy,kumar2024policy} or by assuming the existence of an oracle for solving the inner problem~\citep{li2022first,kumar2023towards}. From this point of view, our DRPMD method generalizes and extends these approaches. Apart from the double-loop methods, \cite{lin2024single} proposes the first single-loop robust policy gradient method for $s$-rectangular RMDPs with a global optimality guarantee. Compared to existing methods with a double-loop structure, their single-loop method avoids the costly computation required for the search of an inner approximation, offering better computational efficiency.

While our present paper exclusively focuses on RMDPs, it is worth mentioning that there is an active line of research studying a related model, called distributionally robust MDPs, which assumes the transition kernel is random and governed by an unknown probability distribution that lies in an ambiguity set \citep{ruszczynski2010risk,xu2010distributionally,shapiro2016rectangular,chen2019distributionally,clement2021first,shapiro2021distributionally,liu2022distributionally,yu2024fast}.

\section{The Model}\label{sec:setup}

This section reviews MDPs and RMDPs. We summarize their necessary notations and fundamental concepts that will be used throughout the paper.

\subsection{Markov Decision Processes}

A nominal MDP is specified by a tuple $\langle\mathcal{S},\mathcal{A},\bm{p},\bm{c}, \bm{\rho}\rangle$, where $\mathcal{S}=\{1,2,\cdots,S\}$ and $\mathcal{A}=\{1,2,\cdots, A\}$ are the finite state and action sets, respectively. The probability distribution of transiting from the current state $s$ to the next state $s'$ after taking an action $a$ is denoted as a vector $\bm{p}_{sa}\in\Delta^{S}$ and in a part of the transition kernel $\bm{p}:=(\bm{p}_{sa})_{s\in\mathcal{S},a\in\mathcal{A}}\in(\Delta^{S})^{S\times A}$. The cost of the aforementioned transition is denoted as $c_{sas'}$ for each $(s,a,s')\in\mathcal{S}\times\mathcal{A}\times\mathcal{S}$. We assume that $c_{s a s'} \in [0,1]$ for each $s,s'\in \mathcal{S}$ and $a\in \mathcal{A}$. This is without loss of generality because translating the costs by a constant or multiplying them by a positive scalar does not change the set of optimal policies~\citep{puterman2014markov}. The initial state is selected randomly according to the initial state distribution $\bm{\rho}\in\Delta^{S}$. 

A (stationary) randomized policy $\bm{\pi}:=(\bm{\pi}_s)_{s\in\mathcal{S}}, \bm{\pi}_s \in \Delta^A$ is a probability density function that
 prescribes taking action $a\in\mathcal{A}$ with probability $\pi_{sa}$ whenever the MDP is in state $s\in\mathcal{S}$. We use $\Pi = (\Delta^A)^S$ to denote the set of all randomized stationary policies. The total expected discounted cost of this MDP is defined as
 \begin{equation} \label{eq:return}
  J_{\bm{\rho}}(\bm{\pi},\bm{p})
  ~:=~
\mathbb{E}_{\bm{\pi},\bm{p},\tilde{s}_0 \sim\bm{\rho}}\left[\sum_{t=0}^{\infty} \gamma^{t}\cdot  c_{\tilde{s}_{t} \tilde{a}_{t} \tilde{s}_{t+1}}\right],
\end{equation}
where $\gamma\in (0,1)$ is the discount factor, reflecting how costs are weighted over time. The random variables are denoted by a tilde here and in the remainder of the paper. Here, $\mathbb{E}_{\bm{\pi},\bm{p},\tilde{s}_0 = s}$ represents the expectation of a dynamic where the action $\tilde{a}_t$ follows the distribution $\bm{\pi}_{\tilde{s}_t}$, the state $\tilde{s}_{t+1}$ follows the distribution $\bm{p}_{\tilde{s}_t \tilde{a}_t}$ and the initial state is taken as $s\in\mathcal{S}$. 

For each $s\in\mathcal{S}$, the \emph{value function} of the MDP is 
 \begin{equation*}
  v^{\bm{\pi},\bm{p}}_{s}  ~:=~
  \mathbb{E}_{\bm{\pi},\bm{p},\tilde{s}_0 = s}\left[\sum_{t=0}^{\infty} \gamma^{t}\cdot  c_{\tilde{s}_{t} \tilde{a}_{t} \tilde{s}_{t+1}}\right],
\end{equation*}
which represents the total expected discounted cost once the MDP starts from state $s$. Given the definition of the value function, we have $J_{\bm{\rho}}(\bm{\pi},\bm{p}) = \mathbb{E}_{\tilde{s}_{0}\sim\bm{\rho}}\left[v^{\bm{\pi},\bm{p}}_{\tilde{s}_{0}}\right]$. Similarly, we define the total expected discounted cost while the MDP takes an action $a$ at the initial state $\tilde{s}_0 = s$ as the \emph{action value function}, that is,
\begin{equation*}
  q^{\bm{\pi},\bm{p}}_{sa}  ~:=~
  \mathbb{E}_{\bm{\pi},\bm{p},\tilde{s}_{0}=s, \tilde{a}_{0} = a}\left[\sum_{t=0}^{\infty} \gamma^t \cdot c_{\tilde{s}_t \tilde{a}_t \tilde{s}_{t+1}}\right].
\end{equation*}
It is straightforward to compute the value function from the action value function as $v^{\bm{\pi},\bm{p}}_{s} = \sum_{a\in \mathcal{A}}\pi_{sa}q^{\bm{\pi},\bm{p}}_{sa}$~\citep{puterman2014markov,sutton2018reinforcement}. For analytical convenience, we also define the \emph{advantage function}, for each $s\in \mathcal{S}$ and $a\in \mathcal{S}$, as
\begin{equation*}
\psi^{\bm{\pi},\bm{p}}_{sa} ~:=~ q^{\bm{\pi},\bm{p}}_{sa} - v^{\bm{\pi},\bm{p}}_{s}.
\end{equation*}
To compute a policy $\bm{\pi}^\star$ that minimizes the expected sum of discounted costs $J_{\bm{\rho}}(\bm{\pi},\bm{p})$, policy search methods have been extensively studied in recent decades~\citep{williams1992simple,kakade2001natural,silver2014deterministic,schulman2015trust}. In policy search, the policy space $\Pi$ is typically parameterized by introducing a finite-dimensional vector $\bm{\theta}$, reducing the direct search for a good policy to a search over a chosen parameter set $\Theta$, resulting in the corresponding parameterized policy $\bm{\pi}^{\bm{\theta}}$ as we describe below. To facilitate our analysis, we overload notation and refer $J_{\bm{\rho}}(\bm{\pi}^{\bm{\theta}},\bm{p})$ as our general objective function, allowing us to frame the problem of solving the MDP as
\begin{equation}\label{eq:NMdps_para}
\min_{\bm{\theta} \in \Theta} \, J_{\bm{\rho}}(\bm{\pi}^{\bm{\theta}},\bm{p}).
\end{equation}
We now summarize several common policy parameterizations. In the basic \emph{direct parametrization}~\citep{shani2020adaptive,agarwal2021theory,bhandari2021linear,bhandari2024global}, we set for each $s\in \mathcal{S}$ and $a\in \mathcal{A}$,
\begin{equation}\label{def:dire-policy}
    \pi^{\bm{\theta}}_{sa} ~:=~ \theta_{sa},
\end{equation}
where $\bm{\theta}\in\Theta=\Pi=(\Delta^{A})^{S}$. For the finite action space $\mathcal{A}$, \emph{softmax parametrization}~\citep{mei2020global,agarwal2021theory,li2021softmax} is a natural option, where the softmax policy is defined for $\bm{\theta}\in\Theta = \mathbb{R}^{S\times A}$ as
\begin{equation}\label{def:Soft-policy}
    \pi^{\bm{\theta}}_{sa} ~:=~ \frac{\exp\left(\theta_{sa}\right)}{\sum_{a^{\prime} \in \mathcal{A}} \exp\left(\theta_{sa'}\right)}.
\end{equation}
For continuous action spaces $ \mathcal{A}$ with an infinite number of possible actions, the \emph{Gaussian parametrization}~\citep{zhao2011analysis, pirotta2013adaptive, papini2022smoothing} is widely used, where the Gaussian policies are defined as
\begin{equation}
\pi^{\bm{\theta}}_{sa} ~:=~ \frac{1}{\sigma \sqrt{2 \pi}} \exp \left(-\frac{(a-\mu(s;\bm{\theta}))^2}{2 \sigma^2}\right).
\end{equation}
Here, $\mu(s;\bm{\theta})$ is a state-dependent mean function of $\bm{\theta}\in\mathbb{R}^{d}$. Another widely used class of policy is the \emph{neural policies}, also called \emph{deep policies}~\citep{duan2016benchmarking}, often applied to large-scale RL problems. For example, a neural network can be used to parameterize the mean of a Gaussian policy, leading to $a \sim\mathcal{N}(g_{\bm{\theta}}(s),\sigma)$,
where $g_{\bm{\theta}}(s)$ is a neural network with weights $\bm{\theta}$.

Policy gradient methods apply first-order optimization techniques directly to update the policy parameter $\bm{\theta}$. The gradient $\nabla_{\bm{\theta}} J_{\bm{\rho}}(\bm{\pi}^{\bm{\theta}},\bm{p})$ of the objective in~\eqref{eq:NMdps_para} can be expressed analytically as~\citep{sutton1999policy,sutton2018reinforcement}:
\begin{equation}\label{eq:PG_them}
\frac{\partial J_{\bm{\rho}}(\bm{\pi}^{\bm{\theta}},\bm{p})}{\partial\bm{\theta}} ~=~ 
\mathbb{E}_{\tilde{s}\sim\bm{d}^{\bm{\pi}^{\bm{\theta}},\bm{p}}_{\rho},\tilde{a}\sim\bm{\pi}^{\bm{\theta}}_{s}}\left[\frac{\partial\log \pi^{\bm{\theta}}_{\tilde{s}\tilde{a}}}{\partial\bm{\theta}}\cdot q^{\bm{\pi}^{\bm{\theta}},\bm{p}}_{\tilde{s}\tilde{a}}\right].
\end{equation}
Here, $\bm{d}_{\bm{\rho}}^{\bm{\pi},\bm{p}}\in\Delta^{S}$ represents the (discounted) \emph{state occupancy measure}~\citep{puterman2014markov}, defined for $s'\in \mathcal{S}$ as
\begin{equation}\label{def:occu}
d_{\bm{\rho}}^{\bm{\pi},\bm{p}}(s') ~:=~ 
(1-\gamma) \cdot \mathbb{E}_{\bm{\pi},\bm{p}, \tilde{s}_0 \sim \bm{\rho}}\left[\sum_{t=0}^{\infty} \gamma^t \cdot \bm{1}\left\{ \tilde{s}_t = s' \right\} \right].
\end{equation}
Intuitively, the discounted state occupancy measure is interpreted as the expected total discounted visits of a particular state $s$ over a trajectory.

In the standard policy gradient method, the optimal policy $\bm{\pi}^{\star}$ can be computed approximately by iteratively updating the policy parameter $\bm{\theta}$.
Specifically, at the $(t+1)$-th step, the policy parameter is updated via projected gradient descent~\citep{Bertsekas2016}, that is,
\begin{align}\label{eq:VPG}
  \bm{\theta}_{t+1} 
  ~=~
\proj_{\Theta}\left(\bm{\theta}_{t} - \alpha_{t}\nabla_{\bm{\theta}}J_{\bm{\rho}}(\bm{\pi}^{\bm{\theta}_{t}},\bm{p}_{t})\right),
\end{align}
where $\proj_{\Theta}$ is the projection operator onto $\Theta$, and $\alpha_{t}$ is the step size. To implement the update rule~\eqref{eq:VPG}, one requires the exact gradient computation with full knowledge of the transition kernel and cost function. However, in most domains, the exact transition kernel and cost function are not known precisely and must be estimated from data. Unfortunately, with limited data, these estimation errors often result in policies that perform poorly when deployed.

\subsection{Robust Markov Decision Processes}

RMDPs generalize MDPs to account for model ambiguity, aiming to find policies that are resilient to ambiguity. More specifically, in an RMDP $\langle\mathcal{S},\mathcal{A},\mathcal{P},\bm{c},\bm{\rho}\rangle$, the transition kernel $\bm{p}$ is assumed to be adversarially chosen from an \emph{ambiguity set} of plausible values $\mathcal{P}\subseteq(\Delta^{S})^{S\times A}$~\citep{hanasusanto2013robust,wiesemann2013robust,petrik2014raam,Russell2019a,ho2021partial}. Our end goal is to find a robust policy that minimizes the expected total cost under the worst-case transition kernel from $\mathcal{P}$:
\begin{equation}\label{prob_RMDP}
\min_{\bm{\pi}\in\Pi}\max_{\bm{p}\in\mathcal{P}} \, J_{\bm{\rho}}(\bm{\pi},\bm{p}).
\end{equation}
Here, the outer minimization in~\eqref{prob_RMDP} reflects the agent’s objective, while the inner maximization represents the objective of the adversarial nature. By appropriately calibrating $\mathcal{P}$ to include the unknown true transition kernel, the optimal policy derived from~\eqref{prob_RMDP} can ensure reliable performance~\citep{Russell2019a,Behzadian2021,Panaganti2022}. 

Most standard methods for solving~\eqref{prob_RMDP}, such as robust value iteration~\citep{iyengar2005robust,nilim2005robust,wiesemann2013robust}, modified policy iteration~\citep{kaufman2013robust} and partial policy iteration~\citep{ho2021partial}, focus on estimating the robust values of policies and selecting policies based on these estimates. These method often assume the that the RMDP is rectangular~\citep{iyengar2005robust,nilim2005robust,wiesemann2013robust,ho2021partial}, where the ambiguity on transitions related to different states (state-action pairs) is uncoupled, and the adversary is allowed to select the worst possible realization for each state (state-action pair) unrelated to others. 

Two common classes of ambiguity sets are considered in this paper. We say an ambiguity set $\mathcal{P}$ is $(s,a)$-rectangular~\citep{iyengar2005robust,nilim2005robust,le2007robust} if it is a Cartesian product of sets $\mathcal{P}_{s,a}\subseteq\Delta^{S}$ for each state $s\in\mathcal{S}$ and action $a\in\mathcal{A}$, \ie,
\begin{equation*}
\mathcal{P}~:=~\left\{\bm{p}\in(\Delta^{S})^{S\times A}\mid\bm{p}_{s,a}\in\mathcal{P}_{s,a},\;\forall s\in\mathcal{S},a\in\mathcal{A}\right\},
\end{equation*}
whereas an ambiguity set $\mathcal{P}$ is $s$-rectangular~\citep{wiesemann2013robust} if it is defined as a Cartesian product of
sets $\mathcal{P}_{s}\subseteq(\Delta^{S})^{A}$, \ie,
\begin{equation*}
\mathcal{P}~:=~\left\{\bm{p}\in(\Delta^{S})^{S\times A}\mid\bm{p}_{s}=(\bm{p}_{s,a})_{a\in\mathcal{A}}\in\mathcal{P}_{s},\;\forall s\in\mathcal{S}\right\}.
\end{equation*}
Although the rectangularity is a standard assumption in most prior works on RMDPs,
it is not essential for describing or analyzing the proposed method DRPMD. We only require $\mathcal{P}$ to be compact to guarantee the existence of an inner maximum. This condition is satisfied by the majority of ambiguity sets considered in prior research, including $L_{1}$-ambiguity sets~\citep{ho2021partial}, $L_{\infty}$-ambiguity sets~\citep{givan2000bounded,Behzadian2021}, $L_{2}$-ambiguity sets~\citep{nilim2005robust}, and KL-ambiguity sets~\citep{iyengar2005robust,nilim2005robust}. 
While rectangularity benefits the development of algorithms for solving the inner maximization, it is not strictly necessary (see~\cite{li2023policy}).

From an optimization perspective, the optimal policy $\bm{\pi}^{\star}$ for this RMDP is the solution $(\bm{\pi}^{\star},\bm{p}^{\star})$ of the global minimax problem~\eqref{prob_RMDP}, where $\bm{\pi}^{\star}$ minimizes the function $\Phi(\bm{\pi}):= \max_{\bm{p}\in\mathcal{P}}J_{\bm{\rho}}(\bm{\pi},\bm{p})$, and $\bm{p}^{\star}$ 
is the worst-case transition kernel that maximizes $J_{\bm{\rho}}(\bm{\pi}^{\star},\bm{p})$~\citep{jin2020local,luo2020stochastic,razaviyayn2020nonconvex,zhang2020single}. Thus, the problem of solving the RMDP is allowed also to be considered as solving the following equivalent problem
\begin{equation}\label{prob_RMDP2}
\min_{\bm{\pi}\in\Pi} \; \Big\{ \Phi(\bm{\pi})
  ~:=~
\max_{\bm{p}\in\mathcal{P}}J_{\bm{\rho}}(\bm{\pi},\bm{p}) \Big\}.
\end{equation}
When using parameterized policies, we overload the notation $\Phi$ to represent the inner maximization for parameterized policies as well, denoting $\Phi(\bm{\theta}):=\max_{\bm{p}\in\mathcal{P}}J_{\bm{\rho}}(\bm{\pi}^{\bm{\theta}},\bm{p})$, which captures the worst-case performance of the RMDP under a parameterized policy.

A natural generalization of policy gradient methods to the robust setting would be to simply solve~\eqref{prob_RMDP2} by gradient descent on the function $\Phi$. However, this is challenging because the function $\Phi$ may not be differentiable due to the inner maximization problem. Also, since $\Phi$ is neither convex nor concave, its subgradient might not exist either~\citep{nouiehed2019solving,lin2020gradient}. These complications motivate the need for the double-loop iterative scheme, which we propose for solving RMDPs in Section~\ref{sec:DRPMD}.

\section{Double-Loop Robust Policy Mirror Descent}\label{sec:DRPMD}

In this section, we present a policy gradient approach for solving RMDPs. As the main contribution of this section, we demonstrate that our algorithm guarantees a globally optimal solution to the optimization problem in~\eqref{prob_RMDP2}, despite the objective function $\Phi$ being neither convex nor concave. Additionally, we establish that our algorithm offers the first global convergence guarantee for addressing RMDPs using the more practical and widely adopted softmax parameterization. This contrasts with prior works that focus exclusively on direct parameterization~\citep{li2022first,wang2022policy,li2023policy,kumar2024policy}. For the time being, we assume the existence of an oracle capable of solving the inner maximization problem. We will provide further discussions and algorithms for addressing the inner problem in Section~\ref{sec:Inner-Rec}.

We refer to our proposed policy gradient scheme as \emph{Double-Loop Robust Policy Gradient} (DRPMD), summarized in Algorithm~\ref{alg:DRPMD}. The term ``double-loop'' aligns with established terminology in the game theory literature~\citep{nouiehed2019solving,thekumparampil2019efficient,jin2020local,zhang2020single}, indicating that the algorithm operates within two nested loops. Specifically, DRPMD iteratively searches for an optimal policy in~\eqref{prob_RMDP2} by taking steps along the policy gradient. At each iteration $t$, Algorithm~\ref{alg:DRPMD} performs an inner update to seek an approximate worst-case transition kernel $\bm{p}_{t}$, solving the inner maximization problem to a specified precision $\epsilon_t$. Once $\bm{p}_t$ is computed, DRPMD proceeds to minimize $J_{\bm{\rho}}(\bm{\pi}^{\bm{\theta}}, \bm{p}_t)$ while incorporating a distance term $D(\bm{\theta},\bm{\theta}_{t})$, ensuring that the updated parameter $\bm{\theta}_{t+1}$ remains close to $\bm{\theta}_{t}$. It is worth emphasizing that the only first-order information utilized by DRPMD to solve~\eqref{prob_RMDP2} is the partial derivative $\nabla_{\bm{\theta}}J_{\bm{\rho}}(\bm{\pi}^{\bm{\theta}},\bm{p})$. As a result, the non-differentiability of $\Phi(\bm{\theta})$ does not impede the implementation of DRPMD.

\begin{algorithm}[t]
\caption{Double-Loop Robust Policy Mirror Descent (DRPMD)}
\label{alg:DRPMD}
\begin{algorithmic}
\STATE {\bfseries Input:} initial policy parameters $\bm{\theta}_{0}$, step size sequence $\{\alpha_{t}\}_{t\geq0}$, number of iterations $T$, tolerance sequence  $\{\epsilon_{t}\}_{t\geq0}$ such that $\epsilon_{t+1}\leq\gamma\epsilon_{t}$
\FOR{$t = 0,1,\dots,T-1$}
\STATE Find $\bm{p}_{t}$ so that $\displaystyle J_{\bm{\rho}}(\bm{\pi}^{\bm{\theta}_{t}},\bm{p}_{t}) \geq \max_{\bm{p}\in\mathcal{P}} J_{\bm{\rho}}(\bm{\pi}^{\bm{\theta}_{t}},\bm{p}) - \epsilon_{t}$.
\STATE Set $\displaystyle\bm{\theta}_{t+1} \gets  \argmin_{\bm{\theta}\in\Theta}\left\{ \alpha_{t}\langle \nabla_{\bm{\theta}} J_{\bm{\rho}}(\bm{\pi}^{\bm{\theta}_{t}},\bm{p}_{t}), \bm{\theta}\rangle + D(\bm{\theta},\bm{\theta}_{t})\right\}$.
\ENDFOR
\STATE {\bfseries Output:} $\bm{\pi}^{\bm{\theta}_{t^{\star}}} \in \{\bm{\pi}^{\bm{\theta}_{0}},\dots,\bm{\pi}^{\bm{\theta}_{T-1}}\}$ s.t. $\displaystyle J_{\bm{\rho}}(\bm{\pi}^{\bm{\theta}_{t^{\star}}},\bm{p}_{t^{\star}}) = \min_{t' \in \{0,\dots,T-1\}} J_{\bm{\rho}}(\bm{\pi}^{\bm{\theta}_{t'}},\bm{p}_{t'})$.
\end{algorithmic}
\end{algorithm}

When the tolerances $\{\epsilon_{t}\}_{t\geq0}$ are appropriately chosen, DRPMD can efficiently compute a near-optimal policy with guaranteed global convergence. The adaptive tolerance sequence $\{\epsilon_{t}\}_{t\geq0}$ is inspired by prior work on algorithms for RMDPs \citep{ho2021partial}. The subsequent convergence analysis offers further guidance on suitable choices for $\epsilon_t$. This adaptive approach not only ensures convergence but also significantly accelerates the algorithm, as demonstrated by our experimental results in Section~\ref{sec:Numerical}. In contrast, such an adaptive scheme is not considered in other policy gradient algorithms studied within the context of zero-sum games and robust MDPs~\citep{nouiehed2019solving,thekumparampil2019efficient}. 

It is important to note that single-loop algorithms have been explored in the game theory literature as an alternative to double-loop methods. These single-loop algorithms interleave updates for the inner and outer optimization problems~\citep{mokhtari2020unified,zhang2020single}. While this interleaving can enhance speed, it may lead to instability and oscillations, which one can mitigate by using two-scale step size updates~\citep{heusel2017gans,daskalakis2020independent,russel2020robust,lin2024single}. Our focus is on double-loop algorithms due to their conceptual clarity and favorable empirical performance.

In Section~\ref{subsec:opt-DP}, we show that our scheme is guaranteed to converge to the global solution when direct parameterization is considered, and then, in Section~\ref{subsec:opt-SP}, the global optimality of Algorithm~\ref{alg:DRPMD} for softmax parameterization is further reported. To the best of our knowledge, this is the first generic robust policy gradient algorithm with global convergence guarantees.   

\subsection{Global Optimality: Direct
Parameterization} \label{subsec:opt-DP}

We commence our analysis with the direct policy parametrization, where the parameters are the policies themselves, \ie, $\theta_{sa} = \pi^{\bm{\theta}}_{sa}$ (see~\eqref{def:dire-policy}), and constrained in the simplex, \ie, $\Theta=\Pi$. Our algorithm employs a form of mirror descent based on proximal minimization with respect to a Bregman divergence~\citep{nemirovskij1983problem,beck2003mirror} as the update rule of outer parameters, termed the \emph{policy mirror descent step}.

Following the derivations of~\cite{shani2020adaptive}, DRPMD updates the policy at each iteration using a distance function $D$ defined in terms of dynamically weighted Bregman divergences:
\begin{equation} \label{eq:md-definition}
\begin{aligned} 
  \bm{\pi}_{t+1}
  &~\in~
    \argmin_{\bm{\pi}\in\Pi}\left\{ \alpha_{t}\langle \nabla_{\bm{\pi}}J_{\bm{\rho}}(\bm{\pi}_{t},\bm{p}_{t}), \bm{\pi}\rangle + D(\bm{\pi},\bm{\pi}_{t})\right\}\\
  &~=~ \argmin_{\bm{\pi}\in\Pi}\left\{ \alpha_{t}\langle \nabla_{\bm{\pi}}J_{\bm{\rho}}(\bm{\pi}_{t},\bm{p}_{t}), \bm{\pi}\rangle + \frac{1}{1-\gamma}B_{\bm{d}_{\bm{\rho}}^{\bm{\pi}_{t},\bm{p}_{t}}}(\bm{\pi},\bm{\pi}_{t})\right\},
\end{aligned}
\end{equation}
where $\alpha_{t}>0$ is the step size and $B_{\bm{\mu}}(\bm{\pi},\bm{\pi}')$ is defined as a weighted Bregman divergence function, that is for any $\bm{\mu}\in\Delta^{S}$,
\begin{equation*}
B_{\bm{\mu}}(\bm{\pi},\bm{\pi}')~:=~\mathbb{E}_{s\sim\bm{\mu}}\left[B(\bm{\pi}_{s}, \bm{\pi}^{\prime}_{s})\right] ~=~ \sum_{s\in\mathcal{S}}\mu_{s}B(\bm{\pi}_{s}, \bm{\pi}^{\prime}_{s}).
\end{equation*}
Here $B(\bm{\pi}_{s}, \bm{\pi}^{\prime}_{s})$ denotes the Bregman divergence between policy $\bm{\pi}_{s}$ and $\bm{\pi}'_{s}$, defined as
\begin{equation}\label{def:bregman}
 B(\bm{\pi}_{s},\bm{\pi}'_{s}) ~:=~ h(\bm{\pi}_{s}) - h(\bm{\pi}'_{s}) -\langle\nabla h(\bm{\pi}'_{s}), \bm{\pi}_{s}-\bm{\pi}'_{s}\rangle,
\end{equation}
for some $h \colon \Delta^{A} \rightarrow \mathbb{R}$, a continuously differentiable and strictly convex function known as the distance-generating function. The two widely used Bregman divergence include:
\begin{itemize}
\item \emph{Squared Euclidean (SE) distance:} generated by the squared $l_{2}$-norm, \ie,
\begin{equation*}
    h(\bm{\pi}_{s}) ~:=~ \frac{1}{2}\left\|\bm{\pi}_{s}\right\|^{2}, \quad B(\bm{\pi}_{s},\bm{\pi}'_{s}) ~:=~  \frac{1}{2}\left\|\bm{\pi}_{s} - \bm{\pi}'_{s}\right\|^{2}.
\end{equation*}
\item \emph{Kullback-Leibler (KL) divergence:} generated by the negative entropy, \ie,
\begin{equation*}
    h(\bm{\pi}_{s}) ~:=~ \sum_{a\in\mathcal{A}}\pi_{sa}\log \pi_{sa}, \quad B(\bm{\pi}_{s},\bm{\pi}'_{s}) ~:=~  \sum_{a\in\mathcal{A}}\pi_{sa}\log \frac{\pi_{sa}}{\pi'_{sa}}.
\end{equation*}
\end{itemize}
In the direct parametrization, the gradient $\nabla_{\bm{\pi}}J_{\bm{\rho}}(\bm{\pi}_{t},\bm{p}_{t})$ has a well-known analytical form~\citep{agarwal2021theory,bhandari2021linear} given for $s\in \mathcal{S}, a\in \mathcal{A}$ by
\begin{equation}\label{eq:sec3_lem3.1}
    \frac{\partial J_{\bm{\rho}}(\bm{\pi},\bm{p})}{\partial \pi_{sa}} ~=~ \frac{1}{1-\gamma} \cdot d_{\bm{\rho}}^{\bm{\pi},\bm{p}}(s)\cdot q^{\bm{\pi},\bm{p}}_{sa}.
\end{equation}
Using the notation defined above, the policy mirror descent update on $\bm{\pi}_{t}:= (\bm{\pi}_{t,s})_{s\in\mathcal{S}} \in(\Delta^{A})^{S}$ can be represented as
\begin{equation*}
\bm{\pi}_{t+1} ~=~ \argmin_{\bm{\pi}\in\Pi}\left\{ \frac{1}{1-\gamma}\sum_{s\in\mathcal{S}}d_{\bm{\rho}}^{\bm{\pi}_{t},\bm{p}_{t}}(s)\left(\alpha_{t}\langle \bm{q}^{\bm{\pi}_{t},\bm{p}_{t}}_{s}, \bm{\pi}_{s}\rangle + B(\bm{\pi}_{s
},\bm{\pi}_{t,s})\right)\right\},
\end{equation*}
which can be further decoupled to multiple independent mirror descent updates across states, taking the form as
\begin{equation*}
\bm{\pi}_{t+1,s} ~=~ \argmin_{\bm{\pi}_{s}\in\Delta^{A}}\left\{ \alpha_{t}\langle \bm{q}^{\bm{\pi}_{t},\bm{p}_{t}}_{s}, \bm{\pi}_{s}\rangle + B(\bm{\pi}_{s
},\bm{\pi}_{t,s})\right\},\quad\forall s\in\mathcal{S}.
\end{equation*}
It is worth noting that while SE distance is chosen, the mirror descent updates on the policy reduce to projected gradient updates, as applied in the work by~\cite{wang2023policy}:
\begin{equation}\label{eq:PolicyUpdate-DRPG}
    \bm{\pi}_{t+1,s} ~=~ \proj_{\Delta^{A}}\left(\bm{\pi}_{t,s} - \alpha_{t}\nabla_{\bm{\pi}_{s}}J_{\bm{\rho}}(\bm{\pi}_{t},\bm{p}_{t})\right),\quad \forall s\in\mathcal{S}.
\end{equation}
In the rest of this subsection, our focus then shifts towards analyzing the convergence behavior of DRPMD with this more general mirror descent outer updates. 

The following lemma characterizes the performance of each policy update of DRPMD, encompassing both the SE distance and KL divergence without sacrificing rigor, serving as a crucial step towards establishing global convergence. 
\begin{lemma}\label{lem:3point}
For any $\bm{y}\in\Delta^{A}$ and any $s\in\mathcal{S}$, we have
\begin{equation}\label{eq:3-point-descent}
\alpha_{t}\langle \bm{q}^{\bm{\pi}_{t},\bm{p}_{t}}_{s}, \bm{\pi}_{t+1,s}-\bm{y}\rangle + B(\bm{\pi}_{t+1,s},\bm{\pi}_{t,s}) \leq B(\bm{y},\bm{\pi}_{t,s}) - B(\bm{y},\bm{\pi}_{t+1,s}). 
\end{equation}
\end{lemma}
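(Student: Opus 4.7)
The plan is to derive the inequality as an instance of the classical three-point lemma for mirror descent, specialized to the decoupled per-state update
\[
\bm{\pi}_{t+1,s} ~=~ \argmin_{\bm{\pi}_{s}\in\Delta^{A}}\left\{ \alpha_{t}\langle \bm{q}^{\bm{\pi}_{t},\bm{p}_{t}}_{s}, \bm{\pi}_{s}\rangle + B(\bm{\pi}_{s},\bm{\pi}_{t,s})\right\}.
\]
The whole argument relies on two ingredients: the first-order optimality condition for this constrained convex problem, and the Bregman three-point identity. Neither requires any properties of $\bm{q}^{\bm{\pi}_{t},\bm{p}_{t}}_{s}$ beyond it being a fixed vector, so the inequality will hold for both SE distance and KL divergence uniformly.

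First, since $h$ is strictly convex and the feasible set $\Delta^{A}$ is convex, the minimizer $\bm{\pi}_{t+1,s}$ is unique and satisfies the variational inequality
\[
\langle \alpha_{t}\bm{q}^{\bm{\pi}_{t},\bm{p}_{t}}_{s} + \nabla h(\bm{\pi}_{t+1,s}) - \nabla h(\bm{\pi}_{t,s}),\; \bm{y} - \bm{\pi}_{t+1,s}\rangle ~\geq~ 0
\]
for every $\bm{y}\in\Delta^{A}$. Here I have used that the gradient of $B(\cdot,\bm{\pi}_{t,s})$ at $\bm{\pi}_{t+1,s}$ is $\nabla h(\bm{\pi}_{t+1,s}) - \nabla h(\bm{\pi}_{t,s})$, which follows immediately from the definition~\eqref{def:bregman}. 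Rearranging,
\[
\alpha_{t}\langle \bm{q}^{\bm{\pi}_{t},\bm{p}_{t}}_{s},\; \bm{\pi}_{t+1,s} - \bm{y}\rangle ~\leq~ \langle \nabla h(\bm{\pi}_{t+1,s}) - \nabla h(\bm{\pi}_{t,s}),\; \bm{y} - \bm{\pi}_{t+1,s}\rangle.
\]

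Second, I invoke the algebraic three-point identity for any Bregman divergence generated by a differentiable $h$:
\[
B(\bm{y},\bm{\pi}_{t,s}) - B(\bm{y},\bm{\pi}_{t+1,s}) - B(\bm{\pi}_{t+1,s},\bm{\pi}_{t,s}) ~=~ \langle \nabla h(\bm{\pi}_{t+1,s}) - \nabla h(\bm{\pi}_{t,s}),\; \bm{y} - \bm{\pi}_{t+1,s}\rangle.
\]
This identity is a direct expansion of the three Bregman terms via~\eqref{def:bregman} in which the $h(\bm{y})$ contributions cancel. Combining it with the optimality inequality yields
\[
\alpha_{t}\langle \bm{q}^{\bm{\pi}_{t},\bm{p}_{t}}_{s},\; \bm{\pi}_{t+1,s} - \bm{y}\rangle + B(\bm{\pi}_{t+1,s},\bm{\pi}_{t,s}) ~\leq~ B(\bm{y},\bm{\pi}_{t,s}) - B(\bm{y},\bm{\pi}_{t+1,s}),
\]
which is exactly~\eqref{eq:3-point-descent}.

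I do not expect a real obstacle here; the only subtle point is making sure the first-order optimality is written as a variational inequality on the constraint set $\Delta^{A}$ rather than as a vanishing gradient, since the minimizer may lie on the boundary of the simplex. Once this is handled, the proof is a one-line combination of the variational inequality and the three-point identity, and the argument is independent of which distance-generating function $h$ is used.
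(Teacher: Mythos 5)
Your proposal is correct and follows essentially the same route as the paper's proof: the first-order optimality condition written as a variational inequality over $\Delta^{A}$ (the paper phrases this via the indicator function and normal cone, which is equivalent), combined with the Bregman three-point identity using $\nabla_{\bm{z}}B(\bm{z},\bm{\pi}_{t,s}) = \nabla h(\bm{z}) - \nabla h(\bm{\pi}_{t,s})$. No gaps.
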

\begin{proof}
    By the definition of $\bm{\pi}_{t+1,s}$ in~\eqref{eq:md-definition} and its derivative in~\eqref{eq:sec3_lem3.1}, we know for any $s\in\mathcal{S}$ that
    \begin{align*}
    \bm{\pi}_{t+1,s} &=\argmin_{\bm{\pi}_{s}\in\Delta^{A}}\left\{ \alpha_t\langle \bm{q}^{\bm{\pi}_{t},\bm{p}_{t}}_{s}, \bm{\pi}_{s}\rangle + B(\bm{\pi}_{s
},\bm{\pi}_{t,s})\right\}\\
&= \argmin_ {\bm{z}\in\mathbb{R}^{A}}\left\{ \alpha_{t}\langle \bm{q}^{\bm{\pi}_{t},\bm{p}_{t}}_{s}, \bm{z}\rangle + \mathbb{I}_{\Delta^{A}}(\bm{z})+  B(\bm{z},\bm{\pi}_{t,s})\right\},
    \end{align*}
where $\mathbb{I}$ is the indicator function (see Definition~\ref{def:indi}). From the optimality of $\bm{\pi}_{t+1,s}$ in~\eqref{eq:md-definition} and using~\cite[Theorem~6.12]{rockafellar2009variational}, we have
\begin{align*}
    0 &\in \partial\left\{\alpha_{t}\langle \bm{q}^{\bm{\pi}_{t},\bm{p}_{t}}_{s}, \bm{z}\rangle + \mathbb{I}_{\Delta^{A}}(\bm{z})+  B(\bm{z},\bm{\pi}_{t,s})\right\}\Big\vert_{\bm{z}= \bm{\pi}_{t+1,s}}\\
    \Longleftrightarrow\;\;\;\; 0 &\in \alpha_{t} \bm{q}^{\bm{\pi}_{t},\bm{p}_{t}}_{s} +  \mathcal{N}_{\Delta^{A}}(\bm{\pi}_{t+1,s})+  \nabla_{\bm{z}}B(\bm{z},\bm{\pi}_{t,s})\Big\vert_{\bm{z}= \bm{\pi}_{t+1,s}}.
\end{align*}
Here, $\mathcal{N}_{\mathcal{X}}(\bm{x})$ denotes the normal cone of $\mathcal{X}$ at $\bm{x}$. Therefore, the above equation implies that, for any $\bm{y}\in\Delta^{A}$,
\begin{align*}
\alpha_{t}\langle\bm{q}^{\bm{\pi}_{t},\bm{p}_{t}}_{s}, \bm{y}-\bm{\pi}_{t+1,s}\rangle + \langle\nabla_{\bm{z}}B(\bm{z},\bm{\pi}_{t,s})\Big\vert_{\bm{z}= \bm{\pi}_{t+1,s}},\bm{y}-\bm{\pi}_{t+1,s}\rangle\geq0.
\end{align*}
From the definition of Bregman divergence~\eqref{def:bregman}, we have that
\begin{align*}
\nabla_{\bm{z}}B(\bm{z},\bm{\pi}_{t,s}) = \nabla h(\bm{z}) - \nabla h(\bm{\pi}_{t,s}). 
\end{align*}
Thus, we can obtain the following identity
\begin{align*}
B(\bm{y},\bm{\pi}_{t,s}) - B(\bm{\pi}_{t+1,s},\bm{\pi}_{t,s}) - B(\bm{y},\bm{\pi}_{t+1,s}) &= \langle \nabla h(\bm{\pi}_{t+1,s}), \bm{y}-\bm{\pi}_{t+1,s}\rangle - \langle \nabla h(\bm{\pi}_{t,s}), \bm{y}-\bm{\pi}_{t+1,s}\rangle\\
&= \langle\nabla_{\bm{z}}B(\bm{z},\bm{\pi}_{t,s})\Big\vert_{\bm{z}= \bm{\pi}_{t+1,s}}, \bm{y}-\bm{\pi}_{t+1,s}\rangle,
\end{align*}
which leads us to the desired result.
\end{proof} 
We are now ready to demonstrate the global convergence of DRPMD through the use of linearly increasing step sizes. To this end, we show that our algorithm indeed exhibits a faster convergence rate of $\mathcal{O}(\epsilon^{-1})$, even when incorporating general Bregman divergences. 
\begin{theorem}\label{the:sublinear-dir-para}
Denote $\bm{\pi}_{t^{\star}}$ as the policy that Algorithm~\ref{alg:DRPMD} outputs. Suppose the step sizes $\{\alpha_{t}\}_{t\geq0}$ satisfy
\begin{equation}\label{def:sublinear-stepsize}
    \alpha_{t}\geq\frac{M\alpha_{t-1}}{1-\gamma},\quad\forall t\geq1,
\end{equation}
where $M:=\sup_{\bm{\pi}\in\Pi,\bm{p}\in\mathcal{P}}\left\|\nicefrac{\bm{d}_{\bm{\rho}}^{\bm{\pi},\bm{p}}}{\bm{\rho}}\right\|_{\infty} <\infty$ is finite whenever $\min_{s\in\mathcal{S}}\rho_{s}>0$. Then, for an initial tolerance $\epsilon_{0}\geq0$ and an initial policy $\bm{\pi}_{0}\in\Pi$, we have 
\begin{equation*}
    \Phi(\bm{\pi}_{t^\star}) - \min_{\bm{\pi}\in\Pi}\Phi(\bm{\pi})\leq \frac{1}{T}\left(\frac{2S}{(1-\gamma)^{2}}+ \frac{B_{\bm{d}^{\bm{\pi}^{\star},\bm{p}_{0}}_{\bm{\rho}}}(\bm{\pi}^{\star},\bm{\pi}_{0})}{\alpha_{0}(1-\gamma)}+\frac{\epsilon_{0}}{1-\gamma}\right).
\end{equation*}
\end{theorem}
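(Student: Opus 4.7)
The plan is to derive a per-iteration recursion on $J(\bm{\pi}_t,\bm{p}_t)-J(\bm{\pi}^{\star},\bm{p}_t)$ that captures a decrease in a weighted Bregman divergence, telescope it across $t=0,\ldots,T-1$ by exploiting the step size condition to control the iteration-dependent Bregman weights, and then convert the resulting average $J$-gap into a $\Phi$-gap using the inner tolerance $\epsilon_t$ together with the averaging afforded by the definition of $t^{\star}$.

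First I apply Lemma~\ref{lem:3point} with $\bm{y}=\bm{\pi}^{\star}_s$ for each $s\in\mathcal{S}$, weight pointwise by $d^{\bm{\pi}^{\star},\bm{p}_t}_{\bm{\rho}}(s)/(1-\gamma)$, and sum over $s$. The performance difference lemma identifies $\sum_s d^{\bm{\pi}^{\star},\bm{p}_t}_{\bm{\rho}}(s)\langle\bm{q}^{\bm{\pi}_t,\bm{p}_t}_s,\bm{\pi}_{t,s}-\bm{\pi}^{\star}_s\rangle=(1-\gamma)(J(\bm{\pi}_t,\bm{p}_t)-J(\bm{\pi}^{\star},\bm{p}_t))$, leaving a cross term $\sum_s d^{\bm{\pi}^{\star},\bm{p}_t}_{\bm{\rho}}(s)\langle\bm{q}^{\bm{\pi}_t,\bm{p}_t}_s,\bm{\pi}_{t+1,s}-\bm{\pi}_{t,s}\rangle$. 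Controlling this cross term is the main technical obstacle: I bound it using $\|\bm{q}^{\bm{\pi},\bm{p}}\|_{\infty}\le 1/(1-\gamma)$ and $\|\bm{\pi}_{t+1,s}-\bm{\pi}_{t,s}\|_1\le 2$, while keeping the Bregman descent term $-B_{\bm{d}^{\bm{\pi}^{\star},\bm{p}_t}_{\bm{\rho}}}(\bm{\pi}_{t+1},\bm{\pi}_t)$ supplied by Lemma~\ref{lem:3point} in play to absorb part of the contribution; this bookkeeping ultimately produces the $2S/(1-\gamma)^2$ constant in the theorem.

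Next I divide by $\alpha_t$ and telescope. Since the Bregman weight $\bm{d}^{\bm{\pi}^{\star},\bm{p}_t}_{\bm{\rho}}$ changes with $t$, consecutive Bregman terms are not directly comparable; however, the sandwich $(1-\gamma)\bm{\rho}\le\bm{d}^{\bm{\pi},\bm{p}}_{\bm{\rho}}\le M\bm{\rho}$ (which also certifies $M<\infty$ under $\min_s\rho_s>0$) yields $\bm{d}^{\bm{\pi}^{\star},\bm{p}_{t+1}}_{\bm{\rho}}\le(M/(1-\gamma))\bm{d}^{\bm{\pi}^{\star},\bm{p}_t}_{\bm{\rho}}$ pointwise, and the step size rule $\alpha_{t+1}\ge(M/(1-\gamma))\alpha_t$ then ensures $B_{\bm{d}^{\bm{\pi}^{\star},\bm{p}_{t+1}}_{\bm{\rho}}}(\bm{\pi}^{\star},\bm{\pi}_{t+1})/\alpha_{t+1}\le B_{\bm{d}^{\bm{\pi}^{\star},\bm{p}_{t}}_{\bm{\rho}}}(\bm{\pi}^{\star},\bm{\pi}_{t+1})/\alpha_t$. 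Hence the intermediate terms collapse, leaving only the initial term $B_{\bm{d}^{\bm{\pi}^{\star},\bm{p}_0}_{\bm{\rho}}}(\bm{\pi}^{\star},\bm{\pi}_0)/\alpha_0$, which after division by $T$ produces the middle component $B(\bm{\pi}^{\star},\bm{\pi}_0)/[\alpha_0 T(1-\gamma)]$ of the bound.

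Finally, the oracle in Algorithm~\ref{alg:DRPMD} gives $\Phi(\bm{\pi}_t)\le J(\bm{\pi}_t,\bm{p}_t)+\epsilon_t$, while $\Phi(\bm{\pi}^{\star})\ge J(\bm{\pi}^{\star},\bm{p}_t)$ trivially. The adaptive schedule $\epsilon_{t+1}\le\gamma\epsilon_t$ implies $\sum_{t=0}^{T-1}\epsilon_t\le\epsilon_0/(1-\gamma)$, yielding the third term $\epsilon_0/[T(1-\gamma)]$. Because $t^{\star}=\argmin_t J(\bm{\pi}_t,\bm{p}_t)$, one has $J(\bm{\pi}_{t^{\star}},\bm{p}_{t^{\star}})\le(1/T)\sum_t J(\bm{\pi}_t,\bm{p}_t)$, while $\Phi(\bm{\pi}^{\star})\ge(1/T)\sum_t J(\bm{\pi}^{\star},\bm{p}_t)$; assembling the telescoped recursion with these two averages and dividing by $T$ delivers the stated $\mathcal{O}(1/T)$ bound. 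The delicate piece throughout is the cross-term estimate: it must be carried out so that its contribution does not accumulate an $\alpha_t$ factor in the sum over iterations, or else the geometrically growing step sizes would push the per-iteration error beyond the target rate.
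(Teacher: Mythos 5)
Most of your outline coincides with the paper's argument: the three-point inequality of Lemma~\ref{lem:3point} at $\bm{y}=\bm{\pi}^{\star}_{s}$, the identification of the weighted term $\langle\bm{q}^{\bm{\pi}_t,\bm{p}_t}_s,\bm{\pi}_{t,s}-\bm{\pi}^{\star}_s\rangle$ with $J_{\bm{\rho}}(\bm{\pi}_t,\bm{p}_t)-J_{\bm{\rho}}(\bm{\pi}^{\star},\bm{p}_t)$ via the second performance difference lemma, the use of the inner tolerance to pass to $\Phi(\bm{\pi}_t)-\Phi(\bm{\pi}^{\star})$, the collapse of the iteration-dependent Bregman terms through the sandwich $(1-\gamma)\bm{\rho}\le\bm{d}^{\bm{\pi},\bm{p}}_{\bm{\rho}}\le M\bm{\rho}$ combined with $\alpha_t\ge M\alpha_{t-1}/(1-\gamma)$, the geometric bound $\sum_t\epsilon_t\le\epsilon_0/(1-\gamma)$, and the averaging step via $t^{\star}$. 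All of that matches the paper.

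The genuine gap is in your cross-term estimate. The bound via $\|\bm{q}^{\bm{\pi},\bm{p}}\|_\infty\le 1/(1-\gamma)$ and $\|\bm{\pi}_{t+1,s}-\bm{\pi}_{t,s}\|_1\le 2$ is a \emph{per-iteration} estimate of size $2/(1-\gamma)^2$; summed over $t=0,\dots,T-1$ it contributes $2T/(1-\gamma)^2$, so after dividing by $T$ you are left with a non-vanishing constant $2/(1-\gamma)^2$, which cannot yield the stated $\mathcal{O}(1/T)$ bound (nor does it explain why the constant is $2S$ once for the whole horizon rather than per iteration). The alternative you gesture at, absorbing the cross term into the retained descent term $B(\bm{\pi}_{t+1,s},\bm{\pi}_{t,s})$, requires a Young-type inequality whose residual carries a factor $\alpha_t$; as you yourself observe, the geometrically increasing step sizes make that blow up, and your proposal does not resolve this tension. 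The paper uses neither device: exploiting the descent property together with $0\le q^{\bm{\pi},\bm{p}}_{sa}\le 1/(1-\gamma)$, it replaces $\bm{q}^{\bm{\pi}_t,\bm{p}_t}_s$ by $\tfrac{1}{1-\gamma}\bm{e}$ and drops the occupancy weights, so that the aggregated cross terms telescope \emph{across iterations}: $\sum_{t=0}^{T-1}\sum_{s\in\mathcal{S}}\langle\bm{e},\bm{\pi}_{t,s}-\bm{\pi}_{t+1,s}\rangle=\sum_{s\in\mathcal{S}}\langle\bm{e},\bm{\pi}_{0,s}-\bm{\pi}_{T,s}\rangle\le 2S$, a total budget of $2S/(1-\gamma)^2$ independent of $T$. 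This iteration-telescoping of the cross term (the paper's inequality~\eqref{eq:the-sub-mid3} and the subsequent summation) is the missing idea; without it your plan cannot reach the theorem's rate.
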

Before we provide the proof, it is worth mentioning that the term $\left\|\nicefrac{\bm{d}_{\bm{\rho}}^{\bm{\pi},\bm{p}}}{\bm{\rho}}\right\|_{\infty}$ required in the theorem is known as \emph{distribution mismatch coefficient} and often assumed to be bounded~\citep{scherrer2014approximate,chen2019information,mei2020global,agarwal2021theory,leonardos2021global,xiao2022convergence}. 
\begin{proof}
We note that, by taking $\bm{y}$ in~\eqref{eq:3-point-descent} as the globally optimal policy $\bm{\pi}^{\star}_{s}$, we can obtain that for any $s\in\mathcal{S}$,
\begin{align}\label{eq:the-sub-mid1}
\langle \bm{q}^{\bm{\pi}_{t},\bm{p}_{t}}_{s}, \bm{\pi}_{t+1,s}-\bm{\pi}^{\star}_{s}\rangle + \frac{1}{\alpha_{t}}B(\bm{\pi}_{t+1,s},\bm{\pi}_{t,s}) &\leq  \frac{1}{\alpha_{t}}B(\bm{\pi}^{\star}_{s},\bm{\pi}_{t,s}) - \frac{1}{\alpha_{t}}B(\bm{\pi}^{\star}_{s},
\bm{\pi}_{t+1,s})\notag\\
\Longrightarrow\; \underbrace{\langle \bm{q}^{\bm{\pi}_{t},\bm{p}_{t}}_{s}, \bm{\pi}_{t,s}-\bm{\pi}^{\star}_{s}\rangle}_{\text{(A)}}+\underbrace{\langle \bm{q}^{\bm{\pi}_{t},\bm{p}_{t}}_{s}, \bm{\pi}_{t+1,s}-\bm{\pi}_{t,s}\rangle}_{\text{(B)}}  &\leq  \frac{1}{\alpha_{t}}B(\bm{\pi}^{\star}_{s},\bm{\pi}_{t,s}) - \frac{1}{\alpha_{t}}B(\bm{\pi}^{\star}_{s},\bm{\pi}_{t+1,s}).
\end{align}
To obtain the second inequality in~\eqref{eq:the-sub-mid1}, we drop the nonnegative term $B(\bm{\pi}_{t+1,s},\bm{\pi}_{t,s})$ on the left side of the first inequality and add $0$. For term (A), the definition of $\bm{p}_{t}$ in Algorithm~\ref{alg:DRPMD} implies that
    \begin{align*}
        \Phi(\bm{\pi}_{t}) - \Phi(\bm{\pi}^{\star}) &= 
   \max_{\bm{p}\in\mathcal{P}}J_{\bm{\rho}}(\bm{\pi}_{t},\bm{p})-\max_{\bm{p}\in\mathcal{P}}J_{\bm{\rho}}(\bm{\pi}^{\star},\bm{p})\\
   &\leq J_{\bm{\rho}}(\bm{\pi}_{t},\bm{p}_{t})-\max_{\bm{p}\in\mathcal{P}}J_{\bm{\rho}}(\bm{\pi}^{\star},\bm{p}) + \epsilon_{t}\\
   &\leq J_{\bm{\rho}}(\bm{\pi}_{t},\bm{p}_{t})-J_{\bm{\rho}}(\bm{\pi}^{\star},\bm{p}_{t}) + \epsilon_{t}.
    \end{align*}
Then, by further applying Lemma~\ref{lem:2nd-performance-diff}, we obtain that
\begin{equation}\label{eq:the-sub-mid2}
\Phi(\bm{\pi}_{t}) - \Phi(\bm{\pi}^{\star}) \leq \frac{1}{1-\gamma} \sum_{s\in\mathcal{S}} d_{\bm{\rho}}^{\bm{\pi}^{\star},\bm{p}_{t}}(s)\underbrace{\sum_{a\in\mathcal{A}}\left(\pi_{t,sa}-\pi^{\star}_{sa}\right) q^{\bm{\pi}_{t},\bm{p}_{t}}_{sa}}_{\text{(A)}} +\epsilon_{t}.
\end{equation}
As a result of Lemma~\ref{lem:3point}, while taking $\bm{y}$ as $\bm{\pi}_{t}$ in~\eqref{eq:3-point-descent}, we have for any $s\in\mathcal{S}$,
\begin{equation*}
\alpha_{t}\langle \bm{q}^{\bm{\pi}_{t},\bm{p}_{t}}_{s}, \bm{\pi}_{t+1,s}-\bm{\pi}_{t}\rangle \leq - B(\bm{\pi}_{t+1,s},\bm{\pi}_{t,s}) - \bm{\pi}_{t},\bm{\pi}_{t+1,s})\leq0.  
\end{equation*}
This observation leads to the following result that term (B) satisfies
\begin{equation}\label{eq:the-sub-mid3}
   \langle \bm{q}^{\bm{\pi}_{t},\bm{p}_{t}}_{s}, \bm{\pi}_{t+1,s}-\bm{\pi}_{t,s}\rangle\geq \frac{1}{1-\gamma}\langle \bm{e}, \bm{\pi}_{t+1,s}-\bm{\pi}_{t,s}\rangle, 
\end{equation}
where the fact $ 0\leq q^{\bm{\pi},\bm{p}}_{sa}\leq 1/(1-\gamma)$~\citep{wang2023policy} is applied. Then, by aggregating~\eqref{eq:the-sub-mid1} across states with weights set as the specific occupancy measure $d^{\bm{\pi}^{\star},\bm{p}_{t}}_{\bm{\rho}}(s)$, and making use of~\eqref{eq:the-sub-mid2} and~\eqref{eq:the-sub-mid3}, we obtain that
\begin{align*}
    \Phi(\bm{\pi}_{t}) - \Phi(\bm{\pi}^{\star})&\leq \frac{1}{1-\gamma}\mathbb{E}_{s\sim \bm{d}^{\bm{\pi}^{\star},\bm{p}_{t}}_{\bm{\rho}}}\left[\langle \bm{q}^{\bm{\pi}_{t},\bm{p}_{t}}_{s}, \bm{\pi}_{t,s}-\bm{\pi}_{t+1,s}\rangle\right]\\ 
    &\;\;\;\;+ \frac{1}{\alpha_{t}(1-\gamma)}\left(B_{\bm{d}^{\bm{\pi}^{\star},\bm{p}_{t}}_{\bm{\rho}}}(\bm{\pi}^{\star},\bm{\pi}_{t}) - B_{\bm{d}^{\bm{\pi}^{\star},\bm{p}_{t}}_{\bm{\rho}}}(\bm{\pi}^{\star},\bm{\pi}_{t+1})\right) +\epsilon_{t}\\
    &\leq \frac{1}{(1-\gamma)^{2}}\cdot\mathbb{E}_{s\sim \bm{d}^{\bm{\pi}^{\star},\bm{p}_{t}}_{\bm{\rho}}}\left[\langle \bm{e}, \bm{\pi}_{t,s}-\bm{\pi}_{t+1,s}\rangle\right]\\
    &\;\;\;\;+ \frac{1}{\alpha_{t}(1-\gamma)}\left(B_{\bm{d}^{\bm{\pi}^{\star},\bm{p}_{t}}_{\bm{\rho}}}(\bm{\pi}^{\star},\bm{\pi}_{t}) - B_{\bm{d}^{\bm{\pi}^{\star},\bm{p}_{t}}_{\bm{\rho}}}(\bm{\pi}^{\star},\bm{\pi}_{t+1})\right)+\epsilon_{t}.
\end{align*}
By summing the above inequality up over $t$, and we have
\begin{align*}
\sum^{T-1}_{t=0}\left(\Phi(\bm{\pi}_{t}) - \Phi(\bm{\pi}^{\star})\right)&\leq \frac{1}{(1-\gamma)^{2}}\cdot\sum^{T-1}_{t=0}\mathbb{E}_{s\sim \bm{d}^{\bm{\pi}^{\star},\bm{p}_{t}}_{\bm{\rho}}}\left[\langle \bm{e}, \bm{\pi}_{t,s}-\bm{\pi}_{t+1,s}\rangle\right]\\
&\;\;\;\;+ \sum^{T-1}_{t=0}\frac{1}{\alpha_{t}(1-\gamma)}\left(B_{\bm{d}^{\bm{\pi}^{\star},\bm{p}_{t}}_{\bm{\rho}}}(\bm{\pi}^{\star},\bm{\pi}_{t}) - B_{\bm{d}^{\bm{\pi}^{\star},\bm{p}_{t}}_{\bm{\rho}}}(\bm{\pi}^{\star},\bm{\pi}_{t+1})\right) +\sum^{T-1}_{t=0}\epsilon_{t}\\
&\leq \frac{1}{(1-\gamma)^{2}}\cdot\sum^{T-1}_{t=0}\mathbb{E}_{s\sim \bm{d}^{\bm{\pi}^{\star},\bm{p}_{t}}_{\bm{\rho}}}\left[\langle \bm{e}, \bm{\pi}_{t,s}-\bm{\pi}_{t+1,s}\rangle\right] + \frac{1}{\alpha_{0}(1-\gamma)}B_{\bm{d}^{\bm{\pi}^{\star},\bm{p}_{0}}_{\bm{\rho}}}(\bm{\pi}^{\star},\bm{\pi}_{0})\\
&\;\;\;\;+ \frac{1}{1-\gamma}\underbrace{\sum^{T-1}_{t=1}\left(\frac{1}{\alpha_{t}}B_{\bm{d}^{\bm{\pi}^{\star},\bm{p}_{t}}_{\bm{\rho}}}(\bm{\pi}^{\star},\bm{\pi}_{t}) - \frac{1}{\alpha_{t-1}}B_{\bm{d}^{\bm{\pi}^{\star},\bm{p}_{t-1}}_{\bm{\rho}}}(\bm{\pi}^{\star},\bm{\pi}_{t})\right)}_{\text{(C)}} +\sum^{T-1}_{t=0}\epsilon_{t}.
\end{align*}
% \begin{align*}
% (1-\gamma)\sum^{T-1}_{t=0}\left(\Phi(\bm{\pi}_{t}) - \Phi(\bm{\pi}^{\star})\right)&\leq \frac{1}{1-\gamma}\cdot\sum^{T-1}_{t=0}\mathbb{E}_{s\sim \bm{d}^{\bm{\pi}^{\star},\bm{p}_{t}}_{\bm{\rho}}}\left[\langle \bm{e}, \bm{\pi}_{t,s}-\bm{\pi}_{t+1,s}\rangle\right]\\
% &\;\;\;\;+ \sum^{T-1}_{t=0}\frac{1}{\alpha_{t}}\left(B_{\bm{d}^{\bm{\pi}^{\star},\bm{p}_{t}}_{\bm{\rho}}}(\bm{\pi}^{\star},\bm{\pi}_{t}) - B_{\bm{d}^{\bm{\pi}^{\star},\bm{p}_{t}}_{\bm{\rho}}}(\bm{\pi}^{\star},\bm{\pi}_{t+1})\right) +(1-\gamma)\sum^{T-1}_{t=0}\epsilon_{t}\\
% &\leq \frac{1}{1-\gamma}\cdot\sum^{T-1}_{t=0}\mathbb{E}_{s\sim \bm{d}^{\bm{\pi}^{\star},\bm{p}_{t}}_{\bm{\rho}}}\left[\langle \bm{e}, \bm{\pi}_{t,s}-\bm{\pi}_{t+1,s}\rangle\right] + \frac{1}{\alpha_{0}}B_{\bm{d}^{\bm{\pi}^{\star},\bm{p}_{0}}_{\bm{\rho}}}(\bm{\pi}^{\star},\bm{\pi}_{0})\\
% &\;\;\;\;+ \underbrace{\sum^{T-1}_{t=1}\left(\frac{1}{\alpha_{t}}B_{\bm{d}^{\bm{\pi}^{\star},\bm{p}_{t}}_{\bm{\rho}}}(\bm{\pi}^{\star},\bm{\pi}_{t}) - \frac{1}{\alpha_{t-1}}B_{\bm{d}^{\bm{\pi}^{\star},\bm{p}_{t-1}}_{\bm{\rho}}}(\bm{\pi}^{\star},\bm{\pi}_{t})\right)}_{\text{Term C}} +(1-\gamma)\sum^{T-1}_{t=0}\epsilon_{t}.
% \end{align*}
For the step sizes $\{\alpha_{t}\}_{t\geq0}$ defined in~\eqref{def:sublinear-stepsize}, we have that
\begin{equation*}
    \alpha_{t}\geq \frac{M\alpha_{t-1}}{1-\gamma}\geq\frac{\alpha_{t-1}}{1-\gamma}\left\|\frac{\bm{d}^{\bm{\pi}^{\star},\bm{p}_{t}}_{\bm{\rho}}}{\bm{\rho}}\right\|_{\infty}\overset{(a)}{\geq}\alpha_{t-1}\left\|\frac{\bm{d}^{\bm{\pi}^{\star},\bm{p}_{t}}_{\bm{\rho}}}{\bm{d}^{\bm{\pi}^{\star},\bm{p}_{t-1}}_{\bm{\rho}}}\right\|_{\infty}\geq \alpha_{t-1}\left(\frac{d^{\bm{\pi}^{\star},\bm{p}_{t}}_{\bm{\rho}}(s)}{d^{\bm{\pi}^{\star},\bm{p}_{t-1}}_{\bm{\rho}}(s)}\right),
\end{equation*}
where the inequality (a) holds due to the fact that $d^{\bm{\pi},\bm{p}}_{\rho}(s)\geq(1-\gamma)\rho_{s}$~\citep{agarwal2021theory}. This also implies that term (C) should be smaller or equal to 0, since
\begin{align*}
    \frac{1}{\alpha_{t}}B_{\bm{d}^{\bm{\pi}^{\star},\bm{p}_{t}}_{\bm{\rho}}}(\bm{\pi}^{\star},\bm{\pi}_{t}) - \frac{1}{\alpha_{t-1}}B_{\bm{d}^{\bm{\pi}^{\star},\bm{p}_{t-1}}_{\bm{\rho}}}(\bm{\pi}^{\star},\bm{\pi}_{t}) = \sum_{s\in\mathcal{S}}\left(\frac{d^{\bm{\pi}^{\star},\bm{p}_{t}}_{\bm{\rho}}(s)}{\alpha_{t}} - \frac{d^{\bm{\pi}^{\star},\bm{p}_{t-1}}_{\bm{\rho}}(s)}{\alpha_{t-1}}\right)B(\bm{\pi}^{\star}_{s},\bm{\pi}_{t,s})\leq 0.
\end{align*}
We also note that 
\begin{equation*}
  \sum^{T-1}_{t=0}\mathbb{E}_{s\sim \bm{d}^{\bm{\pi}^{\star},\bm{p}_{t}}_{\bm{\rho}}}\left[\langle \bm{e}, \bm{\pi}_{t,s}-\bm{\pi}_{t+1,s}\rangle\right]\leq   \sum^{T-1}_{t=0}\sum_{s\in\mathcal{S}}\langle \bm{e}, \bm{\pi}_{t,s}-\bm{\pi}_{t+1,s}\rangle = \sum_{s\in\mathcal{S}}\langle \bm{e}, \bm{\pi}_{0,s}-\bm{\pi}_{T,s}\rangle\leq 2S,
\end{equation*}
where that last inequality holds due to facts that
\begin{equation*}
    \langle \bm{e}, \bm{\pi}_{0,s}-\bm{\pi}_{T,s}\rangle\leq\left\|\bm{e}\right\|_{\infty}\cdot\left\|\bm{\pi}_{0,s}-\bm{\pi}_{T,s}\right\|_{1}\leq2.
\end{equation*}
Therefore, we obtain that
\begin{align*}
   \sum^{T-1}_{t=0}\left(\Phi(\bm{\pi}_{t}) - \Phi(\bm{\pi}^{\star})\right)&\leq \frac{2S}{(1-\gamma)^{2}}+\frac{1}{\alpha_{0}(1-\gamma)}B_{\bm{d}^{\bm{\pi}^{\star},\bm{p}_{0}}_{\bm{\rho}}}(\bm{\pi}^{\star},\bm{\pi}_{0})+ \sum^{T-1}_{t=0}\epsilon_{t}.
\end{align*}
Since $\epsilon_{t+1}\leq\gamma\epsilon_{t}$ is assumed in~Algorithm~\ref{alg:DRPMD}, we have
\begin{equation*}
    \sum^{T-1}_{t=0}\epsilon_{t} \leq \sum^{\infty}_{t=0}\epsilon_{t} \leq \epsilon_{0}\cdot\left(1+\gamma+\gamma^{2}+\cdots\right) \leq\frac{\epsilon_{0}}{1-\gamma}.
\end{equation*}
Therefore, by combining the inequalities above, we can reach the desired result, that is, for the output $\bm{\pi}_{t^\star}$ of Algorithm~\ref{alg:DRPMD}, it satisfies
\begin{equation*}
    \Phi(\bm{\pi}_{t^\star}) - \min_{\bm{\pi}\in\Pi}\Phi(\bm{\pi})\leq \frac{1}{T}\sum^{T-1}_{t=0}\left(\Phi(\bm{\pi}_{t}) - \Phi(\bm{\pi}^{\star})\right)\leq \frac{1}{T}\left(\frac{2S}{(1-\gamma)^{2}}+ \frac{B_{\bm{d}^{\bm{\pi}^{\star},\bm{p}_{0}}_{\bm{\rho}}}(\bm{\pi}^{\star},\bm{\pi}_{0})}{\alpha_{0}(1-\gamma)}+\frac{\epsilon_{0}}{1-\gamma}\right).
\end{equation*}
\end{proof}
As a result of Theorem~\ref{the:sublinear-dir-para}, whenever carefully taking $\alpha_{0}\geq(1-\gamma)B_{\bm{d}^{\bm{\pi}^{\star},\bm{p}_{0}}_{\bm{\rho}}}(\bm{\pi}^{\star},\bm{\pi}_{0})$, we have
\begin{equation*}
\Phi(\bm{\pi}_{t^\star}) - \min_{\bm{\pi}\in\Pi}\Phi(\bm{\pi})\leq \frac{1}{T}\left(\frac{2S+1}{(1-\gamma)^{2}}+\frac{\epsilon_{0}}{1-\gamma}\right). 
\end{equation*}
In other words, the number of iterations required to achieve $\Phi(\bm{\pi}_{t^\star}) - \min_{\bm{\pi}\in\Pi}\Phi(\bm{\pi})\leq \epsilon$ is at most:
\begin{equation*}
    \frac{1}{\epsilon}\left(\frac{2S+1}{(1-\gamma)^{2}}+\frac{\epsilon_{0}}{1-\gamma}\right).
\end{equation*}
More specifically, for DRPMD with SE distance, \ie, $B(\bm{\pi}'_{s},\bm{\pi}_{s}) = \frac{1}{2}\|\bm{\pi}'_{s}-\bm{\pi}_{s}\|^{2}$, we have $B_{\bm{\mu}}(\bm{\pi}'_{s},\bm{\pi}_{s})\leq1$ for any $\bm{\mu}\in\Delta^{S}$, allowing for $\alpha_{0}\geq(1-\gamma)$; for DRPMD with KL divergence, if we choose the uniform initial policy, then $B_{\bm{\mu}}(\bm{\pi}'_{s},\bm{\pi}_{s})\leq\log A$ for any $\bm{\mu}\in\Delta^{S}$, so one could set $\alpha_{0}\geq(1-\gamma)\log A$.

Theorem~\ref{the:sublinear-dir-para} states that achieving an $\epsilon$-optimal robust policy using DRPMD requires iteration complexity of $\mathcal{O}(\epsilon^{-1})$. This result appears to be unique within the existing literature on first-order policy-based methods for solving generic RMDPs. To the best of our knowledge, Double-Loop Robust Policy Gradient (DRPG)~\citep{wang2023policy} was the first policy-based approach that has been demonstrated to solve $s$-rectangular RMDPs with general ambiguity sets. While DRPG employs a straightforward algorithmic structure and utilizes the projected policy gradient~\eqref{eq:PolicyUpdate-DRPG} for policy updates, its convergence analysis yields a slower rate of $\mathcal{O}(S^{4}A^{2}\epsilon^{-4})$. It should be noted that the convergence guarantee of their approach significantly depends on the state and action space sizes. In contrast, our result of $\mathcal{O}(S\epsilon^{-1})$ (with the choice of $\epsilon_{0}\ll S$) improves upon this, as it effectively reduces the dependence of convergence rate on the size of the state and action spaces. Moreover, while taking Bregman divergence as SE distance, DRPMD reduces to DRPG. 

Notably, a recent work by~\cite{li2022first} also explored the policy mirror descent method for solving robust MDPs; however, their analysis is limited to the setting of restrictive $(s,a)$-rectangular RMDPs only. Our analysis parallels that of~\cite{li2022first} without imposing any rectangularity assumption on the ambiguity set. Furthermore, we proceed to show that, by incorporating the more general $s$-rectangularity assumption, our DRPMD method can further exhibit a faster linear convergence rate, covering the results of~\cite{li2022first} under the $(s,a)$-rectangularity condition. Specifically, assuming the worst-case inner transition kernel can be computed exactly (\ie, $\bm{p}_{t}$ in DRPMD here maximizes the inner problem $J_{\bm{\rho}}(\bm{\pi}_{t},\bm{p})$), we show that policies generated by DRPMD approaches to the globally optimal policy with an $\mathcal{O}(\log(\epsilon^{-1}))$ convergence rate.
\begin{theorem}(Linear convergence for $s$-rectangular RMDPs)\label{the:linear-srec}
Suppose the step sizes $\{\alpha_{t}\}_{t\geq0}$ satisfy
\begin{equation}\label{def:linear-stepsize-s}
    \alpha_{t}\geq\frac{M}{1-\gamma}\cdot\left(1-\frac{(1-\gamma)^{2}}{M}\right)^{-1}\cdot\alpha_{t-1},\quad\forall t\geq1.
\end{equation}
Then, for any iteration $k$, DRPMD produces policy $\bm{\pi}_{k}$ for $s$-rectangular RMDPs satisfying
\begin{align*}
\Phi(\bm{\pi}_{k}) - \Phi(\bm{\pi}^{\star})\leq (1-\frac{(1-\gamma)^{2}}{M})^{k}\left(\Phi(\bm{\pi}_{0}) - \Phi(\bm{\pi}^{\star}) + \frac{(1-\gamma)B_{\bm{d}^{\bm{\pi}^{\star},\bm{p}_{0}}_{\bm{\rho}}}(\bm{\pi}^{\star},\bm{\pi}_{0})}{M-(1-\gamma)^{2}}\right).
\end{align*} 
\end{theorem}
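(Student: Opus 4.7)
The plan is to establish a one-step geometric contraction
\[
\Phi(\bm{\pi}_{t+1}) - \Phi(\bm{\pi}^\star) + a_{t+1} B^\star_{t+1} \;\leq\; \Bigl(1 - \tfrac{(1-\gamma)^2}{M}\Bigr)\Bigl(\Phi(\bm{\pi}_t) - \Phi(\bm{\pi}^\star) + a_t B^\star_t\Bigr),
\]
where $B^\star_t := B_{\bm{d}_{\bm{\rho}}^{\bm{\pi}^\star,\bm{p}_t}}(\bm{\pi}^\star,\bm{\pi}_t)$ and $\{a_t\}$ is a positive sequence for which the weighted Bregman terms telescope across iterations, and then iterate this recursion $k$ times. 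Two features absent from Theorem~\ref{the:sublinear-dir-para} are exploited: the \emph{exact} inner solve, which forces $\Phi(\bm{\pi}_t) = J_{\bm{\rho}}(\bm{\pi}_t,\bm{p}_t)$ and removes the $\epsilon_t$ slack, and the $s$-rectangularity of $\mathcal{P}$, which endows the robust value function with a pointwise contractive Bellman structure.

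First, I would mimic the opening moves of Theorem~\ref{the:sublinear-dir-para}: apply Lemma~\ref{lem:3point} with $\bm{y} = \bm{\pi}^\star_s$, weight each state inequality by $d_{\bm{\rho}}^{\bm{\pi}^\star,\bm{p}_t}(s)/(1-\gamma)$, sum over $s$, and combine with Lemma~\ref{lem:2nd-performance-diff} to obtain
\[
\alpha_t\bigl(\Phi(\bm{\pi}_t) - \Phi(\bm{\pi}^\star)\bigr) + \tfrac{1}{1-\gamma}B^\star_{t+1} \;\leq\; \tfrac{1}{1-\gamma}B^\star_t + \tfrac{\alpha_t}{1-\gamma}\,E_t,
\]
where $E_t := \sum_s d_{\bm{\rho}}^{\bm{\pi}^\star,\bm{p}_t}(s)\,\langle \bm{q}^{\bm{\pi}_t,\bm{p}_t}_s, \bm{\pi}_{t,s} - \bm{\pi}_{t+1,s}\rangle$. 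The exact inner solve erases the $\epsilon_t$ term, and a second application of Lemma~\ref{lem:3point} with $\bm{y}=\bm{\pi}_{t,s}$ ensures each summand of $E_t$ is nonnegative.

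The heart of the argument is to upgrade this to $E_t \leq M\bigl(\Phi(\bm{\pi}_t) - \Phi(\bm{\pi}_{t+1})\bigr)$. I would do so in three substeps: (a) apply Lemma~\ref{lem:2nd-performance-diff} again, now fixing the transition at $\bm{p}_t$ and weighting by $d_{\bm{\rho}}^{\bm{\pi}_{t+1},\bm{p}_t}(s)$, to identify the companion sum with $J_{\bm{\rho}}(\bm{\pi}_t,\bm{p}_t) - J_{\bm{\rho}}(\bm{\pi}_{t+1},\bm{p}_t)$; (b) exchange the weighting distribution from $\bm{d}_{\bm{\rho}}^{\bm{\pi}^\star,\bm{p}_t}$ to $\bm{d}_{\bm{\rho}}^{\bm{\pi}_{t+1},\bm{p}_t}$ at a cost of $M/(1-\gamma)$ via the sandwich $(1-\gamma)\rho_s \leq d_{\bm{\rho}}^{\bm{\pi},\bm{p}}(s) \leq M\rho_s$, legitimized by the nonnegativity established above; and (c) invoke $s$-rectangularity to pass from $J_{\bm{\rho}}(\bm{\pi}_{t+1},\bm{p}_t)$ to $\Phi(\bm{\pi}_{t+1})$ through the pointwise dominance of $v^{\bm{\pi}_{t+1},\bm{p}}$ over all $\bm{p}\in\mathcal{P}$ (a property that relies crucially on rectangularity). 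Combined with $\Phi(\bm{\pi}_t) = J_{\bm{\rho}}(\bm{\pi}_t,\bm{p}_t)$, this yields the target bound on $E_t$.

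Substituting the $E_t$ bound into the inequality from the first step and regrouping terms around $\Phi(\bm{\pi}^\star)$ produces the claimed contraction with factor $1 - (1-\gamma)^2/M$. The geometric step-size growth $\alpha_t \geq \tfrac{M}{(1-\gamma)(1 - (1-\gamma)^2/M)}\,\alpha_{t-1}$ is calibrated precisely so that $a_t$ telescopes: because the distribution underlying $B^\star_{t+1}$ is $\bm{d}_{\bm{\rho}}^{\bm{\pi}^\star,\bm{p}_{t+1}}$ whereas that of $B^\star_t$ is $\bm{d}_{\bm{\rho}}^{\bm{\pi}^\star,\bm{p}_t}$, the same $M/(1-\gamma)$ mismatch factor reappears across iterations, and the step-size inflation exactly absorbs it. Iterating the contraction $k$ times, discarding the nonnegative $a_k B^\star_k$ term on the left, and reading off the initial coefficient $a_0 = (1-\gamma)/(M-(1-\gamma)^2)$ then completes the proof. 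The principal obstacle I anticipate is substep (c) of the $E_t$ estimate: the direct inequality $J_{\bm{\rho}}(\bm{\pi}_{t+1},\bm{p}_t) \leq \Phi(\bm{\pi}_{t+1})$ points in the wrong direction, so closing the gap without loosening the leading constant past $M$ genuinely requires the $s$-rectangular pointwise Bellman structure—this is where the restriction from general to $s$-rectangular ambiguity sets materially enters the proof.
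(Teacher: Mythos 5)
Your skeleton matches the paper's proof closely: the three-point inequality of Lemma~\ref{lem:3point} at $\bm{y}=\bm{\pi}^{\star}_{s}$ weighted by $d^{\bm{\pi}^{\star},\bm{p}_{t}}_{\bm{\rho}}$, the exact inner solve identifying term (A) with $\Phi(\bm{\pi}_{t})-\Phi(\bm{\pi}^{\star})$, a sign-justified swap of occupancy measures at cost $M/(1-\gamma)$, and a step-size growth calibrated to absorb the mismatch between $B_{\bm{d}^{\bm{\pi}^{\star},\bm{p}_{t}}_{\bm{\rho}}}$ and $B_{\bm{d}^{\bm{\pi}^{\star},\bm{p}_{t-1}}_{\bm{\rho}}}$ across iterations are all exactly the paper's moves. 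The genuine gap is your central estimate $E_{t}\leq M\left(\Phi(\bm{\pi}_{t})-\Phi(\bm{\pi}_{t+1})\right)$. Your substeps (a)+(b) only deliver $E_{t}\leq M\left(J_{\bm{\rho}}(\bm{\pi}_{t},\bm{p}_{t})-J_{\bm{\rho}}(\bm{\pi}_{t+1},\bm{p}_{t})\right)$, and upgrading this requires $J_{\bm{\rho}}(\bm{\pi}_{t+1},\bm{p}_{t})\geq\Phi(\bm{\pi}_{t+1})$, which is false; the mechanism you invoke in substep (c) — pointwise dominance of the robust value of $\bm{\pi}_{t+1}$ over $v^{\bm{\pi}_{t+1},\bm{p}}$ for every $\bm{p}\in\mathcal{P}$ — yields precisely the opposite inequality and therefore cannot close the gap. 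The paper never routes through $J_{\bm{\rho}}(\bm{\pi}_{t+1},\bm{p}_{t})$ at all: it replaces your (a)+(c) with the $s$-rectangular robust performance difference lemma (Lemma~\ref{lem:robust-performance-diff}, imported from the literature), which bounds $\Phi(\bm{\pi}_{t+1})-\Phi(\bm{\pi}_{t})$ directly by $\sum_{s}d^{\bm{\pi}_{t+1},\bm{p}_{t}}_{\bm{\rho}}(s)\langle\bm{q}^{\bm{\pi}_{t},\bm{p}_{t}}_{s},\bm{\pi}_{t+1,s}-\bm{\pi}_{t,s}\rangle$ via the robust Bellman equations of both policies. That lemma is the entire content of "where $s$-rectangularity enters," and your proposal flags it as an obstacle but does not supply (or correctly identify) the argument.

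The constants also betray the gap. The robust performance difference lemma, as used in the paper, gives only $E_{t}\leq\frac{M}{1-\gamma}\left(\Phi(\bm{\pi}_{t})-\Phi(\bm{\pi}_{t+1})\right)$, and it is precisely this extra $1/(1-\gamma)$ that produces the contraction factor $1-\frac{(1-\gamma)^{2}}{M}$ in Theorem~\ref{the:linear-srec}. If your stronger bound $E_{t}\leq M\left(\Phi(\bm{\pi}_{t})-\Phi(\bm{\pi}_{t+1})\right)$ actually held, substituting it into your first-step inequality would yield the better factor $1-\frac{1-\gamma}{M}$, i.e.\ the $(s,a)$-rectangular rate of Theorem~\ref{the:linear-sarec}, not the rate you set out to prove. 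This internal inconsistency confirms that the decisive inequality — the one carrying the $s$-rectangular structure — was never established in your argument.
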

The complete proof and a more detailed discussion are provided in the Appendix~\ref{subapp:DRPMD-linear}. Theorem~\ref{the:linear-srec} demonstrates a fast $\mathcal{O}(\log(\epsilon^{-1}))$ convergence rate of DRPMD for solving $s$-rectangular RMDPs, extending the existing result of~\cite{li2022first} from $(s,a)$-rectangular case to more general $s$-rectangular scenario. Our results for $s$-rectangular RMDPs improve upon the convergence rate of $\mathcal{O}(\epsilon^{-1})$ obtained by~\cite{kumar2023towards} and attain the same best-known iteration complexity as Partial Policy Iteration provided by~\cite{ho2021partial}.

\subsection{Global Optimality: Softmax
Parameterization} \label{subsec:opt-SP}
We now consider the softmax policy parameterization (see~\eqref{def:Soft-policy}). In practice, softmax parameterization offers a significant advantage over direct parameterization. This advantage arises from the explicit enforcement of the simplex constraint through the exponential mapping. As a result, the parameters $\bm{\theta}$ in softmax polices remain unconstrained, \ie, $\Theta=\mathbb{R}^{S\times A}$, enabling the use of standard unconstrained optimization algorithms. Specifically, for the softmax parameterization, DRPMD employs the SE distance in the policy update:
\begin{align}\label{eq:Softmax_outer_mirror}
    \bm{\theta}_{t+1} &~\in~
    \argmin_ {\bm{\theta}\in\mathbb{R}^{S\times A}}\left\{ \alpha_{t}\langle \nabla_{\bm{\theta}}J_{\bm{\rho}}(\bm{\pi}^{\bm{\theta}_{t}},\bm{p}_{t}), \bm{\theta}\rangle + D(\bm{\theta},\bm{\theta}_{t})\right\}\notag\\
    &~=~\argmin_ {\bm{\theta}\in\mathbb{R}^{S\times A}}\left\{ \alpha_{t}\langle \nabla_{\bm{\theta}}J_{\bm{\rho}}(\bm{\pi}^{\bm{\theta}_{t}},\bm{p}_{t}), \bm{\theta}\rangle + \frac{1}{2}\left\|\bm{\theta} - \bm{\theta}_{t}\right\|^{2}\right\}\notag\\
    &~=~ \bm{\theta}_{t} - \alpha_{t}\nabla_{\bm{\theta}}J_{\bm{\rho}}(\bm{\pi}^{\bm{\theta}_{t}},\bm{p}_{t}).
\end{align}
With softmax parameterization, DRPMD facilitates a more straightforward computation of the policy update step, as it involves taking simpler gradient descent steps compared to direct parameterization. This update rule aligns with existing literature on the convergence behavior of softmax parameterization in non-robust RL contexts~\citep{mei2020global,agarwal2021theory,li2021softmax,papini2022smoothing}. For softmax parameterization, the gradient $\nabla_{\bm{\theta}} J_{\bm{\rho}}(\bm{\pi}^{\bm{\theta}},\bm{p})$ takes the following particular form:
\begin{equation*}
    \frac{\partial J_{\bm{\rho}}(\bm{\pi}^{\bm{\theta}},\bm{p})}{\partial \theta_{sa}} ~=~ \frac{1}{1-\gamma} \cdot d_{\bm{\rho}}^{\bm{\pi}^{\bm{\theta}},\bm{p}}(s) \cdot\pi^{\bm{\theta}}_{sa}\cdot \psi^{\bm{\pi}^{\bm{\theta}},\bm{p}}_{sa},
\end{equation*}    
which has been well established in prior works~\citep{mei2020global,agarwal2021theory}; see also Lemma~\ref{lem:sec4_2_1}.

Despite the substantial empirical popularity of policy gradient methods with softmax parameterization in modern RL, there has been limited exploration of their performance in robust settings. Here, we demonstrate a positive result that our proposed DRPMD with softmax
parameterization indeed converges to the globally optimal policy for RMDPs. Before delving into the convergence behavior of DRPMD with this parameterization, we first introduce a fundamental result that outlines several desirable properties satisfied by $J_{\bm{\rho}}(\bm{\pi}^{\bm{\theta}},\bm{p})$.
\begin{lemma}\label{lem:sec4_2_2}
For the softmax parameterization, the objective function $J_{\bm{\rho}}(\bm{\pi}^{\bm{\theta}},\bm{p})$ is $L_{\bm{\theta}}$-Lipschitz and $\ell_{\bm{\theta}}$-smooth in $\bm{\theta}$ with $L_{\bm{\theta}}:=  \sqrt{2}/(1-\gamma)^{2}$ and $\ell_{\bm{\theta}}:= 8/(1-\gamma)^{3}$. The smoothness definitions are provided in Appendix~\ref{app:Opt-results} for completeness.
\end{lemma}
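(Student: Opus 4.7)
The key observation is that for each fixed kernel $\bm{p}\in\mathcal{P}$, the map $\bm{\theta}\mapsto J_{\bm{\rho}}(\bm{\pi}^{\bm{\theta}},\bm{p})$ is exactly the discounted return in the nominal MDP with transition $\bm{p}$ and softmax-parameterized policy. So the plan is to reduce the statement to the corresponding nominal-MDP result of \citet{mei2020global,agarwal2021theory}, but to track the bounds carefully to confirm that both $L_{\bm{\theta}}$ and $\ell_{\bm{\theta}}$ are \emph{uniform in} $\bm{p}\in\mathcal{P}$, which is what DRPMD needs.

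For the Lipschitz claim, I would start from the explicit softmax gradient stated just before the lemma,
\[
\frac{\partial J_{\bm{\rho}}(\bm{\pi}^{\bm{\theta}},\bm{p})}{\partial \theta_{sa}}~=~\frac{1}{1-\gamma}\,d_{\bm{\rho}}^{\bm{\pi}^{\bm{\theta}},\bm{p}}(s)\,\pi^{\bm{\theta}}_{sa}\,\psi^{\bm{\pi}^{\bm{\theta}},\bm{p}}_{sa},
\]
and compute $\|\nabla_{\bm{\theta}} J\|_2^{2}=\sum_{s,a}(\cdot)^{2}$. Using $c_{sas'}\in[0,1]$ gives $v^{\bm{\pi},\bm{p}}_s,\,q^{\bm{\pi},\bm{p}}_{sa}\in[0,(1-\gamma)^{-1}]$, hence $|\psi^{\bm{\pi},\bm{p}}_{sa}|\le(1-\gamma)^{-1}$. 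Combining this with $\sum_a\pi^{\bm{\theta}}_{sa}=1$, $d_{\bm{\rho}}^{\bm{\pi}^{\bm{\theta}},\bm{p}}(s)\le 1$ and $\sum_s d_{\bm{\rho}}^{\bm{\pi}^{\bm{\theta}},\bm{p}}(s)=1$, plus the identity $\sum_a \pi^{\bm{\theta}}_{sa}\psi^{\bm{\pi}^{\bm{\theta}},\bm{p}}_{sa}=0$ (to exploit cancellation in $\sum_a\pi^2\psi^2\le\max_a\pi\cdot\sum_a\pi\psi^2$), a routine computation gives $\|\nabla_{\bm{\theta}} J\|_2\le\sqrt{2}/(1-\gamma)^{2}$. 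Importantly, none of these constants depends on $\bm{p}$.

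For the smoothness claim, my plan is to bound the operator norm of the Hessian via the standard one-dimensional reduction: for arbitrary $\bm{u}$ with $\|\bm{u}\|=1$, set $\bm{\theta}(\alpha):=\bm{\theta}+\alpha\bm{u}$ and show
\[
\max_{\bm{u},\bm{\theta}}\left|\frac{d^{2}}{d\alpha^{2}}J_{\bm{\rho}}(\bm{\pi}^{\bm{\theta}(\alpha)},\bm{p})\bigg|_{\alpha=0}\right|~\le~\frac{8}{(1-\gamma)^{3}}.
\]
I would expand this second derivative using the chain rule, which produces three groups of terms: (i) those containing two first-derivatives of the softmax scaled by $q^{\bm{\pi}^{\bm{\theta}},\bm{p}}$, (ii) those containing one second derivative of the softmax scaled by $q^{\bm{\pi}^{\bm{\theta}},\bm{p}}$, and (iii) derivatives of the value function itself along the $\bm{u}$-direction. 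The first two are bounded using the standard facts $\|\nabla_{\bm{\theta}}\pi^{\bm{\theta}}_{s}\|_{1}\le 2$ and $\|\nabla^{2}_{\bm{\theta}}\pi^{\bm{\theta}}_{s}\|_{op}\le 6$ together with $q\le(1-\gamma)^{-1}$; the third is handled through the Bellman recursion, which introduces one extra $(1-\gamma)^{-1}$ factor and couples the analysis with the state-occupancy variation. Adding the three contributions and simplifying yields the claimed $8/(1-\gamma)^{3}$.

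The main obstacle is the last group of terms in the smoothness calculation, because the occupancy measure $\bm{d}_{\bm{\rho}}^{\bm{\pi}^{\bm{\theta}},\bm{p}}$ itself depends on $\bm{\theta}$ through an infinite series of transitions, so bounding its derivative along $\bm{u}$ requires propagating a one-step softmax perturbation through the geometric series induced by $\gamma\bm{p}$. The standard device is to differentiate the closed-form expression $\bm{d}_{\bm{\rho}}^{\bm{\pi}^{\bm{\theta}},\bm{p}}=(1-\gamma)\bm{\rho}^{\top}(I-\gamma\bm{P}^{\bm{\pi}^{\bm{\theta}},\bm{p}})^{-1}$ and bound the resulting Neumann-series derivative using $\|(I-\gamma\bm{P})^{-1}\|_{\infty}\le(1-\gamma)^{-1}$. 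Since this bound is independent of which stochastic matrix $\bm{p}$ defines $\bm{P}^{\bm{\pi}^{\bm{\theta}},\bm{p}}$, uniformity in $\bm{p}\in\mathcal{P}$ is automatic, and the lemma follows.
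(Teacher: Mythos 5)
Your proposal is correct, but it is organized differently from the paper's proof, so a comparison is worth making. For the Lipschitz half, the paper works from the score-function form of the policy gradient theorem: it computes $\partial\log\pi^{\bm{\theta}}_{sa}/\partial\theta_{s\hat a}$ explicitly, shows $\|\nabla_{\bm{\theta}}\log\pi^{\bm{\theta}}_{sa}\|\le\sqrt{2}$, and multiplies by $q^{\bm{\pi}^{\bm{\theta}},\bm{p}}_{sa}\le(1-\gamma)^{-1}$ inside the expectation, which is exactly where the constant $\sqrt{2}/(1-\gamma)^{2}$ comes from. You instead start from the advantage-function form $\partial J/\partial\theta_{sa}=\tfrac{1}{1-\gamma}d_{\bm{\rho}}^{\bm{\pi}^{\bm{\theta}},\bm{p}}(s)\pi^{\bm{\theta}}_{sa}\psi^{\bm{\pi}^{\bm{\theta}},\bm{p}}_{sa}$ and bound the $\ell_2$ norm directly using $|\psi|\le(1-\gamma)^{-1}$, $d(s)\le 1$, $\sum_s d(s)=1$, $\pi_{sa}^2\le\pi_{sa}$; this is equally valid and in fact yields the slightly tighter bound $1/(1-\gamma)^{2}$, which of course implies the stated constant (the cancellation identity $\sum_a\pi_{sa}\psi_{sa}=0$ you invoke is not even needed). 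For the smoothness half, the paper does not re-derive anything: it simply cites Lemma~7 of \cite{mei2020global}, noting that the softmax objective with costs in $[0,1]$ is $8/(1-\gamma)^{3}$-smooth for any fixed kernel, with constants independent of $\bm{p}$. Your plan to bound $\sup_{\|\bm{u}\|=1}\bigl|\tfrac{d^2}{d\alpha^2}J_{\bm{\rho}}(\bm{\pi}^{\bm{\theta}+\alpha\bm{u}},\bm{p})\bigr|$ via the softmax derivative bounds and the resolvent bound $\|(I-\gamma\bm{P}^{\bm{\pi}^{\bm{\theta}},\bm{p}})^{-1}\|_\infty\le(1-\gamma)^{-1}$ is precisely the proof strategy of that cited lemma, so your route buys self-containedness (and makes the uniformity in $\bm{p}$ explicit, which the paper leaves implicit but relies on in Theorem~\ref{the:sublinear-soft-para}), at the cost of having to carry out the constant bookkeeping that you currently only assert ("adding the three contributions \dots yields $8/(1-\gamma)^3$"); to make the argument complete you would either reproduce that calculation in full or, as the paper does, simply invoke the known result.
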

\begin{proof}
By the policy gradient theorem (Theorem~\ref{the:appA1-1}), we have that
    \begin{align*}
\nabla_{\bm{\theta}}J_{\bm{\rho}}(\bm{\pi}^{\bm{\theta}},\bm{p}) = \frac{1}{1-\gamma}    \mathbb{E}_{s\sim\bm{d}^{\bm{\pi}^{\bm{\theta}},\bm{p}}_{\rho}}\left[\sum_{a\in\mathcal{A}}\nabla_{\bm{\theta}}\pi^{\bm{\theta}}_{sa}\cdot q^{\bm{\pi}^{\bm{\theta}},\bm{p}}_{sa}\right]=\frac{1}{1-\gamma}\mathbb{E}_{s\sim\bm{d}^{\bm{\pi}^{\bm{\theta}},\bm{p}}_{\rho},a\sim\bm{\pi}^{\bm{\theta}}_{s}}\left[\nabla_{\bm{\theta}}\log\pi^{\bm{\theta}}_{sa}\cdot q^{\bm{\pi}^{\bm{\theta}},\bm{p}}_{sa}\right].
\end{align*}
We then turn to derive the bound of $\nabla_{\bm{\theta}}\log\pi^{\bm{\theta}}_{sa}$ for the softmax parameterization. We notice that for $\hat{s}\neq s$, $\frac{\partial\log\pi^{\bm{\theta}}_{sa}}{\partial\theta(\hat{s},\hat{a})} = 0$, and for $\hat{s}= s$, we have
\begin{align*}
\frac{\partial\log\pi^{\bm{\theta}}_{sa}}{\partial\theta(s,\hat{a})} &= \frac{\partial\left(\log\frac{\exp(\theta(s,a))}{\sum_{a^{\prime} \in \mathcal{A}} \exp(\theta(s,a))}\right)}{\partial\theta(s,\hat{a})}\\
&= \frac{\partial\left(\log(\exp(\theta(s,a))) - \log(\sum_{a^{\prime} \in \mathcal{A}} \exp(\theta(s,a)))\right)}{\partial\theta(s,\hat{a})}\\
&= \frac{\partial\log(\exp(\theta(s,a)))}{\partial\theta(s,\hat{a})} - \frac{\partial\log(\sum_{a^{\prime} \in \mathcal{A}} \exp(\theta(s,a)))}{\partial\theta(s,\hat{a})}\\
&=\left\{
\begin{aligned}
1 - \frac{\exp(\theta(s,a))}{\sum_{a^{\prime} \in \mathcal{A}} \exp(\theta(s,a))} & , & \hat{a} = a, \\
\frac{\exp(\theta(s,\hat{a}))}{\sum_{a^{\prime} \in \mathcal{A}} \exp(\theta(s,a))}\;\;\; & , & \hat{a} \neq a.
\end{aligned}
\right.\\
&= \left\{
\begin{aligned}
1 - \pi^{\bm{\theta}}_{sa} & , & \hat{a} = a, \\
- \pi^{\bm{\theta}}_{s\hat{a}}\;\;\; & , & \hat{a} \neq a.
\end{aligned}
\right.
\end{align*}
Therefore, we obtain that
\begin{equation*}
\left\|\nabla\log\pi^{\bm{\theta}}_{sa}\right\| = \left[\sum_{\hat{a}\in\mathcal{A}}\left(\frac{\partial\log\pi^{\bm{\theta}}_{sa}}{\partial\theta(s,\hat{a})}\right)^{2}\right]^{\frac{1}{2}} = \left[1 - 2\pi^{\bm{\theta}}_{sa} +\sum_{\hat{a}\in\mathcal{A}}(\pi^{\bm{\theta}}_{s\hat{a}})^{2}\right]^{\frac{1}{2}}\leq \sqrt{2},
\end{equation*}
and therefore we can obtain
\begin{equation*} \left\|\nabla_{\bm{\theta}}J_{\bm{\rho}}(\bm{\pi}^{\bm{\theta}},\bm{p})\right\|\leq\frac{\sqrt{2}}{(1-\gamma)^{2}}
\end{equation*}
by using the fact that $0\leq q^{\bm{\pi}^{\bm{\theta}},\bm{p}}_{sa}\leq 1/(1-\gamma)$. Regarding the smoothness property of $J_{\bm{\rho}}(\bm{\pi}^{\bm{\theta}},\bm{p})$, recent result on softmax parameterization~\citep[Lemma 7]{mei2020global} can directly show that $J_{\bm{\rho}}(\bm{\pi}^{\bm{\theta}},\bm{p})$ is smooth in $\bm{\theta}$ with $\ell_{\bm{\theta}}:=8/(1-\gamma)^{3}$.
\end{proof}
Next, we turn to derive the global convergence guarantee of DRPMD with softmax parameterization. To be specific, we aim to characterize and upper bound the difference in robust values between the parametric softmax policy $\Phi(\bm{\pi}^{\bm{\theta}_{t}})$ and the optimal robust policy $\Phi(\bm{\pi}^{\star})$. For simplicity, we assume that the inner worst-case transition kernel is attained exactly, though our results can also be extended to cases that include the tolerance sequence. The main result below demonstrates that Algorithm~\ref{alg:DRPMD} converges to an $\epsilon$-optimal policy within $\mathcal{O}(\epsilon^{-1})$ steps. 
\begin{theorem}\label{the:sublinear-soft-para}
    Consider the softmax parameterization~\eqref{def:Soft-policy} and let $\bm{\pi}^{\star}$ as the globally optimal policy of RMDPs with $\Phi^{\star}:=\Phi(\bm{\pi}^{\star})$. Denote $\{\bm{\theta}_{t}\}_{t\geq0}$ as the parameter sequence generated by Algorithm~\ref{alg:DRPMD} and introduce some constant $U>0$. Then, for the constant step size $\alpha:=1/\ell_{\bm{\theta}}$, we have
    \begin{equation}
        \Phi(\bm{\pi}^{\bm{\theta}_{t}}) - \Phi^{\star} \leq \frac{16M^{2}S}{(1-\gamma)^{5}U^{2}t}, \quad t\geq 0.
    \end{equation}
\end{theorem}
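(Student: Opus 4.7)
The plan is to mirror the softmax policy gradient analysis of Mei et al.~(2020) for non-robust MDPs, extended to the robust max-min setting. The rate $16M^{2}S/((1-\gamma)^{5}U^{2}t)$ matches exactly the form of their Theorem~4 bound $16S\|\bm{d}^{\star}/\bm{\rho}\|_{\infty}^{2}/(c^{2}(1-\gamma)^{5}t)$, with $U$ playing the role of $\min_{s,t}\pi^{\bm{\theta}_{t}}_{s,a^{\star}(s)}$ and $M$ the role of the distribution mismatch coefficient. This strongly suggests the same two-ingredient recipe: a smoothness-based descent lemma and a non-uniform \L{}ojasiewicz inequality, which combine into a recursion of the form $\delta_{t+1}\leq\delta_{t}-C\delta_{t}^{2}$ and thereby the $\mathcal{O}(1/t)$ rate.

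First, since the DRPMD update~\eqref{eq:Softmax_outer_mirror} is a gradient step on $J_{\bm{\rho}}(\bm{\pi}^{\bm{\theta}},\bm{p}_{t})$ with step size $\alpha=1/\ell_{\bm{\theta}}$, the uniform-in-$\bm{p}$ $\ell_{\bm{\theta}}$-smoothness from Lemma~\ref{lem:sec4_2_2} yields the descent inequality $J_{\bm{\rho}}(\bm{\pi}^{\bm{\theta}_{t+1}},\bm{p}_{t}) \le J_{\bm{\rho}}(\bm{\pi}^{\bm{\theta}_{t}},\bm{p}_{t})-\tfrac{1}{2\ell_{\bm{\theta}}}\|\nabla_{\bm{\theta}}J_{\bm{\rho}}(\bm{\pi}^{\bm{\theta}_{t}},\bm{p}_{t})\|^{2}$, where the second term on the right equals $\Phi(\bm{\pi}^{\bm{\theta}_{t}})$ because $\bm{p}_{t}$ is the exact inner maximizer. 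Next, combining the performance difference lemma applied at transition $\bm{p}_{t}$ with the trivial bound $\Phi^{\star} = J_{\bm{\rho}}(\bm{\pi}^{\star},\bm{p}^{\star}) \ge J_{\bm{\rho}}(\bm{\pi}^{\star},\bm{p}_{t})$ gives $\Phi(\bm{\pi}^{\bm{\theta}_{t}}) - \Phi^{\star} \le \tfrac{1}{1-\gamma}\mathbb{E}_{s\sim\bm{d}^{\bm{\pi}^{\star},\bm{p}_{t}}_{\bm{\rho}}}\bigl[\sum_{a}(\pi^{\bm{\theta}_{t}}_{sa}-\pi^{\star}_{sa})\,q^{\bm{\pi}^{\bm{\theta}_{t}},\bm{p}_{t}}_{sa}\bigr]$. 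Plugging in the softmax gradient identity $\partial_{\theta_{sa}}J_{\bm{\rho}} = \tfrac{1}{1-\gamma}d^{\bm{\pi}^{\bm{\theta}},\bm{p}}_{\bm{\rho}}(s)\,\pi^{\bm{\theta}}_{sa}\,\psi^{\bm{\pi}^{\bm{\theta}},\bm{p}}_{sa}$, applying Cauchy--Schwarz over states and actions, using the distribution-mismatch bound $M := \sup_{\bm{\pi},\bm{p}}\|\bm{d}^{\bm{\pi},\bm{p}}_{\bm{\rho}}/\bm{\rho}\|_{\infty}$, and lower bounding the probability that $\bm{\pi}^{\bm{\theta}_{t}}_{s}$ assigns to an appropriate ``robust best action'' by $U$ uniformly in $s$ and $t$, delivers the non-uniform \L{}ojasiewicz bound $\|\nabla_{\bm{\theta}}J_{\bm{\rho}}(\bm{\pi}^{\bm{\theta}_{t}},\bm{p}_{t})\| \ge \tfrac{U}{M\sqrt{S}}(\Phi(\bm{\pi}^{\bm{\theta}_{t}})-\Phi^{\star})$. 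Substituting into the descent inequality produces the recursion $\delta_{t+1} \le \delta_{t} - \tfrac{U^{2}}{2\ell_{\bm{\theta}}M^{2}S}\delta_{t}^{2}$, whose standard solution, together with $\ell_{\bm{\theta}}=8/(1-\gamma)^{3}$, yields the claimed $16M^{2}S/((1-\gamma)^{5}U^{2}t)$.

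The main obstacle is that the descent inequality controls only $J_{\bm{\rho}}(\bm{\pi}^{\bm{\theta}_{t+1}},\bm{p}_{t})$, whereas $\Phi(\bm{\pi}^{\bm{\theta}_{t+1}}) = J_{\bm{\rho}}(\bm{\pi}^{\bm{\theta}_{t+1}},\bm{p}_{t+1}) \ge J_{\bm{\rho}}(\bm{\pi}^{\bm{\theta}_{t+1}},\bm{p}_{t})$, so monotonicity of $\Phi$ along the iterates is not automatic and the recursion above does not close at face value. My primary resolution is to read the theorem in light of Algorithm~\ref{alg:DRPMD}'s best-iterate output rule: telescoping the descent and \L{}ojasiewicz inequalities over $t$ bounds the arithmetic mean, hence the minimum, of $\Phi(\bm{\pi}^{\bm{\theta}_{t'}}) - \Phi^{\star}$ over $t' \le t$, which is what the algorithm actually returns. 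If instead the ``every $t$'' reading is intended, I would try to promote the bound by invoking Danskin's theorem (so that $\nabla_{\bm{\theta}}J_{\bm{\rho}}(\bm{\pi}^{\bm{\theta}_{t}},\bm{p}_{t})$ is a subgradient of $\Phi$ at $\bm{\theta}_{t}$) combined with a Lipschitz-in-$\bm{p}$ bound on $\nabla_{\bm{\theta}}J_{\bm{\rho}}(\bm{\pi}^{\bm{\theta}_{t+1}},\cdot)$ to close the gap between $J_{\bm{\rho}}(\bm{\pi}^{\bm{\theta}_{t+1}},\bm{p}_{t+1})$ and $J_{\bm{\rho}}(\bm{\pi}^{\bm{\theta}_{t+1}},\bm{p}_{t})$. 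A secondary, inherited difficulty is ensuring the uniform softmax lower bound $U > 0$ along the trajectory; I expect the Mei et al.\ argument adapts after redefining the ``good'' action at state $s$ relative to the current worst-case transition~$\bm{p}_{t}$, exploiting that gradient steps in softmax never zero out the probability of any action.
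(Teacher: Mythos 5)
You correctly isolate the fatal obstacle in the direct Mei-style argument: the smoothness descent step only controls $J_{\bm{\rho}}(\bm{\pi}^{\bm{\theta}_{t+1}},\bm{p}_{t})$, while the next iteration's gap is measured at the new maximizer $\bm{p}_{t+1}$, so the recursion $\delta_{t+1}\leq\delta_{t}-C\delta_{t}^{2}$ on $\delta_{t}=\Phi(\bm{\pi}^{\bm{\theta}_{t}})-\Phi^{\star}$ does not close. But neither of your proposed repairs resolves it. The best-iterate reading changes the statement: Theorem~\ref{the:sublinear-soft-para} bounds \emph{every} iterate $\bm{\theta}_{t}$, not the output of the $\min$ selection rule; and even the telescoping you invoke does not go through, because writing $\Phi(\bm{\pi}^{\bm{\theta}_{t}})-J_{\bm{\rho}}(\bm{\pi}^{\bm{\theta}_{t+1}},\bm{p}_{t}) = \bigl[\Phi(\bm{\pi}^{\bm{\theta}_{t}})-\Phi(\bm{\pi}^{\bm{\theta}_{t+1}})\bigr]+\bigl[\Phi(\bm{\pi}^{\bm{\theta}_{t+1}})-J_{\bm{\rho}}(\bm{\pi}^{\bm{\theta}_{t+1}},\bm{p}_{t})\bigr]$ leaves a sum of nonnegative, non-telescoping error terms (the amount by which the inner maximum jumps when $\bm{p}_{t}$ is refreshed) that you have no way to control. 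The Danskin fallback is only a sketch; $\Phi$ is neither convex nor smooth in $\bm{\theta}$, so there is no descent lemma for it, and you do not carry out the Lipschitz-in-$\bm{p}$ bridge you would need.

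The paper closes this gap with a different idea that avoids any cross-iteration recursion on $\Phi$. For each fixed $t$ it observes that, since the DRPMD update with step size $\alpha=1/\ell_{\bm{\theta}}$ is a single gradient step for the \emph{fixed-kernel} MDP with transition $\bm{p}_{t}$, one can (via Lemma~\ref{lem:PG_seq}, which reverse-engineers earlier gradient iterates using the Lipschitz and smoothness bounds of Lemma~\ref{lem:sec4_2_2}) exhibit a hypothetical softmax policy-gradient trajectory $\{\hat{\bm{\theta}}_{k}\}$ on that fixed MDP whose $t$-th iterate is exactly $\bm{\theta}_{t}$. Theorem~4 of Mei et al.\ applied to that trajectory then gives $J_{\bm{\rho}}(\bm{\pi}^{\bm{\theta}_{t}},\bm{p}_{t})-\min_{\bm{\pi}\in\Pi}J_{\bm{\rho}}(\bm{\pi},\bm{p}_{t})\leq 16M^{2}S/\bigl((1-\gamma)^{5}U_{\bm{p}_{t}}^{2}t\bigr)$, and since $\min_{\bm{\pi}}J_{\bm{\rho}}(\bm{\pi},\bm{p}_{t})\leq J_{\bm{\rho}}(\bm{\pi}^{\star},\bm{p}_{t})\leq\Phi^{\star}$, this directly upper bounds $\Phi(\bm{\pi}^{\bm{\theta}_{t}})-\Phi^{\star}$; the constant $U:=\min_{\bm{p}\in\mathcal{P}}U_{\bm{p}}$ makes the Łojasiewicz factor uniform over the ambiguity set, which is also how the paper handles your secondary worry about a uniform lower bound. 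Your proposal contains the right ingredients (smoothness, Łojasiewicz, mismatch coefficient) but is missing this per-$t$ reduction to a fixed-kernel run, and without it the argument does not yield the stated per-iterate bound.
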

At a high level, the analysis of DRPMD with softmax parameterization differs from that of direct parameterization. Our approach is as follows. First, thanks to key technical results (\ie, Lemma~\ref{lem:sec4_2_1}), we establish a connection between the parameter sequence $\{\bm{\theta}_{t}\}_{t\geq0}$, generated by DRPMD, and a separate sequence $\{\hat{\bm{\theta}}_{k}\}_{k\geq0}$, generated by non-robust softmax policy gradient~\citep{li2021softmax}. Specifically, we observe that for any parameter $\bm{\theta}_{t}$ at iteration $t$, we can identify a corresponding point in the sequence $\{\hat{\bm{\theta}}_{k}\}_{k\geq0}$ such that this point also occurs at the same iteration $t$. Using this insight, we incorporate the convergence result of the softmax policy gradient, derived by~\cite{mei2020global} to derive our desired convergence result of DRPMD. In this analysis, $M>0$ is the upper bound of the distribution mismatch coefficient, and $U>0$ is a constant that provides a lower bound for the non-uniform Łojasiewicz degree, as is typical in the analysis of policy gradient methods~\citep{mei2020global,mei2021leveraging}. With these elements in place, we are now ready to present our proofs.
\begin{proof}
From the proof of Theorem~\ref{the:sublinear-dir-para}, we can bound the difference between the robust values $\Phi(\bm{\pi}^{\bm{\theta}_{t}})$ and $\Phi^{\star}$ as follow:
\begin{align*}
\Phi(\bm{\pi}^{\bm{\theta}_{t}}) - \Phi^{\star}
\leq J_{\bm{\rho}}(\bm{\pi}^{\bm{\theta}_{t}},\bm{p}_{t}) - J_{\bm{\rho}}(\bm{\pi}^{\star},\bm{p}_{t}),
\end{align*}
where $\bm{p}_{t}:= \operatorname{argmax}_{\bm{p}\in\mathcal{P}}J_{\bm{\rho}}(\bm{\pi}^{\bm{\theta}_{t}},\bm{p})$. Recall that, for any $0\leq t\leq T$, the $t$-th outer update in Algorithm~\ref{alg:DRPMD} with a constant step size is represented as
\begin{equation*}
      \bm{\theta}_{t+1}
  ~=~
  \bm{\theta}_{t} - \alpha\cdot\nabla_{\bm{\theta}} J_{\bm{\rho}}(\bm{\pi}^{\bm{\theta}_{t}},\bm{p}_{t}).
\end{equation*}
This update can be viewed as a specific step within the gradient descent algorithm applied to solve the non-robust MDP with the transition kernel $\bm{p}_{t}$. Specifically, while Lemma~\ref{lem:sec4_2_2} implies that $\nabla_{\bm{\theta}} J_{\bm{\rho}}(\bm{\pi}^{\bm{\theta}},\bm{p})$ is a $L_{\bm{\theta}}$-Lipschitz
and $\ell_{\bm{\theta}}$-Lipschitz continuous function, we apply Lemma~\ref{lem:PG_seq} to conclude that there exists a sequence of softmax policy parameters $\{\hat{\bm{\theta}}_{k}\}_{k\geq0}$, generated by using the update rule 
\begin{equation*}
      \hat{\bm{\theta}}_{k+1}
  ~=~
  \hat{\bm{\theta}}_{k} - \alpha\cdot\nabla_{\bm{\theta}} J_{\bm{\rho}}(\bm{\pi}^{\hat{\bm{\theta}}_{k}},\hat{\bm{p}}),
\end{equation*}
to solve a nominal MDP with the fixed $\hat{\bm{p}}:=\bm{p}_{t}$, while the $t$-th iterative point $\hat{\bm{\theta}}_{t}$ exactly equals to the $t$-th iterative point $\bm{\theta}_{t}$ from the parameter sequence $\{\bm{\theta}_{t}\}_{t\geq0}$ generated by DRPMD. By applying Theorem 4 in~\cite{mei2020global}, we find that using a constant step size $\alpha=1/\ell_{\bm{\theta}}$, the sequence $\{\hat{\bm{\theta}}_{k}\}_{k\geq0}$ generated by the softmax policy gradient satisfies 
\begin{equation*}
J_{\bm{\rho}}(\bm{\pi}^{\bm{\theta}_{t}},\bm{p}_{t}) - \min_{\bm{\pi}\in\Pi}J_{\bm{\rho}}(\bm{\pi},\bm{p}_{t})  = J_{\bm{\rho}}(\bm{\pi}^{\hat{\bm{\theta}}_{t}},\hat{\bm{p}}) - \min_{\bm{\pi}\in\Pi}J_{\bm{\rho}}(\bm{\pi},\hat{\bm{p}})  \leq \frac{16M^{2}S}{(1-\gamma)^{5}U_{\bm{p}_{t}}^{2}t}.
\end{equation*}
Here, $U_{\bm{p}_{t}}$ provides a lower bound of the non-uniform Łojasiewicz degree of the nominal MDP with transition kernel $\bm{p}_{t}$, defined as $U_{\bm{p}_{t}}:=\inf_{s\in\mathcal{S},k\geq1}\pi^{\hat{\bm{\theta}}_{k}}_{sa^{\star}_{\bm{p}_{t}}(s)}>0$, where $a^{\star}_{\bm{p}_{t}}(s)$ is simply defined as the unique action that the deterministic optimal policy of the nominal MDP selects at state $s\in\mathcal{S}$~\citep{mei2020global}. When the optimal action is non-unique, the definition of $U_{\bm{p}_{t}}$ need to be slightly modified. More details are refereed to~\cite{mei2020global}. Thus, by introducing $U:=\min_{\bm{p}\in\mathcal{P}}U_{\bm{p}}>0$, we can obtain the desired result, that is for all $0\leq t \leq T$,
\begin{align*}
\Phi(\bm{\pi}^{\bm{\theta}_{t}}) - \Phi^{\star}\leq J_{\bm{\rho}}(\bm{\pi}^{\bm{\theta}_{t}},\bm{p}_{t}) - \min_{\bm{\pi}\in\Pi}J_{\bm{\rho}}(\bm{\pi},\bm{p}_{t})\leq \frac{16M^{2}S}{(1-\gamma)^{5}U^{2}t},
\end{align*}
which completes the proof.
\end{proof}
To the best of our knowledge, DRPMD is the first policy gradient method to provide a convergence analysis specifically tailored to RMDPs with softmax parameterization. Thus, we conclude that DRPMD is a versatile algorithm, applicable to both direct and softmax parameterized policies. In the upcoming numerical section, we will further demonstrate the performance of DRPMD across the remaining two classes mentioned earlier, namely, the Gaussian policy and the neural policy classes. It is also important to highlight that DRPMD relies on an oracle that outputs at least one worst-case transition kernel for any given $\bm{\pi}^{\bm{\theta}}$. Solving the inner loop problem could still be NP-hard for non-rectangular setting~\citep{wiesemann2013robust}. In the subsequent section, we introduce several gradient-based algorithms to address the inner loop problem.

\section{Solving the Inner Loop}\label{sec:Inner-Rec}

So far, we have described the outline of DRPMD and established its global convergence for two common parameterizations. For $k$-th iteration in Algorithm~\ref{alg:DRPMD}, the transition kernel $\bm{p}_{k}$ is obtained by approximately solving the inner maximization problem with a fixed policy $\bm{\pi}^{\bm{\theta}_{k}}$:
\begin{equation}\label{prob_inRMDP}
    \max_{\bm{p}\in\mathcal{P}}  \left\{J_{\bm{\rho}}(\bm{\pi}^{\bm{\theta}_{k}},\bm{p}) =  \bm{\rho}^{\top}\bm{v}^{\bm{\pi}^{\bm{\theta}_{k}},\bm{p}}\right\}.
\end{equation} 
While the assumptions of boundedness and compactness are used to ensure the existence of maximum in problem~\eqref{prob_inRMDP}, solving this inner maximization problem still remains a significant computational challenge due to the non-convexity~\citep{wiesemann2013robust}. This section discusses several gradient-based approaches to solve the inner maximization problem, which we shall refer to as the \emph{robust policy evaluation problem}. It is important to emphasize that the convergence results outlined in Section~\ref{sec:DRPMD} are independent of the specific method chosen to address this robust policy evaluation problem. 

The rest of this section is structured as follows. Section~\ref{subsec:Inner_Global_conv} describes a deterministic gradient-based scheme for solving the robust policy evaluation problem and discusses the global optimality under the wildly-used rectangularity setting. Then, we briefly introduce the value-based approach and establish the connection between value-based and our gradient-based methods in Section~\ref{subsec:Connect_to_VB}. Besides, we further explore the utilization of parametric models for transition probabilities in Section~\ref{subsec:Transition_para}, allowing us to model the ambiguity of transitions beyond the rectangularity and handle large-scale problems by parameterizing the transition probabilities in lower dimension. Lastly, Section~\ref{subsec:StocTMA} develops the stochastic variants of the proposed gradient-based approach, where the exact information on
the transition gradient is not required.

\subsection{Transition Mirror Ascent with Global Optimality}\label{subsec:Inner_Global_conv}
In this subsection, a generic gradient-based algorithm is proposed in Algorithm~\ref{alg:transition_gradient}, named the Transition Mirror Ascent Method (TMA). This method solves the inner-loop robust policy evaluation problem with a global convergence guarantee, assuming a rectangular and convex ambiguity set. Note that the inner problem~\eqref{prob_inRMDP} could be viewed as a constrained non-concave maximization problem. To maximize $J_{\bm{\rho}}(\bm{\pi}^{\bm{\theta}_{k}},\bm{p})$, TMA intuitively updates the parameter by following its ascent direction within the feasible set through the mirror ascent step
\begin{align}
    \bm{p}_{t+1} ~\in~ \argmax_ {\bm{p}\in\mathcal{P}}\left\{\beta_{t}\langle \nabla_{\bm{p}}J_{\bm{\rho}}(\bm{\pi}^{\bm{\theta}_{k}},\bm{p}_{t}), \bm{p}\rangle - D(\bm{p},\bm{p}_{t})\right\}.
\end{align}
where distance-like function $D(\bm{p},\bm{p}_{t})$ generally is considered as the Bregman divergence.
\begin{algorithm}[t]
\caption{Transition Mirror Ascent Method (TMA)}
\label{alg:transition_gradient}
\begin{algorithmic}
\STATE {\bfseries Input:} Target fixed policy with parameter $\bm{\theta}_{k}$, initial transition $\bm{p}_{0}$, iteration time $T_{k}$, step size sequence $\{\beta_{t}\}_{t\geq0}$
\FOR{$t = 0,1,\dots,T_{k}-1$}
\STATE Set $\displaystyle\bm{p}_{t+1} \gets \argmax_{\bm{p}\in\mathcal{P}}\left\{\beta_{t}\langle \nabla_{\bm{p}}J_{\bm{\rho}}(\bm{\pi}^{\bm{\theta}_{k}},\bm{p}_{t}), \bm{p}\rangle - D(\bm{p},\bm{p}_{t})\right\}$.
\ENDFOR
\end{algorithmic}
\end{algorithm}
To implement TMA, one needs to obtain an exact gradient of the transition in every iteration, which is provided in the following lemma.
\begin{lemma}\label{lem:inner_gradient}
The partial derivative of $J_{\bm{\rho}}(\bm{\pi},\bm{p})$ has the explicit form for any $(s,a,s')\in\mathcal{S}\times\mathcal{A}\times\mathcal{S}$,  
\begin{equation*}
    \frac{\partial J_{\bm{\rho}}(\bm{\pi},\bm{p})}{\partial p_{sas'}} = \frac{1}{1-\gamma}\cdot d_{\bm{\rho}}^{\bm{\pi},\bm{p}}(s)\cdot\pi_{sa}\cdot\left(c_{sas'}+\gamma v^{\bm{\pi},\bm{p}}_{s'}\right).
\end{equation*}
\end{lemma}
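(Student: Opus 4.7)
The plan is to derive the formula directly from the Bellman recursion satisfied by $\bm{v}^{\bm{\pi},\bm{p}}$. Fix indices $(\bar{s},\bar{a},\bar{s}')$ and differentiate the identity
\begin{equation*}
v^{\bm{\pi},\bm{p}}_{s} ~=~ \sum_{a\in\mathcal{A}} \pi_{sa} \sum_{s'\in\mathcal{S}} p_{sas'}\bigl(c_{sas'} + \gamma v^{\bm{\pi},\bm{p}}_{s'}\bigr)
\end{equation*}
with respect to $p_{\bar{s}\bar{a}\bar{s}'}$. The derivative splits into a ``direct'' contribution arising from the explicit appearance of $p_{\bar{s}\bar{a}\bar{s}'}$ in the outer sum (picking up an indicator $\mathbf{1}\{s=\bar{s}\}$ and a factor $\pi_{\bar{s}\bar{a}}(c_{\bar{s}\bar{a}\bar{s}'} + \gamma v^{\bm{\pi},\bm{p}}_{\bar{s}'})$) and a recursive contribution $\gamma \sum_{a}\pi_{sa}\sum_{s'}p_{sas'}\,\partial v^{\bm{\pi},\bm{p}}_{s'}/\partial p_{\bar{s}\bar{a}\bar{s}'}$ coming from the chain rule applied to the next-state value. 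Writing $P^{\bm{\pi},\bm{p}}$ for the state-to-state transition operator induced by $\bm{\pi}$ and $\bm{p}$, this is a linear fixed-point equation of the form $\bm{u} = \bm{b} + \gamma P^{\bm{\pi},\bm{p}} \bm{u}$ in the unknown $\bm{u}_s := \partial v^{\bm{\pi},\bm{p}}_s/\partial p_{\bar{s}\bar{a}\bar{s}'}$.

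Since $\gamma < 1$ and $P^{\bm{\pi},\bm{p}}$ is a stochastic matrix, the operator $I - \gamma P^{\bm{\pi},\bm{p}}$ is invertible, allowing the fixed point to be unrolled as
\begin{equation*}
\frac{\partial v^{\bm{\pi},\bm{p}}_{s}}{\partial p_{\bar{s}\bar{a}\bar{s}'}} ~=~ \sum_{t=0}^{\infty} \gamma^{t}\,\Pr\bigl[\tilde{s}_{t}=\bar{s}\mid \tilde{s}_{0}=s\bigr]\cdot \pi_{\bar{s}\bar{a}}\bigl(c_{\bar{s}\bar{a}\bar{s}'} + \gamma v^{\bm{\pi},\bm{p}}_{\bar{s}'}\bigr).
\end{equation*}
Taking the expectation over $\tilde{s}_{0}\sim\bm{\rho}$ and using $J_{\bm{\rho}}(\bm{\pi},\bm{p}) = \bm{\rho}^{\top}\bm{v}^{\bm{\pi},\bm{p}}$ then gives a sum that, by the definition~\eqref{def:occu} of the discounted state-occupancy measure, equals $d_{\bm{\rho}}^{\bm{\pi},\bm{p}}(\bar{s})/(1-\gamma)$ multiplied by $\pi_{\bar{s}\bar{a}}(c_{\bar{s}\bar{a}\bar{s}'}+\gamma v^{\bm{\pi},\bm{p}}_{\bar{s}'})$, which is exactly the claimed expression.

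The main obstacle is mostly bookkeeping: justifying the exchange of differentiation and the infinite discounted sum (straightforward by uniform convergence since $\|\gamma P^{\bm{\pi},\bm{p}}\|_\infty = \gamma < 1$ and the summands are bounded), and carefully tracking indicators so that the indicator $\mathbf{1}\{\tilde{s}_{t}=\bar{s}\}$ that appears at each step of the unrolling aggregates into the state-occupancy measure at $\bar{s}$. An alternative route would be to write $J_{\bm{\rho}}(\bm{\pi},\bm{p}) = (1-\gamma)^{-1}\mathbb{E}_{\tilde{s}\sim \bm{d}^{\bm{\pi},\bm{p}}_{\bm{\rho}},\tilde{a}\sim \bm{\pi}_{\tilde{s}},\tilde{s}'\sim \bm{p}_{\tilde{s}\tilde{a}}}[c_{\tilde{s}\tilde{a}\tilde{s}'}]$ and apply a log-derivative trick, but handling the dependence of $\bm{d}^{\bm{\pi},\bm{p}}_{\bm{\rho}}$ on $\bm{p}$ and showing the two contributions combine into the clean closed form is more delicate; the Bellman-recursion route is the cleanest.
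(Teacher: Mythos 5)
Your proposal is correct and follows essentially the same route as the paper: differentiate the Bellman identity for $v^{\bm{\pi},\bm{p}}_{s}$ with respect to $p_{sas'}$, unroll the resulting linear recursion in the derivative (the paper does this by explicit recursion over the cases $\hat{s}=s$ and $\hat{s}\neq s$, you package it as inverting $I-\gamma P^{\bm{\pi},\bm{p}}$, which is the same computation), and aggregate the resulting discounted visitation sum into $d^{\bm{\pi},\bm{p}}_{\bm{\rho}}(s)/(1-\gamma)$. No gaps.
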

\begin{proof}
    Notice that
\begin{equation*}
    \frac{\partial J_{\bm{\rho}}(\bm{\pi},\bm{p})}{\partial p_{sas'}} = \sum_{\hat{s} \in \mathcal{S}} \frac{\partial v^{\bm{\pi},\bm{p}}_{\hat{s}}}{\partial p_{sas'}} \rho_{\hat{s}}.
\end{equation*}
Then, we discuss $\frac{\partial v^{\bm{\pi},\bm{p}}_{\hat{s}}}{\partial p_{sas'}}$ over two cases: $\hat{s}\neq s$ and $\hat{s}= s$
\begin{align}
    \frac{\partial v^{\bm{\pi},\bm{p}}_{\hat{s}}}{\partial p_{sas'}}\Big\vert_{\hat{s}\neq s} &= \frac{\partial}{\partial p_{sas'}}\left[\sum_{\hat{a}}\pi_{\hat{s}\hat{a}}\sum_{\hat{s}' \in \mathcal{S}} p_{\hat{s}\hat{a}\hat{s}'}\left(c_{\hat{s}\hat{a}\hat{s}'}+\gamma v^{\bm{\pi},\bm{p}}_{\hat{s}'}\right)\right] = \gamma\sum_{\hat{a}}\pi_{\hat{s}\hat{a}}\sum_{\hat{s}' \in \mathcal{S}} p_{\hat{s}\hat{a}\hat{s}'} \frac{\partial v^{\bm{\pi},\bm{p}}_{\hat{s}'}}{\partial p_{sas'}},\label{eq:sec5-mid1}\\
    \frac{\partial v^{\bm{\pi},\bm{p}}_{\hat{s}}}{\partial p_{sas'}}\Big\vert_{\hat{s}= s} &= \gamma\sum_{\hat{a}}\pi_{s\hat{a}}\sum_{\hat{s}' \in \mathcal{S}} p_{s\hat{a}\hat{s}'} \frac{\partial v^{\bm{\pi},\bm{p}}_{\hat{s}'}}{\partial p_{sas'}} + \pi_{sa}\left(c_{sas'}+\gamma v^{\bm{\pi},\bm{p}}_{s'}\right)\label{eq:sec5-mid2}.
\end{align}
By condensing notations
\begin{align}\label{eq:sec5-mid3}
\sum_{\hat{a}}\pi_{\hat{s}\hat{a}} p_{\hat{s}\hat{a}\hat{s}'}=p^{\bm{\pi}}_{\hat{s}\hat{s}'}(1),\quad
    \sum_{\hat{s}'}p^{\bm{\pi}}_{\hat{s}\hat{s}'}(t-1)\cdot\sum_{\hat{a}}\pi_{\hat{s}'\hat{a}} p_{\hat{s}'\hat{a}\hat{s}''} = p^{\bm{\pi}}_{\hat{s}\hat{s}''}(t),
\end{align}
we notice that for $\hat{s}\neq s$,
\begin{align*}
     \frac{\partial v^{\bm{\pi},\bm{p}}_{\hat{s}}}{\partial p_{sas'}}\Big\vert_{\hat{s}\neq s} &= \gamma\sum_{\hat{s}' \neq s } p^{\bm{\pi}}_{\hat{s}\hat{s}'}(1) \frac{\partial v^{\bm{\pi},\bm{p}}_{\hat{s}'}}{\partial p_{sas'}} + \gamma\sum_{\hat{s}' = s } p^{\bm{\pi}}_{\hat{s}\hat{s}'}(1) \frac{\partial v^{\bm{\pi},\bm{p}}_{\hat{s}'}}{\partial p_{sas'}}\\
     &= \gamma\sum_{\hat{s}' \neq s } p^{\bm{\pi}}_{\hat{s}\hat{s}'}(1) \cdot \gamma\sum_{\hat{a}}\pi_{\hat{s}'\hat{a}}\sum_{\hat{s}'' \in \mathcal{S}} p_{\hat{s}'\hat{a}\hat{s}''} \frac{\partial v^{\bm{\pi},\bm{p}}_{\hat{s}''}}{\partial p_{sas'}} \\
     &\;\;\;\;+ \gamma p^{\bm{\pi}}_{\hat{s}s}(1) \cdot \left( \gamma\sum_{\hat{a}}\pi_{s\hat{a}}\sum_{\hat{s}' \in \mathcal{S}} p_{s\hat{a}\hat{s}'} \frac{\partial v^{\bm{\pi},\bm{p}}_{\hat{s}'}}{\partial p_{sas'}} + \pi_{sa}\left(c_{sas'}+\gamma v^{\bm{\pi},\bm{p}}_{s'}\right)\right)\\
     &= \gamma p^{\bm{\pi}}_{\hat{s}s}(1)\pi_{sa}\left(c_{sas'}+\gamma v^{\bm{\pi},\bm{p}}_{s'}\right) + \gamma^{2}\sum_{\hat{s}'} p^{\bm{\pi}}_{\hat{s}\hat{s}'}(2)\frac{\partial v^{\bm{\pi},\bm{p}}_{\hat{s}'}}{\partial p_{sas'}}.
\end{align*}
By recursively making use of~\eqref{eq:sec5-mid1},~\eqref{eq:sec5-mid2}, and~\eqref{eq:sec5-mid3}, we can further obtain that
\begin{equation*}
\frac{\partial v^{\bm{\pi},\bm{p}}_{\hat{s}}}{\partial p_{sas'}}\Big\vert_{\hat{s}\neq s} =   \sum_{t=1}^{\infty}\gamma^{t}p^{\bm{\pi}}_{\hat{s}s}(t)\pi_{sa}\left(c_{sas'}+\gamma v^{\bm{\pi},\bm{p}}_{s'}\right) = \sum_{t=0}^{\infty}\gamma^{t}p^{\bm{\pi}}_{\hat{s}s}(t)\pi_{sa}\left(c_{sas'}+\gamma v^{\bm{\pi},\bm{p}}_{s'}\right)  
\end{equation*}
The last equality is from the initial assumption $\hat{s} \neq s$, \ie, $p^{\bm{\pi}}_{\hat{s}s}(0)=0$. Similarly, for the case, $\hat{s}= s$ we have
\begin{equation*}
    \frac{\partial v^{\bm{\pi},\bm{p}}_{\hat{s}}}{\partial p_{sas'}}\Big\vert_{\hat{s}= s} = \sum_{t=0}^{\infty}\gamma^{t}p^{\bm{\pi}}_{ss}(t)\pi_{sa}\left(c_{sas'}+\gamma v^{\bm{\pi},\bm{p}}_{s'}\right).
\end{equation*}
Hence, the partial derivative for transition probability is obtained
\begin{align*}
    \frac{\partial J_{\bm{\rho}}(\bm{\pi},\bm{p})}{\partial p_{sas'}}  &= \frac{1}{1-\gamma}\left(\underbrace{(1-\gamma) \sum_{\hat{s}\in\mathcal{S}}\sum_{t=0}^{\infty} \gamma^{t} \rho_{\hat{s}}p^{\bm{\pi}}_{\hat{s}s}(t)}_{d_{\bm{\rho}}^{\bm{\pi},\bm{p}}(s)}\right)\pi_{sa}\left(c_{sas'}+\gamma v^{\bm{\pi},\bm{p}}_{s'}\right) \\
    &= \frac{1}{1-\gamma}d_{\bm{\rho}}^{\bm{\pi},\bm{p}}(s)\pi_{sa}\left(c_{sas'}+\gamma v^{\bm{\pi},\bm{p}}_{s'}\right).
\end{align*}
\end{proof}
Inspired by~\cite{wang2023policy}, the mirror ascent update can be simplified by incorporating different rectangularity conditions with appropriate weighted Bregman divergence $D(\bm{p},\bm{p}_{t})$.  Specifically, while introducing $g^{\bm{\pi},\bm{p}}_{sas'}:=c_{sas'}+\gamma v^{\bm{\pi},\bm{p}}_{s'}$, which is formally defined as the \emph{action-next-state value function} in~\cite{li2023policy}, we define the transition gradient update for $(s,a)$-rectangular RMDPs as
\begin{align*}
\bm{p}_{t+1} ~=~ \mathop{\arg\max}\limits_ {\bm{p}\in\mathcal{P}}\left\{\beta_{t}\langle \nabla_{\bm{p}}J_{\bm{\rho}}(\bm{\pi},\bm{p}_{t}), \bm{p}\rangle - \frac{1}{1-\gamma}B_{\langle\bm{\pi},\bm{d}^{\bm{\pi},\bm{p}_{t}}_{\bm{\rho}}\rangle}(\bm{p},\bm{p}_{t})\right\}.
\end{align*}
Here, $B_{\langle\bm{\pi},\bm{d}^{\bm{\pi},\bm{p}_{t}}_{\bm{\rho}}\rangle}(\bm{p},\bm{p}_{t})$ is a special weighted Bregman divergence, defined as
\begin{equation*}
B_{\langle\bm{\pi},\bm{d}^{\bm{\pi},\bm{p}_{t}}_{\bm{\rho}}\rangle}(\bm{p},\bm{p}_{t}) ~:=~ \sum_{s\in\mathcal{S}}d^{\bm{\pi},\bm{p}_{t}}_{\bm{\rho}}(s)\sum_{a\in\mathcal{A}}\pi_{sa}B (\bm{p}_{sa},\bm{p}_{t,sa}),  
\end{equation*}
where $B (\bm{p}_{sa},\bm{p}_{t,sa})$ is the Bregman divergence between $\bm{p}_{sa}$ and $\bm{p}_{t,sa}$, defined in~\eqref{def:bregman}.
Then, we can further decouple the above update across state-action pairs and represent it as
\begin{equation*}
    \bm{p}_{t+1,sa} ~=~ \mathop{\arg\max}\limits_ {\bm{p}_{sa}\in\mathcal{P}_{sa}}\left\{\beta_{t}\langle \bm{g}^{\bm{\pi},\bm{p}_{t}}_{sa}, \bm{p}_{sa}\rangle -B(\bm{p}_{sa},\bm{p}_{t,sa})\right\},\quad\forall s\in\mathcal{S},a\in\mathcal{A}.
\end{equation*}
Similarly, for $s$-rectangular RMDPs, the inner update is defined as
\begin{align*}
\bm{p}_{t+1} ~=~ \mathop{\arg\max}\limits_ {\bm{p}\in\mathcal{P}}\left\{\beta_{t}\langle \nabla_{\bm{p}}J_{\bm{\rho}}(\bm{\pi},\bm{p}_{t}), \bm{p}\rangle - \frac{1}{1-\gamma}B_{\bm{d}^{\bm{\pi},\bm{p}_{t}}_{\bm{\rho}}}(\bm{p},\bm{p}_{t})\right\},
\end{align*}
where
\begin{equation*}
B_{\bm{d}^{\bm{\pi},\bm{p}_{t}}_{\bm{\rho}}}(\bm{p},\bm{p}_{t}) ~=~ \sum_{s\in\mathcal{S}}d^{\bm{\pi},\bm{p}_{t}}_{\bm{\rho}}(s)B (\bm{p}_{s},\bm{p}_{t,s}).  
\end{equation*}
The above update can be decoupled across different states as
\begin{equation*}
    \bm{p}_{t+1,s} ~=~ \mathop{\arg\max}\limits_ {\bm{p}_{s}\in\mathcal{P}_{s}}\left\{\beta_{t}\langle \hat{\bm{g}}^{\bm{\pi},\bm{p}_{t}}_{s}, \bm{p}_{s}\rangle -B(\bm{p}_{s},\bm{p}_{t,s})\right\},\quad\forall s\in\mathcal{S},
\end{equation*}
where
\begin{equation*}
\hat{\bm{g}}^{\bm{\pi},\bm{p}_{t}}_{s} ~:=~ \left(\pi_{sa}\bm{g}^{\bm{\pi},\bm{p}_{t}}_{sa}\right)_{a\in\mathcal{A}}\subseteq \mathbb{R}^{S\times A}.    
\end{equation*}
We now discuss the convergence behavior of Algorithm~\ref{alg:transition_gradient}. Before we proceed to establish some general convergence properties of TMA, we first introduce the following lemmas, which can be viewed as variants of the celebrated performance difference lemma for non-robust MDPs.
\begin{lemma}[First performance difference lemma across transition kernels]\label{lem:1st-diff-perf-inner}
    For any transition kernels $\bm{p},\bm{p}'\in\mathcal{P}$ and policy $\bm{\pi}\in\Pi$, we have
    \begin{equation*}
J_{\bm{\rho}}(\bm{\pi},\bm{p})-J_{\bm{\rho}}(\bm{\pi},\bm{p}') =  \frac{1}{1-\gamma} \sum_{s\in\mathcal{S}} d_{\bm{\rho}}^{\bm{\pi},\bm{p}'}(s) \left(\sum_{a\in\mathcal{A}}\pi_{sa}\sum_{s'}\left(p_{sas'}-p'_{sas'}\right)g^{\bm{\pi},\bm{p}}_{sas'}\right).      
    \end{equation*}
\end{lemma}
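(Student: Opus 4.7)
The plan is to proceed by a standard telescoping argument, analogous to the classical performance difference lemma but with $\bm{p}$ and $\bm{p}'$ playing the roles that $\bm{\pi}$ and $\bm{\pi}'$ play in the policy version. First I would fix $s\in\mathcal{S}$ and write the Bellman equation for both transition kernels:
\begin{equation*}
v^{\bm{\pi},\bm{p}}_s = \sum_{a\in\mathcal{A}}\pi_{sa}\sum_{s'}p_{sas'}\bigl(c_{sas'}+\gamma v^{\bm{\pi},\bm{p}}_{s'}\bigr),\qquad v^{\bm{\pi},\bm{p}'}_s = \sum_{a\in\mathcal{A}}\pi_{sa}\sum_{s'}p'_{sas'}\bigl(c_{sas'}+\gamma v^{\bm{\pi},\bm{p}'}_{s'}\bigr).
\end{equation*}
Subtracting these and then adding and subtracting $p'_{sas'}g^{\bm{\pi},\bm{p}}_{sas'}$ inside the sum, the cross-term collapses using $g^{\bm{\pi},\bm{p}}_{sas'}-g^{\bm{\pi},\bm{p}'}_{sas'}=\gamma(v^{\bm{\pi},\bm{p}}_{s'}-v^{\bm{\pi},\bm{p}'}_{s'})$, yielding the recursion
\begin{equation*}
v^{\bm{\pi},\bm{p}}_s - v^{\bm{\pi},\bm{p}'}_s \;=\; \underbrace{\sum_{a\in\mathcal{A}}\pi_{sa}\sum_{s'}(p_{sas'}-p'_{sas'})g^{\bm{\pi},\bm{p}}_{sas'}}_{=:\,A_s} \;+\;\gamma\sum_{a\in\mathcal{A}}\pi_{sa}\sum_{s'}p'_{sas'}\bigl(v^{\bm{\pi},\bm{p}}_{s'}-v^{\bm{\pi},\bm{p}'}_{s'}\bigr).
\end{equation*}

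Next I would unroll this recursion. Since it has the form $\delta_s = A_s + \gamma (\mathbf{P}_{\bm{\pi},\bm{p}'}\bm{\delta})_s$, where $\mathbf{P}_{\bm{\pi},\bm{p}'}$ is the induced Markov transition matrix under policy $\bm{\pi}$ and kernel $\bm{p}'$, iterating gives
\begin{equation*}
v^{\bm{\pi},\bm{p}}_s - v^{\bm{\pi},\bm{p}'}_s \;=\; \sum_{t=0}^{\infty}\gamma^{t}\,\mathbb{E}_{\bm{\pi},\bm{p}',\tilde{s}_0=s}[A_{\tilde{s}_t}].
\end{equation*}
Taking expectation over $\tilde{s}_0\sim\bm{\rho}$ and swapping sums, the weight accumulated on each state $s$ is precisely $\sum_t \gamma^t\mathbb{E}_{\bm{\pi},\bm{p}',\tilde{s}_0\sim\bm{\rho}}[\mathbf{1}\{\tilde{s}_t=s\}] = d^{\bm{\pi},\bm{p}'}_{\bm{\rho}}(s)/(1-\gamma)$ by the definition in~\eqref{def:occu}. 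Substituting back the expression for $A_s$ yields exactly the claimed identity.

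The argument is essentially routine once the right add-and-subtract trick is spotted, so I do not foresee any real obstacle; the only thing to be careful about is ensuring that the kernel appearing in the occupancy measure is $\bm{p}'$ (matching the one used in the telescoping recursion) while the action-next-state value function retains the kernel $\bm{p}$ throughout. This asymmetry is the entire content of the lemma and is automatic from the add-and-subtract choice made above.
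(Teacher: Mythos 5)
Your proposal is correct and follows essentially the same route as the paper: the same add-and-subtract of $p'_{sas'}g^{\bm{\pi},\bm{p}}_{sas'}$ in the Bellman difference, the same recursion under $\bm{p}'$ unrolled into a discounted sum, and the same identification of the accumulated weights with $d^{\bm{\pi},\bm{p}'}_{\bm{\rho}}(s)/(1-\gamma)$ via~\eqref{def:occu}. The only difference is presentational (expectation/operator notation versus the paper's explicit $t$-step probabilities $p'^{\bm{\pi}}_{ss'}(t)$).
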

\begin{proof}
    According to the definition of the value function, we have
\begin{align*} 
&v^{\bm{\pi},\bm{p}}_{s}-v^{\bm{\pi},\bm{p}'}_{s} \\
&= v^{\bm{\pi},\bm{p}}_{s}
-  \sum_{a\in\mathcal{A}}\pi_{sa}\sum_{s'}p'_{sas'}\left(c_{sas'}+\gamma v^{\bm{\pi},\bm{p}}_{s'}\right)
+\sum_{a\in\mathcal{A}}\pi_{sa}\sum_{s'}p'_{sas'}\left(c_{sas'}+\gamma v^{\bm{\pi},\bm{p}}_{s'}\right)-
v^{\bm{\pi},\bm{p}'}_{s}\\
&= \sum_{a\in\mathcal{A}}\pi_{sa}\sum_{s'}\left(p_{sas'}-p'_{sas'}\right)\left(c_{sas'}+\gamma v^{\bm{\pi},\bm{p}}_{s'}\right) + \gamma\sum_{a\in\mathcal{A}}\pi_{sa}\sum_{s'}p'_{sas'}\left(v^{\bm{\pi},\bm{p}}_{s'}-v^{\bm{\pi},\bm{p}'}_{s'}\right)\\
&= \sum_{t=0}^{\infty}\gamma^{t}\sum_{s'}p'^{\bm{\pi}}_{ss'}(t)\left(\sum_{a'}\pi_{s'a'}\sum_{s''}\left(p_{s'a's''}-p'_{s'a's''}\right)\left(c_{s'a's''}+\gamma v^{\bm{\pi},\bm{p}}_{s''}\right)\right).
\end{align*}
Then, we can obtain that
\begin{align*}
    J_{\bm{\rho}}(\bm{\pi},\bm{p})-J_{\bm{\rho}}(\bm{\pi},\bm{p}') &= \sum_{s\in\mathcal{S}}\rho_{s}\left( v^{\bm{\pi},\bm{p}}_{s}-v^{\bm{\pi},\bm{p}'}_{s}\right)\\
    &= \sum_{s\in\mathcal{S}}\rho_{s}\sum_{t=0}^{\infty}\gamma^{t}\sum_{s'}p'^{\bm{\pi}}_{ss'}(t)\left(\sum_{a'}\pi_{s'a'}\sum_{s''}\left(p_{s'a's''}-p'_{s'a's''}\right)\left(c_{s'a's''}+\gamma v^{\bm{\pi},\bm{p}}_{s''}\right)\right)\\
    &= \frac{1}{1-\gamma} \sum_{s\in\mathcal{S}} d_{\bm{\rho}}^{\bm{\pi},\bm{p}'}(s) \left(\sum_{a\in\mathcal{A}}\pi_{sa}\sum_{s'}\left(p_{sas'}-p'_{sas'}\right)\left(c_{sas'}+\gamma v^{\bm{\pi},\bm{p}}_{s'}\right)\right).
\end{align*}    
\end{proof}
\begin{lemma}[Second performance difference lemma across transition kernels]\label{lem:2nd-diff-perf-inner}
    For any transition kernels $\bm{p},\bm{p}'\in\mathcal{P}$ and policy $\bm{\pi}\in\Pi$, we have   \begin{equation*}
        J_{\bm{\rho}}(\bm{\pi},\bm{p})-J_{\bm{\rho}}(\bm{\pi},\bm{p}') = \frac{1}{1-\gamma} \sum_{s\in\mathcal{S}} d_{\bm{\rho}}^{\bm{\pi},\bm{p}}(s) \left(\sum_{a\in\mathcal{A}}\pi_{sa}\sum_{s'}\left(p_{sas'}-p'_{sas'}\right)g^{\bm{\pi},\bm{p}'}_{sas'}\right)
    \end{equation*}
\end{lemma}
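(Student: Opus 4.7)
The plan is to mirror the proof of Lemma~\ref{lem:1st-diff-perf-inner} almost verbatim, but swap which kernel is ``added and subtracted'' in the Bellman expansion so that the recursion proceeds under $\bm{p}$ rather than $\bm{p}'$. Concretely, I would start from
\begin{align*}
v^{\bm{\pi},\bm{p}}_{s}-v^{\bm{\pi},\bm{p}'}_{s}
&= \sum_{a}\pi_{sa}\sum_{s'}p_{sas'}\bigl(c_{sas'}+\gamma v^{\bm{\pi},\bm{p}}_{s'}\bigr) - \sum_{a}\pi_{sa}\sum_{s'}p'_{sas'}\bigl(c_{sas'}+\gamma v^{\bm{\pi},\bm{p}'}_{s'}\bigr),
\end{align*}
and insert $\pm\sum_{a}\pi_{sa}\sum_{s'}p_{sas'}\bigl(c_{sas'}+\gamma v^{\bm{\pi},\bm{p}'}_{s'}\bigr)$. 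This is the mirror of what was done in Lemma~\ref{lem:1st-diff-perf-inner}: the $(p_{sas'}-p'_{sas'})$ factor now multiplies $g^{\bm{\pi},\bm{p}'}_{sas'}$, while the residual term becomes $\gamma\sum_{a}\pi_{sa}\sum_{s'}p_{sas'}\bigl(v^{\bm{\pi},\bm{p}}_{s'}-v^{\bm{\pi},\bm{p}'}_{s'}\bigr)$, so the same difference appears on both sides and can be expanded recursively, this time with $\bm{p}$-transitions rather than $\bm{p}'$-transitions.

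Unrolling the recursion and using the notation of~\eqref{eq:sec5-mid3} (now with $\bm{p}$ in place of $\bm{p}'$), I obtain
\[
v^{\bm{\pi},\bm{p}}_{s}-v^{\bm{\pi},\bm{p}'}_{s} = \sum_{t=0}^{\infty}\gamma^{t}\sum_{s'}p^{\bm{\pi}}_{ss'}(t)\sum_{a}\pi_{s'a}\sum_{s''}\bigl(p_{s'as''}-p'_{s'as''}\bigr)g^{\bm{\pi},\bm{p}'}_{s'as''},
\]
where $p^{\bm{\pi}}_{ss'}(t)$ denotes the $t$-step transition probability induced by $\bm{\pi}$ and $\bm{p}$. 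Finally, multiplying by $\rho_{s}$ and summing over $s$, the prefactor collapses via the definition~\eqref{def:occu} into $\tfrac{1}{1-\gamma}d_{\bm{\rho}}^{\bm{\pi},\bm{p}}(s')$, which immediately yields the claimed identity.

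There is essentially no analytical obstacle: the argument is structurally identical to Lemma~\ref{lem:1st-diff-perf-inner}, and the only care required is bookkeeping. In particular, I must be vigilant that after the add-and-subtract step the recursion is driven by $\bm{p}$ (not $\bm{p}'$), so that the occupancy measure appearing in the final expression is $\bm{d}_{\bm{\rho}}^{\bm{\pi},\bm{p}}$ while the $g$-factor is evaluated at $\bm{p}'$. This asymmetry is exactly what makes the lemma a useful companion to Lemma~\ref{lem:1st-diff-perf-inner}, since it allows one to localise the effect of a kernel perturbation either at the occupancy side or at the value side of the product depending on which form is needed in the subsequent TMA analysis.
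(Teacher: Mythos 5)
Your proposal is correct and follows essentially the same route as the paper's proof: add and subtract the mixed backup $\sum_{a}\pi_{sa}\sum_{s'}p_{sas'}\bigl(c_{sas'}+\gamma v^{\bm{\pi},\bm{p}'}_{s'}\bigr)$, so the $(p-p')$ factor multiplies $g^{\bm{\pi},\bm{p}'}$ while the residual recursion is driven by $\bm{p}$, then unroll and collapse the $\gamma$-weighted $t$-step probabilities into $\tfrac{1}{1-\gamma}\bm{d}_{\bm{\rho}}^{\bm{\pi},\bm{p}}$. Your bookkeeping of which kernel appears in the occupancy measure versus the $g$-factor matches the paper exactly.
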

\begin{proof}
    According to the definition of the value function, we have
\begin{align*} 
&v^{\bm{\pi},\bm{p}}_{s}-v^{\bm{\pi},\bm{p}'}_{s} \\
&= v^{\bm{\pi},\bm{p}}_{s}
-  \sum_{a\in\mathcal{A}}\pi_{sa}\sum_{s'}p_{sas'}\left(c_{sas'}+\gamma v^{\bm{\pi},\bm{p}'}_{s'}\right)
+\sum_{a\in\mathcal{A}}\pi_{sa}\sum_{s'}p_{sas'}\left(c_{sas'}+\gamma v^{\bm{\pi},\bm{p}'}_{s'}\right)-
v^{\bm{\pi},\bm{p}'}_{s}\\
&= \sum_{a\in\mathcal{A}}\pi_{sa}\sum_{s'}\left(p_{sas'}-p'_{sas'}\right)\left(c_{sas'}+\gamma v^{\bm{\pi},\bm{p}'}_{s'}\right) + \gamma\sum_{a\in\mathcal{A}}\pi_{sa}\sum_{s'}p_{sas'}\left(v^{\bm{\pi},\bm{p}}_{s'}-v^{\bm{\pi},\bm{p}'}_{s'}\right)\\
&= \sum_{t=0}^{\infty}\gamma^{t}\sum_{s'}p^{\bm{\pi}}_{ss'}(t)\left(\sum_{a'}\pi_{s'a'}\sum_{s''}\left(p_{s'a's''}-p'_{s'a's''}\right)\left(c_{s'a's''}+\gamma v^{\bm{\pi},\bm{p}'}_{s''}\right)\right).
\end{align*}
Then, we can obtain that
\begin{align*}
    J_{\bm{\rho}}(\bm{\pi},\bm{p})-J_{\bm{\rho}}(\bm{\pi},\bm{p}') &= \sum_{s\in\mathcal{S}}\rho_{s}\left( v^{\bm{\pi},\bm{p}}_{s}-v^{\bm{\pi},\bm{p}'}_{s}\right)\\
    &= \sum_{s\in\mathcal{S}}\rho_{s}\sum_{t=0}^{\infty}\gamma^{t}\sum_{s'}p^{\bm{\pi}}_{ss'}(t)\left(\sum_{a'}\pi_{s'a'}\sum_{s''}\left(p_{s'a's''}-p'_{s'a's''}\right)\left(c_{s'a's''}+\gamma v^{\bm{\pi},\bm{p}'}_{s''}\right)\right)\\
    &= \frac{1}{1-\gamma} \sum_{s\in\mathcal{S}} d_{\bm{\rho}}^{\bm{\pi},\bm{p}}(s) \left(\sum_{a\in\mathcal{A}}\pi_{sa}\sum_{s'}\left(p_{sas'}-p'_{sas'}\right)\left(c_{sas'}+\gamma v^{\bm{\pi},\bm{p}'}_{s'}\right)\right).
\end{align*}
\end{proof}
We note that the analysis for both $(s,a)$-rectangular and $s$-rectangular RMDPs are similar, so we only discuss the convergence guarantee for more general $s$-rectangular RMDPs in this paper. We first apply the idea from Lemma~\ref{lem:3point} to estimate the difference between each transition update generated by TMA and prove the following ascent property of TMA.
\begin{lemma}[Ascent property of TMA]\label{lem:3point_inner}
For any $\bm{y}\in\mathcal{P}_{s}$ and any $s\in\mathcal{S}$, we have
\begin{equation}\label{eq:3-point-ascent}
\beta_{t}\langle \hat{\bm{g}}^{\bm{\pi},\bm{p}_{t}}_{s},\bm{y}- \bm{p}_{t+1,s}\rangle + B(\bm{p}_{t+1,s},\bm{p}_{t,s}) \leq B(\bm{y},\bm{p}_{t,s}) - B(\bm{y},\bm{p}_{t+1,s}), 
\end{equation}
and for any $\bm{\rho}\in\Delta^{S}$,
\begin{equation*}
    J_{\bm{\rho}}(\bm{\pi},\bm{p}_{t})\leq J_{\bm{\rho}}(\bm{\pi},\bm{p}_{t+1}),\quad \forall t\geq0.
\end{equation*}
\end{lemma}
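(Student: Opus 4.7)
The statement has two parts, and my plan is to treat them in the stated order. For the three-point inequality \eqref{eq:3-point-ascent}, I would mirror the proof of Lemma \ref{lem:3point}, flipping signs because the inner loop is a maximization. Specifically, I would rewrite the decoupled update
\begin{equation*}
\bm{p}_{t+1,s} \in \argmax_{\bm{p}_{s}\in\mathcal{P}_{s}}\left\{\beta_{t}\langle \hat{\bm{g}}^{\bm{\pi},\bm{p}_{t}}_{s},\bm{p}_{s}\rangle - B(\bm{p}_{s},\bm{p}_{t,s})\right\}
\end{equation*}
as an unconstrained minimization of $-\beta_{t}\langle \hat{\bm{g}}^{\bm{\pi},\bm{p}_{t}}_{s}, \bm{z}\rangle + \mathbb{I}_{\mathcal{P}_{s}}(\bm{z}) + B(\bm{z},\bm{p}_{t,s})$ over $\bm{z}\in\mathbb{R}^{S}$, and then invoke the subdifferential calculus of \cite{rockafellar2009variational} to obtain the inclusion $0\in -\beta_{t}\hat{\bm{g}}^{\bm{\pi},\bm{p}_{t}}_{s} + \mathcal{N}_{\mathcal{P}_{s}}(\bm{p}_{t+1,s}) + \nabla_{\bm{z}}B(\bm{z},\bm{p}_{t,s})\big|_{\bm{z}=\bm{p}_{t+1,s}}$. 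Since $\bm{y}\in\mathcal{P}_{s}$, the normal-cone term contributes a nonpositive inner product, yielding $\beta_{t}\langle \hat{\bm{g}}^{\bm{\pi},\bm{p}_{t}}_{s}, \bm{y}-\bm{p}_{t+1,s}\rangle \le \langle \nabla_{\bm{z}}B(\bm{z},\bm{p}_{t,s})|_{\bm{z}=\bm{p}_{t+1,s}},\bm{y}-\bm{p}_{t+1,s}\rangle$. Combining this with the standard three-term Bregman identity $B(\bm{y},\bm{p}_{t,s}) - B(\bm{p}_{t+1,s},\bm{p}_{t,s}) - B(\bm{y},\bm{p}_{t+1,s}) = \langle \nabla h(\bm{p}_{t+1,s})-\nabla h(\bm{p}_{t,s}),\bm{y}-\bm{p}_{t+1,s}\rangle$ delivers \eqref{eq:3-point-ascent}.

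For the monotone ascent claim, my plan is to specialize \eqref{eq:3-point-ascent} to $\bm{y}=\bm{p}_{t,s}$, giving $\beta_{t}\langle \hat{\bm{g}}^{\bm{\pi},\bm{p}_{t}}_{s},\bm{p}_{t,s}-\bm{p}_{t+1,s}\rangle \le -B(\bm{p}_{t+1,s},\bm{p}_{t,s}) - B(\bm{p}_{t,s},\bm{p}_{t+1,s}) \le 0$ by nonnegativity of Bregman divergences. Since $\beta_{t}>0$, this yields $\langle \hat{\bm{g}}^{\bm{\pi},\bm{p}_{t}}_{s},\bm{p}_{t+1,s}-\bm{p}_{t,s}\rangle \ge 0$ for every $s$, which unpacks by the definition of $\hat{\bm{g}}^{\bm{\pi},\bm{p}_{t}}_{s} = (\pi_{sa}\bm{g}^{\bm{\pi},\bm{p}_{t}}_{sa})_{a\in\mathcal{A}}$ into
\begin{equation*}
\sum_{a\in\mathcal{A}}\pi_{sa}\sum_{s'}(p_{t+1,sas'}-p_{t,sas'})\,g^{\bm{\pi},\bm{p}_{t}}_{sas'} \;\ge\; 0, \quad \forall s\in\mathcal{S}.
\end{equation*}

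The final step is to transfer this state-wise nonnegativity into a global ascent of $J_{\bm{\rho}}$. For this I would invoke Lemma \ref{lem:2nd-diff-perf-inner} (the second performance difference lemma) with $\bm{p}=\bm{p}_{t+1}$ and $\bm{p}'=\bm{p}_{t}$, which expresses $J_{\bm{\rho}}(\bm{\pi},\bm{p}_{t+1})-J_{\bm{\rho}}(\bm{\pi},\bm{p}_{t})$ as a $\bm{d}_{\bm{\rho}}^{\bm{\pi},\bm{p}_{t+1}}$-weighted sum of exactly the state-wise quantities shown to be nonnegative. Since $d_{\bm{\rho}}^{\bm{\pi},\bm{p}_{t+1}}(s)\ge 0$ for every $s$, the result $J_{\bm{\rho}}(\bm{\pi},\bm{p}_{t})\le J_{\bm{\rho}}(\bm{\pi},\bm{p}_{t+1})$ follows immediately.

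The only subtlety I anticipate is the need to pair the three-point inequality (which produces $\bm{g}^{\bm{\pi},\bm{p}_{t}}_{sas'}$ on the left-hand side) with the \emph{correct} performance-difference identity, namely Lemma \ref{lem:2nd-diff-perf-inner} rather than Lemma \ref{lem:1st-diff-perf-inner}; the first lemma would place $\bm{g}^{\bm{\pi},\bm{p}_{t+1}}$ inside the sum, which is not directly controlled by the optimality condition of the update at iteration $t$. Apart from this pairing, the argument is essentially a mirror image of the descent analysis already carried out for Lemma \ref{lem:3point}.
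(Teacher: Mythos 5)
Your proposal is correct and follows essentially the same route as the paper: the same indicator-function reformulation, normal-cone optimality condition, and Bregman three-point identity for \eqref{eq:3-point-ascent}, then the choice $\bm{y}=\bm{p}_{t,s}$ and a nonnegatively weighted performance-difference identity for the monotonicity claim. The one distinction you flag is immaterial: the paper invokes Lemma~\ref{lem:1st-diff-perf-inner} with arguments $(\bm{p}_{t},\bm{p}_{t+1})$, which produces exactly the same expression—weights $d_{\bm{\rho}}^{\bm{\pi},\bm{p}_{t+1}}(s)$ and values $g^{\bm{\pi},\bm{p}_{t}}_{sas'}$—as your application of Lemma~\ref{lem:2nd-diff-perf-inner} with $(\bm{p}_{t+1},\bm{p}_{t})$, since the two lemmas are the same identity with the roles of the kernels exchanged.
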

\begin{proof}
    For any $s\in\mathcal{S}$, by the definition of $\bm{p}_{t+1,s}$, we know that
\begin{align*}
    \bm{p}_{t+1,s}
    &=\mathop{\arg\max}\limits_ {\bm{p}_{s}\in\mathcal{P}_{s}}\left\{ \beta_{t}\langle \hat{\bm{g}}^{\bm{\pi},\bm{p}_{t}}_{s}, \bm{p}_{s}\rangle -B(\bm{p}_{s},\bm{p}_{t,s})\right\}\\
    &= \mathop{\arg\min}\limits_ {\bm{p}_{s}\in\mathcal{P}_{s}}\left\{ -\beta_{t}\langle \hat{\bm{g}}^{\bm{\pi},\bm{p}_{t}}_{s}, \bm{p}_{s}\rangle + B(\bm{p}_{s},\bm{p}_{t,s})\right\}\\
    &= \mathop{\arg\min}\limits_ {\bm{z}\in\mathbb{R}^{S\times A}}\left\{ -\beta_{t}\langle \hat{\bm{g}}^{\bm{\pi},\bm{p}_{t}}_{s}, \bm{z}\rangle + \mathbb{I}_{\mathcal{P}_{s}}(\bm{z})+B(\bm{z},\bm{p}_{t,s})\right\}.
\end{align*}
From the optimality condition that $\bm{p}_{t+1,s}$ satisfies, we have
\begin{align*}
    0 &\in \partial\left\{-\beta_{t}\langle \hat{\bm{g}}^{\bm{\pi},\bm{p}_{t}}_{s}, \bm{z}\rangle + \mathbb{I}_{\mathcal{P}_{s}}(\bm{z})+B(\bm{z},\bm{p}_{t,s})\right\}\Big\vert_{\bm{z}= \bm{p}_{t+1,s}}\\
\Longleftrightarrow\;\;\;\; 0 &\in -\beta_{t} \hat{\bm{g}}^{\bm{\pi},\bm{p}_{t}}_{s} +  \mathcal{N}_{\mathcal{P}_{s}}(\bm{p}_{t+1,s})+  \nabla_{\bm{z}}B(\bm{z},\bm{p}_{t,s})\Big\vert_{\bm{z}= \bm{p}_{t+1,s}}.
\end{align*}
Then, the above equation implies that, for any $\bm{y}\in\mathcal{P}_{s}$,
\begin{align}\label{eqsec5:mid}
\beta_{t}\langle \hat{\bm{g}}^{\bm{\pi},\bm{p}_{t}}_{s},\bm{p}_{t+1,s}-\bm{y}\rangle + \langle\nabla_{\bm{z}}B(\bm{z},\bm{p}_{t,s})\Big\vert_{\bm{z}= \bm{p}_{t+1,s}},\bm{y}-\bm{p}_{t+1,s}\rangle\geq0.
\end{align}
From the definition of Bregman divergence~\eqref{def:bregman}, we have
\begin{align*}
\nabla_{\bm{z}}B(\bm{z},\bm{p}_{t,s})= \nabla h(\bm{z}) - \nabla h(\bm{p}_{t,s}), 
\end{align*}
Thus, we have the following identity
\begin{align*}
B(\bm{y},\bm{p}_{t,s}) - B(\bm{p}_{t+1,s},\bm{p}_{t,s}) - B(\bm{y},\bm{p}_{t+1,s})=\langle\nabla_{\bm{z}}B(\bm{z},\bm{p}_{t,s})\Big\vert_{\bm{z}= \bm{p}_{t+1,s}}, \bm{y}-\bm{p}_{t+1,s}\rangle,
\end{align*}
We combine this identity with~\eqref{eqsec5:mid} and obtain the desired equation~\eqref{eq:3-point-ascent}. While taking $\bm{y} = \bm{p}_{t,s}$ in~\eqref{eq:3-point-ascent}, we have
\begin{equation*}
\beta_{t}\langle \hat{\bm{g}}^{\bm{\pi},\bm{p}_{t}}_{s},\bm{p}_{t,s}- \bm{p}_{t+1,s}\rangle \leq- B(\bm{p}_{t+1,s},\bm{p}_{t,s}) - B(\bm{p}_{t,s},\bm{p}_{t+1,s})  \leq0.  
\end{equation*}
By applying the first performance difference lemma (Lemma~\ref{lem:1st-diff-perf-inner}), we have
\begin{equation*}
   J_{\bm{\rho}}(\bm{\pi},\bm{p}_{t})-J_{\bm{\rho}}(\bm{\pi},\bm{p}_{t+1}) = \mathbb{E}_{s\sim \bm{d}^{\bm{\pi},\bm{p}_{t+1}}_{\bm{\rho}}}\left[\langle \hat{\bm{g}}^{\bm{\pi},\bm{p}_{t}}_{s},\bm{p}_{t,s}- \bm{p}_{t+1,s}\rangle\right] \leq 0.
\end{equation*}
\end{proof}
We notice that the inner problem can be interpreted as having an adversarial nature to maximize the total reward (decision maker's cost) by selecting a proper transition kernel from the ambiguity set $\mathcal{P}$~\citep{lim2013reinforcement,goyal2022robust}. Hence, we leverage the idea from the convergence analysis of the classical policy mirror descent~\citep{xiao2022convergence} and derive our global convergence guarantee in the following theorem.
\begin{theorem}\label{the:inner-optimal-sublinear}
    Consider the transition mirror ascent method with a fixed policy $\bm{\pi}\in\Pi$ and constant step size $\beta_{t}=\beta\geq 0$ for all $t\geq0$. For any $\bm{\rho}\in\Delta^{S}$, we have for each $t\geq 0$,
\begin{align*}
J_{\bm{\rho}}(\bm{\pi},\bm{p}^{\star})-J_{\bm{\rho}}(\bm{\pi},\bm{p}_{t})\leq \frac{1}{t}\left(\frac{M}{(1-\gamma)^{2}}+ \frac{1}{\beta(1-\gamma)}B_{\bm{d}_{\bm{\rho}}^{\bm{\pi},\bm{p}^{\star}}}(\bm{p}^{\star}_{s},\bm{p}_{0,s})\right)    
\end{align*}
\end{theorem}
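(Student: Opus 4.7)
The plan is to mirror the structure of the Theorem~\ref{the:sublinear-dir-para} argument, combining the performance difference identity (Lemma~\ref{lem:2nd-diff-perf-inner}) with the three-point inequality for mirror ascent (Lemma~\ref{lem:3point_inner}), then handling the residual inner product through a change of measure that telescopes into a bound on the value improvement. Throughout, set $\Delta_t := J_{\bm{\rho}}(\bm{\pi},\bm{p}^{\star}) - J_{\bm{\rho}}(\bm{\pi},\bm{p}_t)$. By Lemma~\ref{lem:2nd-diff-perf-inner} with $(\bm{p},\bm{p}') = (\bm{p}^{\star},\bm{p}_t)$ I would write $\Delta_t = \frac{1}{1-\gamma}\sum_s d^{\bm{\pi},\bm{p}^{\star}}_{\bm{\rho}}(s)\langle \hat{\bm{g}}^{\bm{\pi},\bm{p}_t}_s, \bm{p}^{\star}_s - \bm{p}_{t,s}\rangle$ and split $\bm{p}^{\star}_s - \bm{p}_{t,s} = (\bm{p}^{\star}_s - \bm{p}_{t+1,s}) + (\bm{p}_{t+1,s} - \bm{p}_{t,s})$. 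Applying Lemma~\ref{lem:3point_inner} with $\bm{y} = \bm{p}^{\star}_s$ and dropping the nonnegative term $B(\bm{p}_{t+1,s},\bm{p}_{t,s})$ produces
\begin{equation*}
\Delta_t \leq \frac{1}{\beta(1-\gamma)}\bigl[B_{\bm{d}^{\bm{\pi},\bm{p}^{\star}}_{\bm{\rho}}}(\bm{p}^{\star},\bm{p}_t) - B_{\bm{d}^{\bm{\pi},\bm{p}^{\star}}_{\bm{\rho}}}(\bm{p}^{\star},\bm{p}_{t+1})\bigr] + \frac{1}{1-\gamma}\sum_{s\in\mathcal{S}} d^{\bm{\pi},\bm{p}^{\star}}_{\bm{\rho}}(s)\,\langle \hat{\bm{g}}^{\bm{\pi},\bm{p}_t}_s, \bm{p}_{t+1,s} - \bm{p}_{t,s}\rangle.
\end{equation*}
Since the weighting measure $\bm{d}^{\bm{\pi},\bm{p}^{\star}}_{\bm{\rho}}$ does not depend on $t$, the first bracket telescopes cleanly when summed over $t$.

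The main obstacle is controlling the residual inner product, which does not vanish as the analogous $\langle \bm{e}, \bm{\pi}_{t+1,s}-\bm{\pi}_{t,s}\rangle$ did in the direct-parameterization proof. The key observation I would exploit is that applying Lemma~\ref{lem:3point_inner} a second time with $\bm{y} = \bm{p}_{t,s}$ yields $\langle \hat{\bm{g}}^{\bm{\pi},\bm{p}_t}_s, \bm{p}_{t+1,s} - \bm{p}_{t,s}\rangle \geq 0$ for every $s$. This nonnegativity licenses a change of measure: combining the uniform lower bound $d^{\bm{\pi},\bm{p}}_{\bm{\rho}}(s)\geq (1-\gamma)\rho_s$ with the definition $d^{\bm{\pi},\bm{p}^{\star}}_{\bm{\rho}}(s)/\rho_s\leq M$ gives $d^{\bm{\pi},\bm{p}^{\star}}_{\bm{\rho}}(s) \leq \tfrac{M}{1-\gamma}\, d^{\bm{\pi},\bm{p}_{t+1}}_{\bm{\rho}}(s)$. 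Reweighting the residual by $\bm{d}^{\bm{\pi},\bm{p}_{t+1}}_{\bm{\rho}}$ and reapplying Lemma~\ref{lem:2nd-diff-perf-inner}, now to the pair $(\bm{p}_{t+1},\bm{p}_t)$, identifies the new sum as exactly $(1-\gamma)\bigl(J_{\bm{\rho}}(\bm{\pi},\bm{p}_{t+1}) - J_{\bm{\rho}}(\bm{\pi},\bm{p}_t)\bigr)$.

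Summing the residual bound over $t=0,\dots,T-1$ then telescopes to $M\bigl(J_{\bm{\rho}}(\bm{\pi},\bm{p}_T) - J_{\bm{\rho}}(\bm{\pi},\bm{p}_0)\bigr) \leq M/(1-\gamma)$, where the last step uses the universal value bound $J_{\bm{\rho}} \leq 1/(1-\gamma)$. Combining with the Bregman telescope yields
\begin{equation*}
\sum_{t=0}^{T-1}\Delta_t \leq \frac{B_{\bm{d}^{\bm{\pi},\bm{p}^{\star}}_{\bm{\rho}}}(\bm{p}^{\star},\bm{p}_0)}{\beta(1-\gamma)} + \frac{M}{(1-\gamma)^2}.
\end{equation*}
Finally, the ascent property in Lemma~\ref{lem:3point_inner} gives $J_{\bm{\rho}}(\bm{\pi},\bm{p}_{t}) \leq J_{\bm{\rho}}(\bm{\pi},\bm{p}_{t+1})$, hence $\Delta_{t+1}\leq \Delta_t$, so $T\Delta_T \leq \sum_{t=0}^{T-1}\Delta_t$, and dividing by $T$ delivers the claimed $\mathcal{O}(1/t)$ rate. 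The only delicate step, and the main place where the argument could break without the sign observation from Lemma~\ref{lem:3point_inner}, is the change of measure from $\bm{d}^{\bm{\pi},\bm{p}^{\star}}_{\bm{\rho}}$ to $\bm{d}^{\bm{\pi},\bm{p}_{t+1}}_{\bm{\rho}}$.
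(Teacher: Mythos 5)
Your proof is correct and takes essentially the same route as the paper's: the three-point inequality of Lemma~\ref{lem:3point_inner} with $\bm{y}=\bm{p}^{\star}_{s}$, the two transition performance difference lemmas (your use of Lemma~\ref{lem:2nd-diff-perf-inner} on the pair $(\bm{p}_{t+1},\bm{p}_{t})$ is the same identity the paper extracts from Lemma~\ref{lem:1st-diff-perf-inner}), the sign observation $\langle \hat{\bm{g}}^{\bm{\pi},\bm{p}_{t}}_{s},\bm{p}_{t+1,s}-\bm{p}_{t,s}\rangle\geq 0$ enabling the change of measure via $d^{\bm{\pi},\bm{p}}_{\bm{\rho}}(s)\geq(1-\gamma)\rho_{s}$ and the coefficient $M$, and finally monotonicity of $J_{\bm{\rho}}(\bm{\pi},\bm{p}_{t})$ to pass from the summed bound to the last iterate. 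The only blemish is a constant slip in the prose: the telescoped residual sum is $\frac{M}{1-\gamma}\left(J_{\bm{\rho}}(\bm{\pi},\bm{p}_{T})-J_{\bm{\rho}}(\bm{\pi},\bm{p}_{0})\right)\leq \frac{M}{(1-\gamma)^{2}}$ rather than $M\left(J_{\bm{\rho}}(\bm{\pi},\bm{p}_{T})-J_{\bm{\rho}}(\bm{\pi},\bm{p}_{0})\right)\leq \frac{M}{1-\gamma}$, but your final displayed bound already carries the correct $\frac{M}{(1-\gamma)^{2}}$ term, matching the theorem.
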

\begin{proof}
Consider the inequality~\eqref{eq:3-point-ascent} in Lemma~\ref{lem:3point_inner} and let $\bm{y}=\bm{p}^{\star}_{s}$, where $\bm{p}^{\star}$ is the worst-case inner transition kernel under the policy $\bm{\pi}$. We then lead to the following result with a constant step size $\beta\geq0$
\begin{align}\label{eq:InnLinear-mid1}
\beta\underbrace{\langle \hat{\bm{g}}^{\bm{\pi},\bm{p}_{t}}_{s}, \bm{p}^{\star}_{s}-\bm{p}_{t,s}\rangle}_{\text{(A)}} + \beta\underbrace{\langle \hat{\bm{g}}^{\bm{\pi},\bm{p}_{t}}_{s}, \bm{p}_{t,s}-\bm{p}_{t+1,s}\rangle}_{\text{(B)}}  \leq  B(\bm{p}^{\star}_{s},\bm{p}_{t,s}) - B(\bm{p}^{\star}_{s},\bm{p}_{t+1,s}).
\end{align}
We first notice that, by applying the second performance difference lemma (Lemma~\ref{lem:2nd-diff-perf-inner}), we can obtain the following result
\begin{align*}
J_{\bm{\rho}}(\bm{\pi},\bm{p}^{\star})-J_{\bm{\rho}}(\bm{\pi},\bm{p}_{t}) &=  \frac{1}{1-\gamma} \sum_{s\in\mathcal{S}} d_{\bm{\rho}}^{\bm{\pi},\bm{p}^{\star}}(s) \left(\sum_{a\in\mathcal{A}}\pi_{sa}\sum_{s'}\left(p^{\star}_{sas'}-p_{t,sas'}\right)g^{\bm{\pi},\bm{p}_{t}}_{sas'}\right)\\
&= \frac{1}{1-\gamma} \sum_{s\in\mathcal{S}} d_{\bm{\rho}}^{\bm{\pi},\bm{p}^{\star}}(s)\underbrace{\langle \hat{\bm{g}}^{\bm{\pi},\bm{p}_{t}}_{s}, \bm{p}^{\star}_{s}-\bm{p}_{t,s}\rangle}_{\text{(A)}}.
\end{align*}
Similarly, by applying the first transition kernel performance difference lemma  (Lemma~\ref{lem:1st-diff-perf-inner}), we can obtain that
\begin{align*}
J_{\bm{\rho}}(\bm{\pi},\bm{p}_{t})-J_{\bm{\rho}}(\bm{\pi},\bm{p}_{t+1}) &= \frac{1}{1-\gamma} \sum_{s\in\mathcal{S}}d_{\bm{\rho}}^{\bm{\pi},\bm{p}_{t+1}}(s)\left(\sum_{a\in\mathcal{A}}\pi_{sa}\sum_{s'}\left(p_{t,sas'}-p_{t+1,sas'}\right)g^{\bm{\pi},\bm{p}_{t}}_{sas'}\right).
\end{align*}
Then, we notice that Lemma~\ref{lem:3point_inner} implies $\beta_{t}\langle \hat{\bm{g}}^{\bm{\pi},\bm{p}_{t}}_{s},\bm{p}_{t,s}- \bm{p}_{t+1,s}\rangle \leq0$, which can lead to the following result
\begin{align*}
J_{\bm{\rho}}(\bm{\pi},\bm{p}_{t})-J_{\bm{\rho}}(\bm{\pi},\bm{p}_{t+1})
&= \frac{1}{1-\gamma} \sum_{s\in\mathcal{S}} \frac{d_{\bm{\rho}}^{\bm{\pi},\bm{p}_{t+1}}(s)}{d_{\bm{\rho}}^{\bm{\pi},\bm{p}^{\star}}(s)} d_{\bm{\rho}}^{\bm{\pi},\bm{p}^{\star}}(s)\underbrace{\left(\sum_{a\in\mathcal{A}}\pi_{sa}\sum_{s'}\left(p_{t,sas'}-p_{t+1,sas'}\right)g^{\bm{\pi},\bm{p}_{t}}_{sas'}\right)}_{\leq0}\\
&\leq
\left\|\frac{\bm{d}^{\bm{\pi},\bm{p}^{\star}}_{\bm{\rho}}}{\bm{\rho}}\right\|^{-1}_{\infty}\sum_{s\in\mathcal{S}} d_{\bm{\rho}}^{\bm{\pi},\bm{p}^{\star}}(s)\left(\sum_{a\in\mathcal{A}}\pi_{sa}\sum_{s'}\left(p_{t,sas'}-p_{t+1,sas'}\right)g^{\bm{\pi},\bm{p}_{t}}_{sas'}\right)\\
&\leq\frac{1}{M}\sum_{s\in\mathcal{S}} d_{\bm{\rho}}^{\bm{\pi},\bm{p}^{\star}}(s)\left(\sum_{a\in\mathcal{A}}\pi_{sa}\sum_{s'}\left(p_{t,sas'}-p_{t+1,sas'}\right)g^{\bm{\pi},\bm{p}_{t}}_{sas'}\right).
\end{align*}
Here, the first inequality is due to the fact that $d^{\bm{\pi},\bm{p}}_{\rho}(s)\geq(1-\gamma)\rho_{s}$ for all $s\in\mathcal{S}$. Hence, using the above results, we take expectation with respect to the distribution $\bm{d}_{\bm{\rho}}^{\bm{\pi},\bm{p}^{\star}}$ on both sides of the inequality~\eqref{eq:InnLinear-mid1} and obtain
% \begin{align*}
% &(1-\gamma)\left(J_{\bm{\rho}}(\bm{\pi},\bm{p}^{\star})-J_{\bm{\rho}}(\bm{\pi},\bm{p}_{t})\right) + M\left(J_{\bm{\rho}}(\bm{\pi},\bm{p}_{t})-J_{\bm{\rho}}(\bm{\pi},\bm{p}_{t+1})\right)\\
% \leq  &\frac{1}{\beta}B_{\bm{d}_{\bm{\rho}}^{\bm{\pi},\bm{p}^{\star}}}(\bm{p}^{\star}_{s},\bm{p}_{t,s}) - \frac{1}{\beta}B_{\bm{d}_{\bm{\rho}}^{\bm{\pi},\bm{p}^{\star}}}(\bm{p}^{\star}_{s},\bm{p}_{t+1,s}),   
% \end{align*}
\begin{multline*}
(1-\gamma)\left(J_{\bm{\rho}}(\bm{\pi},\bm{p}^{\star})-J_{\bm{\rho}}(\bm{\pi},\bm{p}_{t})\right) + M\left(J_{\bm{\rho}}(\bm{\pi},\bm{p}_{t})-J_{\bm{\rho}}(\bm{\pi},\bm{p}_{t+1})\right)\\
\leq  \frac{1}{\beta}B_{\bm{d}_{\bm{\rho}}^{\bm{\pi},\bm{p}^{\star}}}(\bm{p}^{\star}_{s},\bm{p}_{t,s}) - \frac{1}{\beta}B_{\bm{d}_{\bm{\rho}}^{\bm{\pi},\bm{p}^{\star}}}(\bm{p}^{\star}_{s},\bm{p}_{t+1,s}),   
\end{multline*}
which further leads to 
% \begin{align*}
% (1-\gamma)\sum^{t}_{i=0}\left(J_{\bm{\rho}}(\bm{\pi},\bm{p}^{\star})-J_{\bm{\rho}}(\bm{\pi},\bm{p}_{i})\right)&\leq M\left(J_{\bm{\rho}}(\bm{\pi},\bm{p}_{t})-J_{\bm{\rho}}(\bm{\pi},\bm{p}_{0})\right)\\
% &\;\;\;\;\;\;\;\;+\frac{1}{\beta}B_{\bm{d}_{\bm{\rho}}^{\bm{\pi},\bm{p}^{\star}}}(\bm{p}^{\star}_{s},\bm{p}_{0,s}) - \frac{1}{\beta}B_{\bm{d}_{\bm{\rho}}^{\bm{\pi},\bm{p}^{\star}}}(\bm{p}^{\star}_{s},\bm{p}_{t,s}).
% \end{align*}
\begin{multline*}
(1-\gamma)\sum^{t}_{i=0}\left(J_{\bm{\rho}}(\bm{\pi},\bm{p}^{\star})-J_{\bm{\rho}}(\bm{\pi},\bm{p}_{i})\right)\leq M\left(J_{\bm{\rho}}(\bm{\pi},\bm{p}_{t})-J_{\bm{\rho}}(\bm{\pi},\bm{p}_{0})\right)\\
+\frac{1}{\beta}B_{\bm{d}_{\bm{\rho}}^{\bm{\pi},\bm{p}^{\star}}}(\bm{p}^{\star}_{s},\bm{p}_{0,s}) - \frac{1}{\beta}B_{\bm{d}_{\bm{\rho}}^{\bm{\pi},\bm{p}^{\star}}}(\bm{p}^{\star}_{s},\bm{p}_{t,s}).
\end{multline*}
Since $J_{\bm{\rho}}(\bm{\pi},\bm{p}_{t})$ is monotonically non-decreasing in $t$ (see Lemma~\ref{lem:3point_inner}), we conclude that
\begin{align*}
J_{\bm{\rho}}(\bm{\pi},\bm{p}^{\star})-J_{\bm{\rho}}(\bm{\pi},\bm{p}_{t})\leq \frac{1}{t}\left(\frac{M}{(1-\gamma)^{2}}+ \frac{1}{\beta(1-\gamma)}B_{\bm{d}_{\bm{\rho}}^{\bm{\pi},\bm{p}^{\star}}}(\bm{p}^{\star}_{s},\bm{p}_{0,s})\right)    
\end{align*}
\end{proof}
As a result of Theorem~\ref{the:inner-optimal-sublinear}, whenever $\beta\geq (1-\gamma)B_{\bm{d}_{\bm{\rho}}^{\bm{\pi},\bm{p}^{\star}}}(\bm{p}^{\star}_{s},\bm{p}_{0,s})$, the number of iterations required to guarantee $J_{\bm{\rho}}(\bm{\pi},\bm{p}^{\star})-J_{\bm{\rho}}(\bm{\pi},\bm{p}_{t})\leq\epsilon$ is at most
\begin{equation*}
    \frac{M+1}{(1-\gamma)^{2}\epsilon},
\end{equation*}
which is independent of the dimensions of state and action space. Note that we don't impose any upper bound on $\beta$ in the above analysis, so the step size could be chosen to be arbitrarily large. The choice of $\beta$ in our analysis is different from that in the classical smooth optimization, where the step size is usually highly related to various Lipschitz constants. This observation is aligned with the discussion of standard policy mirror descent and may be explained by the unique structure of discounted MDPs~\citep{xiao2022convergence}. Indeed, we show next that TMA enjoys a faster $\mathcal{O}(\log(\epsilon^{-1}))$ if the step size grows linearly. 
\begin{corollary}\label{the:inner-optimal-linear}
    Consider the transition mirror ascent method with a fixed policy $\bm{\pi}\in\Pi$. Suppose the step sizes $\{\beta_{t}\}_{t\geq0}$ satisfy $\beta_{0}\geq0$ and 
\begin{equation*}
    \left(1-\frac{1-\gamma}{M}\right)\beta_{t}\geq \beta_{t-1}, \quad\forall t\geq 1,
\end{equation*}
    then we have for each $t\geq 0$,
\begin{equation*}
J_{\bm{\rho}}(\bm{\pi},\bm{p}^{\star})-J_{\bm{\rho}}(\bm{\pi},\bm{p}_{t}) \leq \left(1-\frac{1-\gamma}{M}\right)^{t} \left(J_{\bm{\rho}}(\bm{\pi},\bm{p}^{\star})-J_{\bm{\rho}}(\bm{\pi},\bm{p}_{0})+ \frac{B_{\bm{d}_{\bm{\rho}}^{\bm{\pi},\bm{p}^{\star}}}(\bm{p}^{\star}_{s},\bm{p}_{0,s})}{(M-1+\gamma)\beta_{0}}\right).    
\end{equation*}
\end{corollary}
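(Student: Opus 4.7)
The plan is to adapt the analysis of Theorem~\ref{the:inner-optimal-sublinear} by introducing a potential function whose geometric decay is driven by the linearly growing step sizes, mirroring the strategy used for Theorem~\ref{the:linear-srec} in the outer loop. First I would revisit the one-step inequality derived in the proof of Theorem~\ref{the:inner-optimal-sublinear}: starting from Lemma~\ref{lem:3point_inner} with $\bm{y}=\bm{p}^{\star}_{s}$, aggregating with the weights $d_{\bm{\rho}}^{\bm{\pi},\bm{p}^{\star}}(s)$, and invoking both performance difference lemmas (Lemma~\ref{lem:1st-diff-perf-inner} and Lemma~\ref{lem:2nd-diff-perf-inner}) together with the bound $d^{\bm{\pi},\bm{p}}_{\bm{\rho}}(s)\geq (1-\gamma)\rho_{s}$. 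This yields, writing $\Delta_{t}:=J_{\bm{\rho}}(\bm{\pi},\bm{p}^{\star})-J_{\bm{\rho}}(\bm{\pi},\bm{p}_{t})$ and $B_{t}:=B_{\bm{d}_{\bm{\rho}}^{\bm{\pi},\bm{p}^{\star}}}(\bm{p}^{\star}_{s},\bm{p}_{t,s})$, the per-iteration estimate
\begin{equation*}
(1-\gamma)\,\Delta_{t} \;+\; M\,(\Delta_{t+1}-\Delta_{t}) \;\leq\; \frac{1}{\beta_{t}}\bigl(B_{t}-B_{t+1}\bigr),
\end{equation*}
which is exactly the inequality obtained in the proof of Theorem~\ref{the:inner-optimal-sublinear} before the telescoping step, now kept with the iteration-dependent $\beta_{t}$.

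Next I would rearrange this estimate into a contractive form. Dividing by $M$ and collecting terms gives
\begin{equation*}
\Delta_{t+1} + \frac{1}{M\beta_{t}}B_{t+1} \;\leq\; \Bigl(1-\frac{1-\gamma}{M}\Bigr)\Delta_{t} + \frac{1}{M\beta_{t}}B_{t}.
\end{equation*}
Setting $\rho:=1-\tfrac{1-\gamma}{M}$ and defining the potential function $V_{t}:=\Delta_{t}+\tfrac{1}{(M-(1-\gamma))\beta_{t-1}}B_{t}$ for $t\geq 1$, with $V_{0}:=\Delta_{0}+\tfrac{1}{(M-1+\gamma)\beta_{0}}B_{0}$, the idea is to compare $V_{t+1}$ against $\rho V_{t}$. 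The above inequality bounds $V_{t+1}$ in terms of $\rho\Delta_{t}$ and $\tfrac{1}{M\beta_{t}}B_{t}$, and the step-size assumption $\rho\,\beta_{t}\geq \beta_{t-1}$ is precisely what is needed to absorb the Bregman term, i.e.\ $\tfrac{1}{M\beta_{t}}\leq \tfrac{\rho}{(M-(1-\gamma))\beta_{t-1}}$, which yields $V_{t+1}\leq \rho\,V_{t}$.

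Iterating this contraction gives $V_{t}\leq \rho^{t}V_{0}$, and since $\Delta_{t}\leq V_{t}$ (as $B_{t}\geq 0$), I recover
\begin{equation*}
\Delta_{t} \;\leq\; \Bigl(1-\frac{1-\gamma}{M}\Bigr)^{t}\left(\Delta_{0}+\frac{B_{\bm{d}_{\bm{\rho}}^{\bm{\pi},\bm{p}^{\star}}}(\bm{p}^{\star}_{s},\bm{p}_{0,s})}{(M-1+\gamma)\beta_{0}}\right),
\end{equation*}
which is the claimed bound. The main conceptual obstacle is spotting the right potential function: the iteration-dependent coefficient $\tfrac{1}{(M-(1-\gamma))\beta_{t-1}}$ on the Bregman term must be calibrated so that the contraction factor exactly matches both the multiplicative gap $\rho$ on $\Delta_{t}$ and the ratio $\beta_{t-1}/\beta_{t}$ enforced by the step-size schedule; once this is set, the rest reduces to bookkeeping and a single telescoping argument, with no need for monotonicity of $J_{\bm{\rho}}(\bm{\pi},\bm{p}_{t})$ beyond what is already provided by Lemma~\ref{lem:3point_inner}.
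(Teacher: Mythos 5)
Your setup is exactly the paper's: the same one-step inequality
\begin{equation*}
(1-\gamma)\,\Delta_{t}+M\,(\Delta_{t+1}-\Delta_{t})\;\leq\;\frac{1}{\beta_{t}}\bigl(B_{t}-B_{t+1}\bigr),
\end{equation*}
and the same rearrangement $\Delta_{t+1}+\tfrac{1}{M\beta_{t}}B_{t+1}\leq\rho\,\Delta_{t}+\tfrac{1}{M\beta_{t}}B_{t}$ with $\rho=1-\tfrac{1-\gamma}{M}$ (the paper then unrolls this recursion and checks that the cross terms are non-positive under the step-size schedule). However, your final contraction step has a genuine gap: with your potential $V_{t}=\Delta_{t}+\tfrac{1}{(M-(1-\gamma))\beta_{t-1}}B_{t}=\Delta_{t}+\tfrac{1}{\rho M\beta_{t-1}}B_{t}$, the quantity $V_{t+1}$ carries the coefficient $\tfrac{1}{\rho M\beta_{t}}$ on $B_{t+1}$, which is \emph{strictly larger} than the coefficient $\tfrac{1}{M\beta_{t}}$ appearing on the left of the available inequality. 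So the displayed inequality does not bound $V_{t+1}$; it bounds the smaller quantity $\Delta_{t+1}+\tfrac{1}{M\beta_{t}}B_{t+1}$. Tracking the mismatch, $V_{t+1}\leq\rho V_{t}$ would additionally require
\begin{equation*}
\Bigl(\tfrac{1}{\rho M\beta_{t}}-\tfrac{1}{M\beta_{t}}\Bigr)B_{t+1}\;\leq\;\Bigl(\tfrac{1}{M\beta_{t-1}}-\tfrac{1}{M\beta_{t}}\Bigr)B_{t},
\end{equation*}
and in the tight case $\rho\beta_{t}=\beta_{t-1}$ (the natural schedule $\beta_{t}=\rho^{-t}\beta_{0}$) this is exactly $B_{t+1}\leq B_{t}$, i.e.\ monotone decrease of the Bregman divergence to $\bm{p}^{\star}$, which is not established anywhere (Lemma~\ref{lem:3point_inner} with $\bm{y}=\bm{p}^{\star}_{s}$ leaves an inner-product term of indeterminate sign) and is not needed in the paper's argument.

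The repair is a one-line recalibration: take $V_{t}:=\Delta_{t}+\tfrac{1}{M\beta_{t-1}}B_{t}$ for $t\geq1$. Then $V_{t+1}=\Delta_{t+1}+\tfrac{1}{M\beta_{t}}B_{t+1}\leq\rho\,\Delta_{t}+\tfrac{1}{M\beta_{t}}B_{t}\leq\rho\,\Delta_{t}+\tfrac{\rho}{M\beta_{t-1}}B_{t}=\rho V_{t}$, where the second inequality is precisely the step-size condition $\rho\beta_{t}\geq\beta_{t-1}$; iterating and using $V_{1}\leq\rho\bigl(\Delta_{0}+\tfrac{1}{\rho M\beta_{0}}B_{0}\bigr)$ gives $\Delta_{t}\leq\rho^{t}\bigl(\Delta_{0}+\tfrac{B_{0}}{(M-1+\gamma)\beta_{0}}\bigr)$, since $\rho M=M-1+\gamma$. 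Note that the factor $\tfrac{1}{(M-1+\gamma)\beta_{0}}$ in the statement arises only once, from converting $\rho^{t-1}$ to $\rho^{t}$ at the initial step, not from a per-iteration weight — this is likely what misled your choice of potential. With that correction your argument is a compact repackaging of the paper's unrolled recursion; as written, the key contraction claim does not follow.
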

\begin{proof}
We use the same techniques from Theorem~\ref{the:inner-optimal-sublinear} to obtain that
\begin{multline*}
(1-\gamma)\left(J_{\bm{\rho}}(\bm{\pi},\bm{p}^{\star})-J_{\bm{\rho}}(\bm{\pi},\bm{p}_{t})\right) + M\left(J_{\bm{\rho}}(\bm{\pi},\bm{p}_{t})-J_{\bm{\rho}}(\bm{\pi},\bm{p}_{t+1})\right)
  \\ \leq \frac{1}{\beta_{t}}B_{\bm{d}_{\bm{\rho}}^{\bm{\pi},\bm{p}^{\star}}}(\bm{p}^{\star}_{s},\bm{p}_{t,s}) - \frac{1}{\beta_{t}}B_{\bm{d}_{\bm{\rho}}^{\bm{\pi},\bm{p}^{\star}}}(\bm{p}^{\star}_{s},\bm{p}_{t+1,s}),   
\end{multline*}
which is equivalent to 
\begin{multline*}
J_{\bm{\rho}}(\bm{\pi},\bm{p}^{\star})-J_{\bm{\rho}}(\bm{\pi},\bm{p}_{t+1})\leq\left(1-\frac{1-\gamma}{M}\right)\left(J_{\bm{\rho}}(\bm{\pi},\bm{p}^{\star})-J_{\bm{\rho}}(\bm{\pi},\bm{p}_{t})\right)\\
+\frac{1}{M\beta_{t}}B_{\bm{d}_{\bm{\rho}}^{\bm{\pi},\bm{p}^{\star}}}(\bm{p}^{\star}_{s},\bm{p}_{t,s}) - \frac{1}{M\beta_{t}}B_{\bm{d}_{\bm{\rho}}^{\bm{\pi},\bm{p}^{\star}}}(\bm{p}^{\star}_{s},\bm{p}_{t+1,s}).
\end{multline*}
By applying the above inequality recursively, we have
\begin{gather*}
    J_{\bm{\rho}}(\bm{\pi},\bm{p}^{\star})-J_{\bm{\rho}}(\bm{\pi},\bm{p}_{t})
    \leq \\ \left(1-\frac{1-\gamma}{M}\right)^{t}\left(J_{\bm{\rho}}(\bm{\pi},\bm{p}^{\star})-J_{\bm{\rho}}(\bm{\pi},\bm{p}_{0})\right) + \left(1-\frac{1-\gamma}{M}\right)^{t-1}\frac{B_{\bm{d}_{\bm{\rho}}^{\bm{\pi},\bm{p}^{\star}}}(\bm{p}^{\star}_{s},\bm{p}_{0,s})}{M\beta_{0}}\\
     + \frac{1}{M}\sum^{t-1}_{i=1}\underbrace{\left(\left(1-\frac{1-\gamma}{M}\right)^{t-1-i}\frac{B_{\bm{d}_{\bm{\rho}}^{\bm{\pi},\bm{p}^{\star}}}(\bm{p}^{\star}_{s},\bm{p}_{i,s})}{\beta_{i}} -\left(1-\frac{1-\gamma}{M}\right)^{t-i}\frac{B_{\bm{d}_{\bm{\rho}}^{\bm{\pi},\bm{p}^{\star}}}(\bm{p}^{\star}_{s},\bm{p}_{i,s})}{\beta_{i-1}}\right)}_{\text{(C)}}.
\end{gather*}
Given that 
\begin{equation*}
    \left(1-\frac{1-\gamma}{M}\right)\beta_{i}\geq \beta_{i-1}, \quad\forall i\geq 1,
\end{equation*}
the term (C) is non-positive, and we obtain the desired result
\begin{equation*}
J_{\bm{\rho}}(\bm{\pi},\bm{p}^{\star})-J_{\bm{\rho}}(\bm{\pi},\bm{p}_{t}) \leq \left(1-\frac{1-\gamma}{M}\right)^{t} \left(J_{\bm{\rho}}(\bm{\pi},\bm{p}^{\star})-J_{\bm{\rho}}(\bm{\pi},\bm{p}_{0})+ \frac{B_{\bm{d}_{\bm{\rho}}^{\bm{\pi},\bm{p}^{\star}}}(\bm{p}^{\star}_{s},\bm{p}_{0,s})}{(M-1+\gamma)\beta_{0}}\right).    
\end{equation*}
\end{proof}
\subsection{Value-Based Approach}\label{subsec:Connect_to_VB}
Solving the inner maximization problem~\eqref{prob_inRMDP} is equivalent to computing the optimum of $\bm{v}^{\bm{\pi}^{\bm{\theta}_{k}}}:=\max_{\bm{p}\in\mathcal{P}}\bm{v}^{\bm{\pi}^{\bm{\theta}_{k}},\bm{p}}$, which is commonly defined as the \emph{robust value function}~\citep{iyengar2005robust,nilim2005robust,wiesemann2013robust}. There has been extensive study of efficient value-based methods to compute the robust value function~\citep{iyengar2005robust,nilim2005robust,wiesemann2013robust,petrik2014raam,ho2018fast,behzadian2021fast}. A standard class of methods to compute the robust value function $\bm{v}^{\bm{\pi}}$ of a rectangular RMDP for a policy $\bm{\pi}\in\Pi$ is the value-based approach, which commonly utilize the robust Bellman policy update $\mathcal{T}_{\bm{\pi}}:\mathbb{R}^{S}\rightarrow\mathbb{R}^{S}$~\citep{ho2021partial}. Specifically, for $(s,a)$-rectangular RMDPs, the operator $\mathcal{T}_{\bm{\pi}}$ is
defined for each state $s\in\mathcal{S}$
\begin{equation*}
    (\mathcal{T}_{\bm{\pi}}\bm{v})_{s} ~:=~ \sum_{a\in\mathcal{A}} \left(\pi_{sa}\cdot\max_{\bm{p}_{sa}\in\mathcal{P}_{sa}}\bm{p}_{sa}^{\top}(\bm{c}_{sa}+\gamma \bm{v})\right),
\end{equation*}
while for $s$-rectangular RMDPs, the the operator $\mathcal{T}_{\bm{\pi}}$ is defined as
\begin{equation*}
    (\mathcal{T}_{\bm{\pi}}\bm{v})_{s} ~:=~ \max_{\bm{p}_{s}\in\mathcal{P}_{s}}\sum_{a\in\mathcal{A}}\pi_{sa}\cdot\bm{p}_{sa}^{\top}(\bm{c}_{sa}+\gamma \bm{v}).
\end{equation*}
For rectangular RMDPs, $\mathcal{T}_{\bm{\pi}}$ is a $\gamma$-contraction~\citep{nilim2005robust,wiesemann2013robust,ho2021partial} and the robust value function is the unique solution of $\bm{v}^{\bm{\pi}} = \mathcal{T}_{\bm{\pi}}\bm{v}^{\bm{\pi}}$. To solve the robust value function, the value-based approach commonly computes the sequence $\bm{v}_{t+1}^{\bm{\pi}} = \mathcal{T}_{\bm{\pi}}\bm{v}_{t}^{\bm{\pi}}$ with any initial $\bm{v}_{0}^{\bm{\pi}}$.

As discussed in Section~\ref{subsec:Inner_Global_conv}, our analysis of the TMA method does not impose any upper bound on the step sizes: they can be either arbitrarily large constant or geometrically increasing. We take $s$-rectangular RMDPs as the example and consider $\beta_{t}\rightarrow\infty$ for all iterations. Then, the ascent update of TMA becomes
\begin{align*}
\bm{p}_{t+1,s} ~=~\mathop{\arg\max}\limits_ {\bm{p}_{s}\in\mathcal{P}_{s}}\left(\sum_{a\in\mathcal{A}}\pi_{sa}\sum_{s'}p_{sas'}\left(c_{sas'}+\gamma v^{\bm{\pi},\bm{p}_{t}}_{s'}\right)\right),
\end{align*}
which is precisely same as applying contraction mapping $\mathcal{T}_{\bm{\pi}}$ once on $\bm{v}^{\bm{\pi},\bm{p}_{t}}$ to compute the worst transition kernel. In fact, our analysis still holds when considering the extreme situation ($\beta_{t}\rightarrow\infty$) and the result corresponding to Corollary~\ref{the:inner-optimal-linear} is
\begin{equation*}
J_{\bm{\rho}}(\bm{\pi},\bm{p}^{\star})-J_{\bm{\rho}}(\bm{\pi},\bm{p}_{t}) \leq \left(1-\frac{1-\gamma}{M}\right)^{t} \left(J_{\bm{\rho}}(\bm{\pi},\bm{p}^{\star})-J_{\bm{\rho}}(\bm{\pi},\bm{p}_{0})\right).    
\end{equation*}
This convergence behavior of TMA closely resembles the convergence result of the value-based approaches, achieving the same $\mathcal{O}(\log(\epsilon^{-1}))$ rate. It should be noted that all convergence results of TMA in Theorem~\ref{the:inner-optimal-sublinear} and Corollary~\ref{the:inner-optimal-linear} are superior to the convergence result $\mathcal{O}(\epsilon^{-2})$ obtained by using general gradient descent analysis techniques on smooth objectives~\citep{wang2023policy}. 

\subsection{Scalability of Parametric Transition}\label{subsec:Transition_para}
In standard policy-gradient methods, one considers a family of policies parametrized by lower-dimensional parameter vectors to limit the number of variables when scaling to large problems. So far, the mirror ascent step in TMA requires directly updating each $p_{sas'}$, which is challenging with large state and action spaces. In this subsection, we apply the idea of policy parameterization and explore the utilization of parametric models for transition probabilities. We provide two novel approaches to parameterize transition kernel, generally denoted as $\bm{p}^{\bm{\xi}}$, covering both discrete and continuous settings. As far as we know, comparable parameterizations for the inner problem have not been studied previously.

\paragraph{Entropy Parametric Transition Kernel}
For the RMDP with discrete state and action spaces, we parameterize the transition with the following form for any $(s,a,s')\in\mathcal{S}\times\mathcal{A}\times\mathcal{S}$,
\begin{equation}\label{inner_para1}
p^{\bm{\xi}}_{sas'} ~:=~ \frac{\bar{p}_{sas'}\cdot\exp\left(\frac{\bm{\eta}^{\top}\bm{\phi}(s')}{\bm{\lambda}^{\top}\bm{\zeta}(s,a)})\right)}{\sum_{k}\bar{p}_{sak}\cdot\exp\left({\frac{\bm{\eta}^{\top}\bm{\phi}(k)}{\bm{\lambda}^{\top}\bm{\zeta}(s,a)}}\right)},
\end{equation}
where $\bm{\phi}(s):=\left[\phi_{1}(s),\cdots,\phi_{l}(s)\right]$ is a $l$-dimensional feature vector corresponding to the state $s\in\mathcal{S}$ and $\bm{\zeta}(s,a):=\left[\zeta_{1}(s,a),\cdots,\zeta_{n}(s,a)\right]$ is introduced as an $n$-dimensional state-action feature vector. We denote $\bm{\xi}:=(\bm{\eta},\bm{\lambda
})$ as the collection of parameters, consisting of the parameter $\bm{\lambda}\in\mathbb{R}^{n}$ with the same dimension as the feature $\bm{\zeta}(s,a)$ and the parameter $\bm{\eta}:=\left[\eta_{1},\cdots,\eta_{l}\right]$. The symbol $\bar{\bm{p}}$ represents the nominal transition kernel, which is typically estimated from the empirical sample of state transitions. Notably, \cite{wang2023policy} proposed a similar parameterization that simply used $\lambda_{sa}, s\in\mathcal{S},a\in\mathcal{A}$ to characterize the information of $(s,a)$ pair, making the number of parameters still heavily dependent on the dimensions of the state and action spaces. In contrast, the parameterization form~\eqref{inner_para1} improves the scalability by introducing a linear approximation on $\lambda_{sa}$, thereby mitigating the reliance on the dimensionality of the state and action spaces.

The parameterization~\eqref{inner_para1} is motivated by the form of the worst-case transition probabilities in RMDPs with KL-divergence constrained $(s,a)$-rectangular ambiguity sets~\citep{nilim2005robust}. In fact, the worst-case transition has an identical form to~\eqref{inner_para1} when linear approximations $\bm{\eta}^{\top}\bm{\phi}(s)$ and $\bm{\lambda}^{\top}\bm{\zeta}(s,a)$ are applied (see Appendix~\ref{app:entropy_trans} for more detailed explanation). Given~\eqref{inner_para1}, the inner policy evaluation problem becomes
\begin{equation}\label{prob_inRMDP_para}
\max_{\bm{\xi}\in\Xi} \,J_{\bm{\rho}}(\bm{\pi}, \bm{p}^{\bm{\xi}}),
\end{equation}
where $\Xi$ is the ambiguity set for the parameter $\bm{\xi}$. In practice, $\Xi$ could be constructed via distance-type constraint; that is, we consider
\begin{equation*}
    \Xi~:=~\{\bm{\xi} \mid D\left(\bm{\xi}\|\bm{\xi}_{c}\right) \leq \kappa\},
\end{equation*}
where $D(\cdot\|\cdot)$ represents a distance function, such as $L_{1}$-norm and $L_{\infty}$-norm, $\bm{\xi}_{c}$ is the user-specified empirical estimation of $\bm{\xi}$, and $\kappa\in\mathbb{R}_{++}$ is a given radius.

\paragraph{Gaussian Mixture Parametric Transition Kernel}
Finite mixtures of distributions are
being widely used to estimate complex
distributions of data on random phenomena~\citep{mclachlan2019finite}. For large, even continuous state spaces, we typically consider the normal mixture model, named Gaussian Mixture model, to parameterize transition kernel. To be specific, this parameterization assumes for a given $(s,a)\in\mathcal{S}\times\mathcal{A}$, the next state $s'\in\mathcal{S}$ is selected from a convex combination of certain component normal
distributions
\begin{equation*}
    s' ~\sim~ \sum_{m=1}^{M} \omega_{m}\mathcal{N}\left(\mu_{m}, \sigma_{m}\right),
\end{equation*}
where the transition function is defined as 
\begin{equation}\label{inner_para2}
p^{\bm{\xi}}_{sas'} ~:=~\sum^{M}_{m=1}\omega_{m}\left(\frac{1}{\sigma_{m} \sqrt{2 \pi}} \exp \left(-\frac{(s'-\bm{\eta}^{\top}_{m} \bm{\zeta}(s,a))^2}{2 \sigma_{m}^2}\right)\right)
\end{equation}
with the parameter collection $\bm{\xi}:=(\bm{\omega},\bm{\eta}_{1},\dots,\bm{\eta}_{M})$. For simplicity, in this form of parameterization, we only employ the linear function approximation on the means $\mu_{m} := \bm{\eta}^{\top}_{m} \bm{\zeta}(s,a)$, and the standard deviations are also readily approximated via similar techniques in the future work.

The parameter $\bm{\eta}_{m}$ for each $m=1,2,\dots,M$ could lie in the distance-type constrained ambiguity sets
\begin{equation*}
    \Xi_{\bm{\eta}_{m}}~:=~\left\{ \bm{\eta}_{m} \Big\vert D\left(\bm{\eta}_{m}\|\hat{\bm{\eta}}_{m}\right) \leq \kappa_{\bm{\eta}}\right\},
\end{equation*}
with $\hat{\bm{\eta}}_{m}$ being the $m$-th user-specified central point and $\kappa_{\bm{\eta}}$ is a given tolerance. The weights $\omega_{m}'s$ are also unknown and may be selected from a general ambiguity set
% \begin{equation*}
% \Xi_{\bm{\omega}}=\left\{\bm{\omega}\Big\vert\begin{array}{c}
% \sum^{M}_{m=1}\omega_{m} = 1,\;\omega_{m} \geq 0, \;\underline{\omega}^{\delta}_{m} \leq \omega_{m} \leq \overline{\omega}^{\delta}_{m}\\ 
% \forall m=1,\dots,M
% \end{array}\right\}.
% \end{equation*}
\begin{equation*}
\Xi_{\bm{\omega}}~:=~\left\{\bm{\omega} \in \mathbb{R}^M ~\Big\vert~ \sum^{M}_{m=1}\omega_{m} = 1,\;\omega_{m} \geq 0, \;\underline{\omega}^{\delta}_{m} \leq \omega_{m} \leq \overline{\omega}^{\delta}_{m},\forall m=1,\dots,M\right\}.
\end{equation*}
with $\delta$ being a confidence level and $[\underline{\omega}^{\delta}_{m}, \overline{\omega}^{\delta}_{m}]$ being the $\delta$-confidence interval.

% Additionally, we consider methods for learning these transition parameters by leveraging the gradient of the inner objective $J_{\bm{\rho}}(\bm{\pi},\bm{p}^{\bm{\xi}})$ with respect to the transition parameters $\bm{\xi}\in\Xi$.
\subsection{Stochastic Transition Gradient}\label{subsec:StocTMA}
We are now ready to explore gradient-based algorithms to learn the transition parameters without knowing the exact transition kernel.  These methods seek to maximize the inner objective, so their updates (approximate) gradient ascent in $J_{\bm{\rho}}(\bm{\pi},\bm{p}^{\bm{\xi}})$ with respect to $\bm{\xi}\in\Xi$. We begin with the following \textit{transition gradient theorem}, which provides the inner transition gradient, similar to the classical policy gradient theorem~\citep{sutton1999policy}.
\begin{theorem}[Transition Gradient Theorem]\label{the_sec6_1}
    Consider a map $\bm{\xi}\mapsto p^{\bm{\xi}}_{sas'}$ that is differentiable for any $(s,a,s')$ . Then, the partial gradient of $J_{\rho}(\bm{\pi},\bm{p}^{\bm{\xi}})$ on $\bm{\xi}$ is
    \begin{align}\label{eq:sec6-TGT}
    \frac{\partial J_{\rho}(\bm{\pi},\bm{p}^{\bm{\xi}})}{\partial \bm{\xi}} ~=~ \frac{1}{1-\gamma}    \mathbb{E}_{s\sim\bm{d}^{\bm{\pi},\bm{\xi}}_{\rho},a\sim\bm{\pi}_{s},s'\sim\bm{p}_{sa}}\left[\frac{\partial\log p^{\bm{\xi}}_{sas'}}{\partial \bm{\xi}}\left(c_{sas'}+\gamma v^{\bm{\pi},\bm{p}^{\bm{\xi}}}_{s'}\right)\right].
\end{align}
\end{theorem}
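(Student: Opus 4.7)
The plan is to derive this result by composing the partial derivative of $J_{\bm{\rho}}$ with respect to the tabular entries $p_{sas'}$ (already computed in Lemma~\ref{lem:inner_gradient}) with the Jacobian of the parameterization map $\bm{\xi}\mapsto \bm{p}^{\bm{\xi}}$, and then applying the standard log-derivative trick to rewrite the resulting expression as an expectation over trajectories.

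First, by the chain rule applied to the parameterized objective, we have
\begin{equation*}
\frac{\partial J_{\bm{\rho}}(\bm{\pi},\bm{p}^{\bm{\xi}})}{\partial \bm{\xi}}
~=~ \sum_{s\in\mathcal{S}}\sum_{a\in\mathcal{A}}\sum_{s'\in\mathcal{S}}
\frac{\partial J_{\bm{\rho}}(\bm{\pi},\bm{p})}{\partial p_{sas'}}\bigg|_{\bm{p}=\bm{p}^{\bm{\xi}}}
\cdot \frac{\partial p^{\bm{\xi}}_{sas'}}{\partial \bm{\xi}}.
\end{equation*}
Here differentiability of the composition is warranted because the map $\bm{\xi}\mapsto p^{\bm{\xi}}_{sas'}$ is assumed differentiable and $J_{\bm{\rho}}(\bm{\pi},\bm{p})$ is smooth in $\bm{p}$ on the interior of $(\Delta^S)^{S\times A}$, as already used implicitly in Lemma~\ref{lem:inner_gradient}.

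Next I would substitute the closed form from Lemma~\ref{lem:inner_gradient}, namely
$\partial J_{\bm{\rho}}(\bm{\pi},\bm{p})/\partial p_{sas'} = (1-\gamma)^{-1} d_{\bm{\rho}}^{\bm{\pi},\bm{p}}(s)\,\pi_{sa}\,(c_{sas'}+\gamma v^{\bm{\pi},\bm{p}}_{s'})$, and apply the log-derivative identity
\begin{equation*}
\frac{\partial p^{\bm{\xi}}_{sas'}}{\partial \bm{\xi}} ~=~ p^{\bm{\xi}}_{sas'}\cdot\frac{\partial \log p^{\bm{\xi}}_{sas'}}{\partial \bm{\xi}},
\end{equation*}
which is valid wherever $p^{\bm{\xi}}_{sas'}>0$ (and vacuous on the zero set since then both sides of the chain-rule summand vanish). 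Combining, the derivative becomes
\begin{equation*}
\frac{1}{1-\gamma}\sum_{s}d_{\bm{\rho}}^{\bm{\pi},\bm{p}^{\bm{\xi}}}(s)\sum_{a}\pi_{sa}\sum_{s'}p^{\bm{\xi}}_{sas'}\cdot\frac{\partial \log p^{\bm{\xi}}_{sas'}}{\partial \bm{\xi}}\cdot \bigl(c_{sas'}+\gamma v^{\bm{\pi},\bm{p}^{\bm{\xi}}}_{s'}\bigr).
\end{equation*}
Recognizing the three nested weighted sums as successive expectations under $\tilde s\sim \bm{d}^{\bm{\pi},\bm{\xi}}_{\bm{\rho}}$, $\tilde a\sim \bm{\pi}_{\tilde s}$, and $\tilde s{}'\sim \bm{p}^{\bm{\xi}}_{\tilde s\tilde a}$ yields exactly~\eqref{eq:sec6-TGT}.

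The only real subtlety will be justifying the log-derivative step at boundary points where $p^{\bm{\xi}}_{sas'}=0$; this can be handled either by restricting to parameterizations (such as the entropy and Gaussian-mixture forms in Section~\ref{subsec:Transition_para}) that keep transitions strictly positive, or by noting that such terms contribute zero to the chain-rule sum and so can be excluded without changing the value. Beyond that, the proof is essentially a mechanical rewriting of Lemma~\ref{lem:inner_gradient} in the REINFORCE-style expectation form, mirroring the classical derivation of the policy gradient theorem.
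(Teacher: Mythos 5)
Your proposal is correct, but it takes a genuinely different route from the paper. The paper proves the theorem from scratch: it differentiates the Bellman recursion for $v^{\bm{\pi},\bm{p}^{\bm{\xi}}}_{s}$ with respect to $\bm{\xi}$, unrolls the recursion into a discounted sum over time steps, recognizes the occupancy measure, and only at the very end multiplies and divides by $p^{\bm{\xi}}_{sas'}$ to obtain the log-derivative (score-function) form --- essentially repeating, for the parameterized kernel, the same unrolling argument used in Lemma~\ref{lem:inner_gradient} and in the classical policy gradient theorem. You instead reuse Lemma~\ref{lem:inner_gradient} as a black box and compose it with the Jacobian of $\bm{\xi}\mapsto\bm{p}^{\bm{\xi}}$ via the chain rule, then apply the log-derivative identity. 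This is shorter and makes the logical dependence on the tabular transition gradient explicit; what it requires in exchange is the observation (which you correctly rely on) that the partials in Lemma~\ref{lem:inner_gradient} are ambient, unconstrained derivatives of $J_{\bm{\rho}}(\bm{\pi},\cdot)$, which is smooth on an open neighborhood of the stochastic kernels since $I-\gamma P^{\bm{\pi}}$ remains invertible there, so the chain rule is legitimate. Your treatment of the boundary case $p^{\bm{\xi}}_{sas'}=0$ is also sound --- at an interior parameter point a zero of the nonnegative differentiable map $\bm{\xi}\mapsto p^{\bm{\xi}}_{sas'}$ forces $\partial p^{\bm{\xi}}_{sas'}/\partial\bm{\xi}=0$, so the corresponding summand drops out --- and is in fact more careful than the paper, whose final step implicitly assumes $p^{\bm{\xi}}_{sas'}>0$. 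The paper's self-contained derivation buys independence from Lemma~\ref{lem:inner_gradient} and displays the trajectory-unrolling structure that motivates the Monte-Carlo estimator in Appendix~\ref{app:MCTG}; your derivation buys brevity and a cleaner separation between the tabular gradient and the parameterization.
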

\begin{proof}
    We notice that
\begin{align}\label{eq_sec5.4_mid1}
    \frac{\partial v^{\bm{\pi},\bm{p}^{\bm{\xi}}}_{s}}{\partial \bm{\xi}} &=\sum_{a\in\mathcal{A}}\pi_{sa}\frac{\partial}{\partial \bm{\xi}}\left[\sum_{s'}p^{\bm{\xi}}_{sas'}\left(c_{sas'}+\gamma v^{\bm{\pi},\bm{p}^{\bm{\xi}}}_{s'}\right)\right]\notag\\
    &=\sum_{a\in\mathcal{A}}\pi_{sa}\sum_{s'}\frac{\partial p^{\bm{\xi}}_{sas'}}{\partial \bm{\xi}}\left(c_{sas'}+\gamma v^{\bm{\pi},\bm{p}^{\bm{\xi}}}_{s'}\right) + \gamma\sum_{a\in\mathcal{A}}\pi_{sa}\sum_{s'}p^{\bm{\xi}}_{sas'}\frac{\partial v^{\bm{\pi},\bm{p}^{\bm{\xi}}}_{s'}}{\partial \bm{\xi}}.
\end{align}
By condensing $\sum_{a\in\mathcal{A}}\pi_{sa}p^{\bm{\xi}}_{sas'}=p^{\bm{\pi},\bm{\xi}}_{ss'}(1)$ as the probability of the transition from $s$ to $s'$ within one step and introducing $\sum_{s'}p^{\bm{\pi},\bm{\xi}}_{ss'}(t-1)\cdot\sum_{a\in\mathcal{A}}\pi_{s'a} p^{\bm{\xi}}_{s'as''} = p^{\bm{\pi},\bm{\xi}}_{ss''}(t)$, we can obtain,
\begin{align*}
     \frac{\partial v^{\bm{\pi},\bm{p}^{\bm{\xi}}}_{s}}{\partial \bm{\xi}} &= \sum_{a\in\mathcal{A}}\pi_{sa}\sum_{s'}\frac{\partial p^{\bm{\xi}}_{sas'}}{\partial \bm{\xi}}\left(c_{sas'}+\gamma v^{\bm{\pi},\bm{p}^{\bm{\xi}}}_{s'}\right) + \gamma\sum_{s'}p^{\bm{\pi},\bm{\xi}}_{ss'}(1)\frac{\partial v^{\bm{\pi},\bm{p}^{\bm{\xi}}}_{s'}}{\partial \bm{\xi}}\\
     &= \sum_{a\in\mathcal{A}}\pi_{sa}\sum_{s'}\frac{\partial p^{\bm{\xi}}_{sas'}}{\partial \bm{\xi}}\left(c_{sas'}+\gamma v^{\bm{\pi},\bm{p}^{\bm{\xi}}}_{s'}\right)\\
     &\;\;\;\;+ \gamma\sum_{s'}p^{\bm{\pi},\bm{\xi}}_{ss'}(1)\left[\sum_{a'}\pi_{s'a'}\sum_{s''}\frac{\partial p^{\bm{\xi}}_{s'a's''}}{\partial \bm{\xi}}\left(c_{s'a's''}+\gamma v^{\bm{\pi},\bm{p}^{\bm{\xi}}}_{s''}\right)+\gamma\sum_{s''}p^{\bm{\pi},\bm{\xi}}_{s's''}(1)\frac{\partial v^{\bm{\pi},\bm{p}^{\bm{\xi}}}_{s''}}{\partial \bm{\xi}}\right]\\
     &=\sum^{2}_{k=0}\gamma^{k}\sum_{s'}p^{\bm{\pi},\bm{\xi}}_{ss'}(k)\sum_{a'}\pi_{s'a'}\left[\sum_{s''}\frac{\partial p^{\bm{\xi}}_{s'a's''}}{\partial \bm{\xi}}\left(c_{s'a's''}+\gamma v^{\bm{\pi},\bm{p}^{\bm{\xi}}}_{s''}\right)\right] + \gamma^{3}\sum_{s'}p^{\bm{\pi},\bm{\xi}}_{ss'}(3)\frac{\partial v^{\bm{\pi},\bm{p}^{\bm{\xi}}}_{s'}}{\partial \bm{\xi}}.
\end{align*}
By recursively applying~\eqref{eq_sec5.4_mid1}, we can further obtain that
\begin{equation*}
    \frac{\partial v^{\bm{\pi},\bm{p}^{\bm{\xi}}}_{s}}{\partial \bm{\xi}}=\sum^{\infty}_{k=0}\gamma^{k}\sum_{s'}p^{\bm{\pi},\bm{\xi}}_{ss'}(k)\sum_{a'}\pi_{s'a'}\left[\sum_{s''}\frac{\partial p^{\bm{\xi}}_{s'a's''}}{\partial \bm{\xi}}\left(c_{s'a's''}+\gamma v^{\bm{\pi},\bm{p}^{\bm{\xi}}}_{s''}\right)\right].
\end{equation*}
Finally, we can have our desired result, that is,
\begin{align*}
    \frac{\partial J_{\rho}(\bm{\pi},\bm{p}^{\bm{\xi}})}{\partial \bm{\xi}}&= \sum_{s\in\mathcal{S}}\rho_{s}\sum^{\infty}_{k=0}\gamma^{k}\sum_{s'}p^{\bm{\pi},\bm{\xi}}_{ss'}(k)\sum_{a'}\pi_{s'a'}\left[\sum_{s''}\frac{\partial p^{\bm{\xi}}_{s'a's''}}{\partial \bm{\xi}}\left(c_{s'a's''}+\gamma v^{\bm{\pi},\bm{p}^{\bm{\xi}}}_{s''}\right)\right]\\
    &= \frac{1}{1-\gamma}\sum_{s\in\mathcal{S}}d^{\bm{\pi},\bm{p}^{\bm{\xi}}}_{\bm{\rho}}(s)\sum_{a\in\mathcal{A}}\pi_{sa}\left[\sum_{s'}\frac{\partial p^{\bm{\xi}}_{sas'}}{\partial \bm{\xi}}\left(c_{sas'}+\gamma v^{\bm{\pi},\bm{p}^{\bm{\xi}}}_{s'}\right)\right]\\
    &= \frac{1}{1-\gamma}\sum_{s\in\mathcal{S}}d^{\bm{\pi},\bm{p}^{\bm{\xi}}}_{\bm{\rho}}(s)\sum_{a\in\mathcal{A}}\pi_{sa}\sum_{s'}p^{\bm{\xi}}_{sas'}\left[\frac{\partial \log p^{\bm{\xi}}_{sas'}}{\partial \bm{\xi}}\left(c_{sas'}+\gamma v^{\bm{\pi},\bm{p}^{\bm{\xi}}}_{s'}\right)\right].
\end{align*}
\end{proof}
The term $\nabla_{\bm{\xi}}\log p^{\bm{\xi}}_{sas'}$ is commonly known as the \emph{score function}. Through simple calculation, we can obtain the analytical forms of the proposed transition parameterization from the following proposition.
\begin{corollary}
Consider the entropy parametric transition~\eqref{inner_para1}, the corresponding score function is
\begin{align*}
\frac{\partial\log p^{\bm{\xi}}_{sas'}}{\partial \theta_{i}} &~=~ \frac{\phi_{i}(s')}{\bm{\lambda}^{\top}\bm{\zeta}(s,a)} - \sum_{j}p^{\bm{\xi}}_{saj}\cdot\frac{\phi_{i}(j)}{\bm{\lambda}^{\top}\bm{\zeta}(s,a)},\\
\frac{\partial\log p^{\bm{\xi}}_{sas'}}{\partial \lambda_{sa}} &~=~ \sum_{j}p^{\bm{\xi}}_{saj}\cdot\frac{\bm{\theta}^{\top}\bm{\phi}(j)\cdot \zeta_{i}(s,a)}    {(\bm{\lambda}^{\top}\bm{\zeta}(s,a))^{2}}-\frac{\bm{\theta}^{\top}\bm{\phi}(s')\cdot \zeta_{i}(s,a)}{(\bm{\lambda}^{\top}\bm{\zeta}(s,a))^{2}}.
\end{align*}
For the Gaussian mixture transition paramterization~\eqref{inner_para2}, the corresponding score functions is
\begin{align*}
\frac{\partial\log p^{\bm{\xi}}_{sas'}}{\partial \omega_{m}} &~=~ \frac{\frac{1}{\sigma_{m} \sqrt{2 \pi}} \exp \left(-\frac{(s'-\bm{\theta}^{\top}_{m} \bm{\phi}(s,a))^2}{2 \sigma_{m}^2}\right)}{p^{\bm{\xi}}_{sas'}},\\
\frac{\partial\log p^{\bm{\xi}}_{sas'}}{\partial \bm{\theta}_{m}} &~=~ \omega_{m}\cdot\frac{\frac{1}{\sigma_{m} \sqrt{2 \pi}} \exp \left(-\frac{(s'-\bm{\theta}^{\top}_{m} \bm{\phi}(s,a))^2}{2 \sigma_{m}^2}\right)}{p^{\bm{\xi}}_{sas'}}\cdot \frac{(s'-\bm{\theta}^{\top}_{m} \bm{\phi}(s,a))\bm{\phi}(s,a)}{\sigma_{m}^2}.
\end{align*}
\end{corollary}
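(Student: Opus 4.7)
The plan is to prove the corollary by direct differentiation, exploiting the fact that both parametrizations have a closed algebraic form for $p^{\bm{\xi}}_{sas'}$, and then simplifying the resulting expressions so that the normalizing term in each case gets rewritten as an expectation over the distribution $p^{\bm{\xi}}_{sa\cdot}$ itself. No result from Section~\ref{sec:DRPMD} is needed; only ordinary calculus plus the definition of $p^{\bm{\xi}}_{sas'}$ in \eqref{inner_para1}--\eqref{inner_para2}.

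For the entropy parametric transition, I would first write
\begin{equation*}
\log p^{\bm{\xi}}_{sas'} = \log \bar{p}_{sas'} + \frac{\bm{\eta}^{\top}\bm{\phi}(s')}{\bm{\lambda}^{\top}\bm{\zeta}(s,a)} - \log Z_{sa}(\bm{\xi}), \quad Z_{sa}(\bm{\xi}) := \sum_{k}\bar{p}_{sak}\exp\!\left(\tfrac{\bm{\eta}^{\top}\bm{\phi}(k)}{\bm{\lambda}^{\top}\bm{\zeta}(s,a)}\right).
\end{equation*}
Differentiating with respect to $\eta_{i}$, the first term contributes $\phi_{i}(s')/(\bm{\lambda}^{\top}\bm{\zeta}(s,a))$, while the derivative of $\log Z_{sa}$ is $\tfrac{1}{Z_{sa}}\sum_{j}\bar{p}_{saj}\exp(\cdot)\cdot\tfrac{\phi_{i}(j)}{\bm{\lambda}^{\top}\bm{\zeta}(s,a)}$, which is exactly $\sum_{j} p^{\bm{\xi}}_{saj}\cdot \phi_{i}(j)/(\bm{\lambda}^{\top}\bm{\zeta}(s,a))$ once the definition of $p^{\bm{\xi}}_{saj}$ is identified. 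For the derivative with respect to $\lambda_{i}$ (the $i$-th coordinate of $\bm{\lambda}$), I would use $\partial_{\lambda_{i}}\bigl(\tfrac{\bm{\eta}^{\top}\bm{\phi}(s')}{\bm{\lambda}^{\top}\bm{\zeta}(s,a)}\bigr) = -\tfrac{\bm{\eta}^{\top}\bm{\phi}(s')\cdot \zeta_{i}(s,a)}{(\bm{\lambda}^{\top}\bm{\zeta}(s,a))^{2}}$, apply the same log-sum-exp identity, and collect terms.

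For the Gaussian mixture parametrization, let $f_{m}(s';s,a):=\tfrac{1}{\sigma_{m}\sqrt{2\pi}}\exp\!\bigl(-\tfrac{(s'-\bm{\eta}_{m}^{\top}\bm{\zeta}(s,a))^{2}}{2\sigma_{m}^{2}}\bigr)$, so $p^{\bm{\xi}}_{sas'} = \sum_{m}\omega_{m} f_{m}(s';s,a)$. Then $\partial_{\omega_{m}}\log p^{\bm{\xi}}_{sas'} = f_{m}(s';s,a)/p^{\bm{\xi}}_{sas'}$ by the chain rule, which matches the stated expression. For $\bm{\eta}_{m}$, only the $m$-th summand depends on $\bm{\eta}_{m}$; differentiating the exponent gives $\partial_{\bm{\eta}_{m}} f_{m}(s';s,a) = f_{m}(s';s,a)\cdot\tfrac{(s'-\bm{\eta}_{m}^{\top}\bm{\zeta}(s,a))\bm{\zeta}(s,a)}{\sigma_{m}^{2}}$, and after dividing by $p^{\bm{\xi}}_{sas'}$ and multiplying by $\omega_{m}$ one recovers the claimed formula.

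There is no genuinely hard step here: the result is essentially a mechanical differentiation. The only potential pitfall is bookkeeping around the normalizing constant $Z_{sa}(\bm{\xi})$ in the entropy case, where one must be careful to recognize that $\bar{p}_{saj}\exp(\cdot)/Z_{sa}(\bm{\xi})$ is precisely $p^{\bm{\xi}}_{saj}$, and around the typographical convention that the parameters $\bm{\theta}$ and $\bm{\eta}$ (respectively $\lambda_{sa}$ and $\bm{\lambda}$) are used interchangeably between the definition and the statement of the corollary. Matching these names up consistently is the only delicate part of the proof.
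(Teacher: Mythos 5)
Your proposal is correct and matches the paper's intended route: the paper offers no explicit derivation (it simply states the formulas follow by ``simple calculation''), and your direct differentiation of the closed forms in \eqref{inner_para1}--\eqref{inner_para2}, with the normalizing sum re-expressed as an expectation under $p^{\bm{\xi}}_{sa\cdot}$, is exactly that calculation. Your remark about reconciling the notation ($\bm{\theta}$ versus $\bm{\eta}$, and $\lambda_{sa}$ versus the coordinate $\lambda_{i}$) correctly identifies the only point requiring care.
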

With exact transition gradient, a simple variant of TMA in Algorithm~\ref{alg:transition_gradient_para} is derived, which directly takes the mirror ascent step on the inner parameter. Compared to our conference work~\citep{wang2023policy}, this generalized TMA is more versatile to incorporate different types of Bregman divergence, encompassing the previously used projected gradient ascent update in~\cite{wang2023policy}. Notably, a concurrent work by~\cite{li2023policy} studies a variant of projected gradient method by leveraging the idea of Langevin dynamics and employs a projected stochastic gradient Langevin method~\citep{lamperski2021projected} to solve the inner problem~\eqref{prob_inRMDP_para}, offering convergence guarantee even without assuming rectangularity.  
\begin{algorithm}[t]
\caption{Generalized Transition Mirror Ascent Method (Generalized TMA)}
\label{alg:transition_gradient_para}
\begin{algorithmic}
\STATE {\bfseries Input:} Target fixed policy with parameter $\bm{\theta}_{k}$, initial transition parameters $\bm{\xi}_{0}$, iteration time $T_{k}$, step size sequence $\{\beta_{t}\}_{t\geq0}$
\FOR{$t = 0,1,\dots,T_{k}-1$}
\STATE Set $\displaystyle\bm{\xi}_{t+1} \gets \argmax_{\bm{\xi}\in\Xi}\left\{\beta_{t}\langle \nabla_{\bm{\xi}}J_{\bm{\rho}}(\bm{\pi}^{\bm{\theta}_{k}},\bm{p}^{\bm{\xi}_{t}}), \bm{\xi}\rangle - D(\bm{\xi},\bm{\xi}_{t})\right\}$.
\ENDFOR
\end{algorithmic}
\end{algorithm}

The above transition gradient theorem serves as the cornerstone for our proposed gradient-based methods that solve the inner problem; however, computing the exact transition gradient is still costly, particularly when dealing with large-scale or continuous problems. To improve the scalability of Algorithm~\ref{alg:transition_gradient_para}, we develop the stochastic variant of generalized TMA, where the exact information of the transition gradient is unknown and it can only be approximated from samples. Specifically, we update the parameter $\bm{\xi}$ on $J_{\bm{\rho}}(\bm{\pi},\bm{p}^{\bm{\xi}})$ with the approximated mirror ascent step 
\begin{equation}\label{eq:stoc_mirror}
    \bm{\xi}_{t+1} ~=~ \mathop{\arg\max}\limits_ {\bm{\xi}\in\Xi}\left\{\beta_{t}\langle \widehat{\nabla_{\bm{\xi}}J_{\bm{\rho}}(\bm{\pi},\bm{p}^{\bm{\xi}_{t}})}, \bm{\xi}\rangle - D(\bm{\xi},\bm{\xi}_{t})\right\},
\end{equation}
where $\widehat{\nabla_{\bm{\xi}}J_{\bm{\rho}}(\bm{\pi},\bm{p}^{\bm{\xi}_{t}})}$ is a stochastic estimate by sampling from the simulator whose expectation approximates the transition gradient. 

While the transition gradient theorem provides an exact expression proportional of the gradient, to implement the stochastic TMA, one is required to obtain samples such that the expectation of the sample gradient is proportional to the actual gradient. Let $\{\tau_{m}\}^{M}_{m=1}$ be $M$ sampling trajectories and define $\tau_{m} = (S_{m,0},A_{m,0},R_{m,1},\cdots,S_{m,T_{m}})$ with length $T_{m}$. We also introduce $G_{\tau_{m}}= \sum^{T_{m}-1}_{i=0}\gamma^{i}R_{m,i}$ as the return of the $m$-th trajectory. Then, the transition gradient can be estimated as
\begin{equation*}
\nabla_{\bm{\xi}}J(\bm{\pi}, \bm{p}^{\bm{\xi}}) = \mathbb{E}^{\bm{p}^{\bm{\xi}}}\left[G_{\tau} \nabla_{\bm{\xi}} \log P^{\bm{\xi}}(\tau)\right]  \approx \frac{1}{M}\sum_{m=1}^{M}\sum_{i=0}^{T_{m}-1} \nabla_{\bm{\xi}} \log p^{\bm{\xi}}_{S_{m,i}A_{m,i}S_{m,i+1}}\cdot G_{m,i},
\end{equation*}
which is similar to the estimation of policy gradient in REINFORCE~\citep{sutton2018reinforcement}. We refer interested reader to Appendix~\ref{app:MCTG} for a brief explanation. Then, a fundamental Monte Carlo algorithm with sample-based transition mirror ascend is outlined in Algorithm~\ref{alg:MC-inner}, named the Monte Carlo Transition Mirror Ascent (MCTMA). Under some standard approximation conditions, this assures improvement in expected performance and convergence to a local optimum.
\begin{algorithm}[t]
\caption{Monte-Carlo Transition Mirror Ascent (MCTMA)}
\label{alg:MC-inner}
\begin{algorithmic}
\STATE {\bfseries Input:} Target fixed policy with parameter $\bm{\theta}_{k}$, initial transition parameters $\bm{\xi}$, step sizes $\{\beta_{m}\}_{m\geq 0}$, number of trajectory $M$
\FOR{$m = 0,1,\dots,M-1$}
 \STATE Sample episode $\tau_{m}$ under $\bm{\pi}^{\bm{\theta}_{k}}$ and $\bm{p}^{\bm{\xi}}$: $S_{m,0},A_{m,0},R_{m,1},\cdots,S_{m,T_{m}}$
\FOR{$i = 0,1,\cdots,T_{m}-1$}
\STATE Set $\bm{\xi}_{i} = \bm{\xi}$
\STATE Define $G_{m,i} = R_{m,i+1} + \gamma R_{m,i+2}+\cdots+\gamma^{T_{m}-i-1}R_{m,T_{m}}$
\STATE Update $\displaystyle\bm{\xi} \gets \argmax_{\bm{z}\in\Xi}\left\{\gamma^{i}\beta_{m}\langle G_{m,i}\nabla_{\bm{\xi}} \log p^{\bm{\xi}_{i}}_{S_{m,i}A_{m,i}S_{m,i+1}}, \bm{z}\rangle - D(\bm{z},\bm{\xi}_{i})\right\}$
\ENDFOR
\ENDFOR
\end{algorithmic}
\end{algorithm} 

The theoretical analysis in Section~\ref{sec:DRPMD} implies that the performance of DRPMD is independent of the specific method chosen to solve the inner problem. For the tabular problem with small state and action spaces, the cost of computing the exact gradient for the policy and transition may be acceptable, making DRPMD aligned with TMA or generalized TMA suitable without stochastic approximation. However, for large-scale or continuous problems where collecting full first-order information is costly, using sample data to estimate the gradient of both policy and transition becomes an attractive alternative. Therefore, we further develop our first robust policy-based learning algorithm outlined in Algorithm~\ref{alg:RMCPMD}, named Robust Monte-Carlo Policy Mirror Descent (RMCPMD). 
\begin{algorithm}[t]
\caption{Robust Monto-Carlo Policy Mirror Descent (RMCPMD)}
\label{alg:RMCPMD}
\begin{algorithmic}
\STATE {\bfseries Input:} Initial policy parameter $\bm{\theta}$ and transition parameter $\bm{\xi}$, step sizes $\{\alpha_{t}\}_{t\geq 0}$ and $\{\beta_{t}\}_{t\geq 0}$, numbers of iteration $T$ and trajectories $M,N$
\FOR{$t = 0,1,\dots,T-1$}
\STATE \texttt{// Update inner parameter}
\FOR{$m = 0,1,\dots, M-1$}
\STATE Sample episode $\tau_{m}$ under $\bm{\pi}^{\bm{\theta}}$ and $\bm{p}^{\bm{\xi}}$: $S_{m,0},A_{m,0},R_{m,1},\cdots,S_{m,T_{m}}$
\FOR{$i = 0,1,\cdots,T_{m}-1$}
\STATE Set $\bm{\xi}_{i} = \bm{\xi}$
\STATE Define $G_{m,i} = R_{m,i+1} + \gamma R_{m,i+2}+\cdots+\gamma^{T_{m}-i-1}R_{m,T_{m}}$
\STATE Update $\displaystyle\bm{\xi} \gets \argmax_{\bm{z}\in\Xi}\left\{\gamma^{i}\beta_{t}\langle G_{m,i}\nabla_{\bm{\xi}} \log p^{\bm{\xi}_{i}}_{S_{m,i}A_{m,i}S_{m,i+1}}, \bm{z}\rangle - D(\bm{z},\bm{\xi}_{i})\right\}$
\ENDFOR
\ENDFOR
\STATE \texttt{// Update outer parameter}
\FOR{$n = 0,1,\dots, N-1$}
\STATE Sample episode $\tau_{n}$ under $\bm{\pi}^{\bm{\theta}}$ and $\bm{p}^{\bm{\xi}}$: $S_{n,0},A_{n,0},R_{n,1},\cdots,S_{n,T_{n}}$
\FOR{$j = 0,1,\cdots,T_{n}-1$}
\STATE Set $\bm{\theta}_{j} = \bm{\theta}$
\STATE Define $G_{n,j} = R_{n,j+1} + \gamma R_{n,j+2}+\cdots+\gamma^{T_{n}-j-1}R_{n,T_{n}}$
% \STATE Update $\bm{\theta} \gets \proj_{\Theta}\left(\bm{\theta} + \alpha_{t}\cdot\gamma^{t}\cdot G_{m,j}\cdot\nabla_{\bm{\theta}} \log \pi^{\bm{\theta}}_{S_{m,j}A_{m,j}}\right)$
\STATE Update $\displaystyle\bm{\theta} \gets  \argmin_ {\bm{x}\in\Theta}\left\{ \gamma^{j}\alpha_{t}\langle G_{n,j}\nabla_{\bm{\theta}} \log \pi^{\bm{\theta}_{j}}_{S_{n,j}A_{n,j}}, \bm{x}\rangle + D(\bm{x},\bm{\theta}_{i})\right\}$
\ENDFOR
\ENDFOR
\ENDFOR
\end{algorithmic}
\end{algorithm}

RMCPMD involves an additional inner sampling process to update the transition parameter $\bm{\xi}$ under current policy, then updates the policy parameter given the estimate gradient. It is worth noting that, while most recent policy-base algorithms for addressing RMDPs are limited to the tabular rectangularity setting~\citep{li2022first,wang2022policy,kumar2023towards,zhou2023natural,kumar2024policy,lin2024single}, RMCPMD serves as a foundational technique in robust policy learning and can naturally handle continuous action spaces. As a stochastic variant of DRPMD, RMCPMD leverages the idea of Monte Carlo sampling, which may suffer from high variance. We leave the study of further algorithmic improvement (\ie, variance reduction, faster speed, ...) for future research. That being said, this proposed RMCPMD method still performs well in our numerical experiments.

\section{Numerical Evaluation}\label{sec:Numerical}

We now demonstrate the convergence and robustness of DRPMD combined with three inner solution methods, including the transition mirror ascent method (Algorithm~\ref{alg:transition_gradient}), the general transition mirror ascent method (Algorithm~\ref{alg:transition_gradient_para}), and the Monte-Carlo transition mirror ascent (Algorithm~\ref{alg:MC-inner}). In our experiment, we use the common SE distance as the choice of Bregman divergence applied in our proposed algorithms. The results were generated on a computer with an i7-11700 CPU with 16GB RAM. The algorithms are implemented in Python and C++, and we use Gurobi 11.0.3 to solve any linear or quadratic optimization problems involved. To facilitate the reproducibility of the domains, the full source code, which was used to generate
them, is available at \url{https://github.com/JerrisonWang/JMLR-DRPMD}. To assess the performance of the proposed algorithms and parameterizations, we consider three standard test problems: the Garnet problem~\citep{archibald1995generation}, the inventory management problem~\citep{porteus2002foundations}, and the cart-pole problem~\citep{lagoudakis2003least,brockman2016openai}. 

\subsection{Garnet Problem}
In this experiment, we first demonstrate the convergence behavior of DRPMD on both $(s,a)$-rectangular and $s$-rectangular RMDPs. Then, we illustrate the effect of the decreasing tolerance sequence on the computational efficiency. We consider the Garnet MDPs as nominal MDPs and use them to construct rectangular RMDPs. MDPs generated by the Generalized Average Reward Non-stationary Environment Test bench (Garnet) are a class of synthetic finite MDPs that can be generated randomly. A general GARNET MDP $\mathcal{G}(S,A,b)$ is characterized by three parameters, where $S$ is the number of states, $A$ is the number of actions, and $b$ is a branching factor which determines the number of possible next states for each state-action pair and controls sparsity of the transition kernel.

In the experimental setup, we generate the cost $c_{sas'}$ corresponding to $(s,a,s')\in\mathcal{S}\times\mathcal{A}\times\mathcal{S}$ randomly from the uniform distribution on $[0,5]$ and the transition kernel randomly as the nominal transition $\bm{p}_{c}\in\mathcal{P}$. We let the discount factor $\gamma=0.95$. Constrained $L_{1}$-norm rectangular ambiguity sets~\citep{petrik2014raam,ghavamzadeh2016safe,ho2021partial} are considered in our simulation. Our $(s,a)$-rectangular ambiguity sets $\mathcal{P}_{sa}$ has the following form
\begin{equation*}
    \mathcal{P}_{s,a}:=\left\{\bm{p}\in\Delta^{S} \mid \|\bm{p}-\bm{p}_{c,sa}\|_{1} \leq \kappa_{sa}\right\},
\end{equation*}
and similarly we use the following $s$-rectangular ambiguity sets $\mathcal{P}_{s}$
\begin{equation*}
    \mathcal{P}_{s}:=\left\{\left(\bm{p}_{s 1}, \ldots, \bm{p}_{s A}\right)\in(\Delta^{S})^{A} \mid \sum_{a\in\mathcal{A}}\|\bm{p}_{sa}-\bm{p}_{c,sa}\|_{1} \leq \kappa_{s}\right\}.
\end{equation*}
We randomly select the ambiguity budget $\kappa_{sa}$ and $\kappa_{s}$ from a uniform distribution $[0.3,0.6]$ for all $s\in\mathcal{S},a\in\mathcal{A}$.

\begin{figure}
\centering
\subfigure[$(s,a)$-rectangular RMDPs]{\label{fig:Garnet_sa}\includegraphics[width=0.4\textwidth]{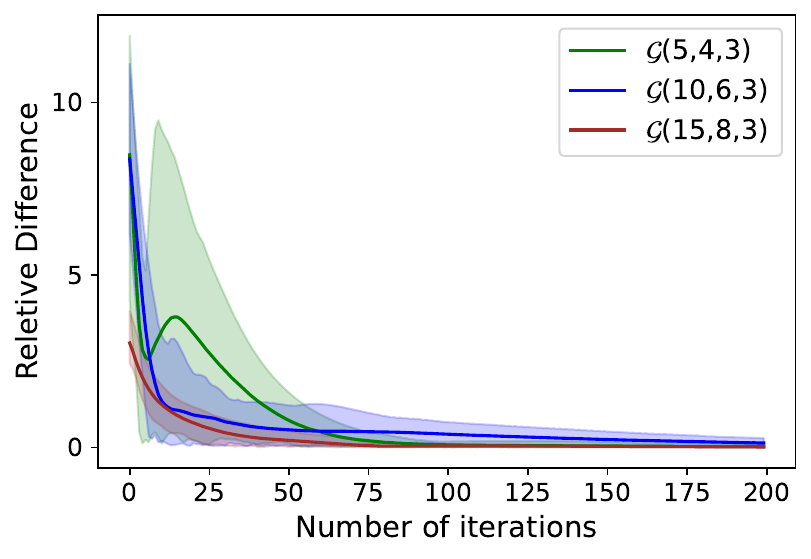}}
\hspace{.5in}
\subfigure[$s$-rectangular RMDPs]{\label{fig:Garnet_s}\includegraphics[width=0.4\textwidth]{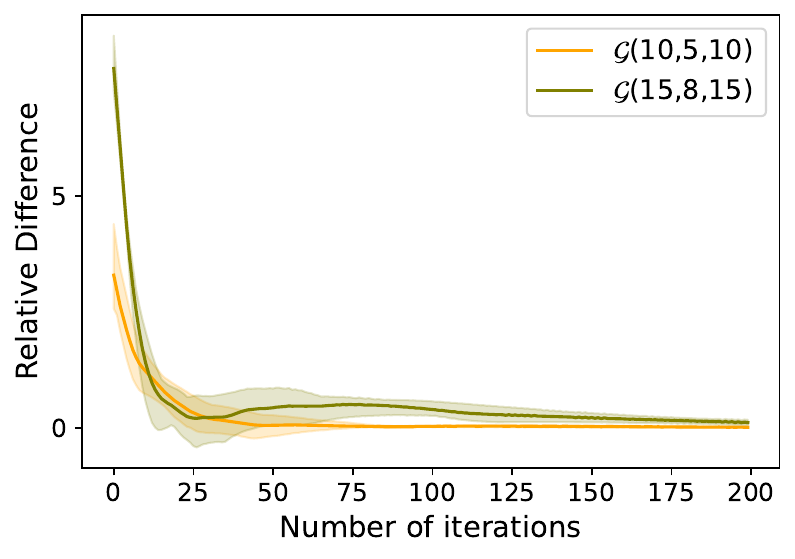}}
\caption{The relative difference of objective values computed by DRPMD and RVI for Garnet problems with different sizes}
\label{fig:mainfig}
\end{figure}
To demonstrate the convergence performance of DRPMD, we implement both our proposed algorithm and the robust value iteration (RVI), the state-of-the-art benchmark for solving rectangular RMDPs through iterative robust Bellman optimality updates~\citep{iyengar2005robust,nilim2005robust,wiesemann2013robust,ho2021partial,nilim2005robust}. In each of our Garnet problems, we begin with an identical initial policy and compare the objective values $J(\bm{\pi}^{1}_{t},\bm{p}^{1}_{t})$ of DRPMD with those values $J(\bm{\pi}^{2}_{t},\bm{p}^{2}_{t})$ computed by RVI at different iterations. To illustrate the effect of the decreasing tolerance sequence $\{\epsilon_{t}\}_{t\geq0}$ on the computational efficiency of DRPMD, we test DRPMD with two tolerance selections, typically referred as:
\begin{itemize}
    \item \emph{DRPMD-Exa}, where DRPMD is employed with exactly solved inner loop (\ie, $\epsilon_{t}=10^{-6},\forall t>0$),
    \item \emph{DRPMD-Dec}, where DRPMD with approximately solved inner loop under a decreasing tolerance sequence $\epsilon_{t+1}=\gamma\epsilon_{t}$ such that $\epsilon_{t+1}\leq\gamma\epsilon_{t}$. 
\end{itemize}
Then, we compare the required computation time of DRPMD-Dec to that of DRPMD-Exa.

For each Garnet problem, we implement both DRPMD and RVI with 200 iterations to solve 50 sample instances. In Figure~\ref{fig:Garnet_sa}, the relative difference (i.e., $|J(\bm{\pi}^{1}_{t},\bm{p}^{1}_{t}) - J(\bm{\pi}^{2}_{t},\bm{p}^{2}_{t})|$) has been shown as decreasing when DRPMD and RVI are performed. The upper and lower envelopes of the curves correspond to the 95 and 5 percentiles of the 50 samples, respectively. As expected, the relative difference decreases to zero as the number of iterations increases, which implies that DRPMD converges to
a nearly identical value computed by RVI, confirming the global convergence behavior of DRPMD. Similar results are observed for the s-rectangular case in Figure~\ref{fig:Garnet_s}.

\begin{table}[h]
\centering
\resizebox{\textwidth}{!}{
\begin{tabular}{|ccc|cc|cc|}
\toprule
\multicolumn{3}{|c|}{} & \multicolumn{2}{|c|}{ $(s,a)$-rectangular } & \multicolumn{2}{|c|}{ $s$-rectangular } \\
\midrule
States & Actions & Branches & DRPMD-Exa & DRPMD-Dec & DRPMD-Exa & DRPMD-Dec \\
\midrule
5 & 4 & 3 & 24.10 & 3.95 & 14.50 & 2.40 \\
10 & 6 & 3 & 181.50 & 24.85 & 226.35 & 23.20 \\
15 & 8 & 3 & $636.65$ & 88.20 & $1,236.50$ & 188.25 \\
20 & 10 & 8 & $1,259.85$ & 194.15 & $2,070.25$ & 514.80 \\
30 & 10 & 10 & $3,938.75$ & 457.50 & $8,597.50$ & $1414.60$ \\
50 & 10 & 10 & $>20,000.00$ & $2,556.65$ & $>20,000.00$ & $4,029.35$ \\
\bottomrule
\end{tabular}
}
\caption{Runtime (in seconds) required by DRPDM with two different tolerance selections.}
\label{tab:DRPMD-tolerance}
\end{table}
Table~\ref{tab:DRPMD-tolerance} reports the runtimes required by the parallelized versions of DRPMD-Exa and DRPMD-Dec to solve our Garnet problems to approximate optimality. To this end, we run our algorithms on each Garnet problem with different problem sizes over $20$ sample instances, recording the average computation time. Both DRPMD-Exa and DRPMD-Dec are terminated when $J(\bm{\pi}_{t},\bm{p}_{t})$ shows minimal change (\ie, $\|J(\bm{\pi}_{t+1},\bm{p}_{t+1}) - J(\bm{\pi}_{t},\bm{p}_{t})\|\leq10^{-5}$) or the runtime exceeds $20,000$ seconds. One can observe that by applying the decreasing tolerance sequence, DRPMD (DRPMD-Dec) significantly decreases the runtimes for both $(s,a)$- and $s$-rectangular ambiguity set when compared to its counterpart (DRPMD-Exa).

\subsection{Inventory Management Problem}
Our second experiment considers the inventory management problem, where a retailer engages in the ordering, storage, and sale of a single product over an infinite time horizon. The states and actions of the MDP represent the inventory levels and the order quantities in any given time period, respectively. The stochastic demands drive the stochastic state transitions. Any items held in inventory incur deterministic per-period holding costs. The retailer's goal is to find a policy that minimizes the total cost without knowing the exact transition kernel.

Here, we consider the inventory management problem with a continuous state space and a discrete action space. We employ the softmax parameterization~\eqref{def:Soft-policy} with the linear approximation 
\begin{equation*}
    \pi^{\bm{\theta}}_{sa} ~:=~ \frac{\exp(\bm{\theta}^{\top}\bm{\zeta}(s,a))}{\sum_{a^{\prime} \in \mathcal{A}} \exp(\bm{\theta}^{\top}\bm{\zeta}(s,a'))}, \quad \forall s \in \mathcal{S}, \forall a\in \mathcal{A}.
\end{equation*}
to mitigate the impact of the action space size and use the Gaussian mixture parametric transition~\eqref{inner_para2} due to the continuous state space. We use the radial-type features~\citep{sutton2018reinforcement} in policy and transition parameterizations with a two-dimensional state feature ($n=2$) 
\begin{align*}
\bm{\zeta}_{i}(s,a) ~:=~ \exp \left(-\frac{\left\|s-[\mu_{\bm{\zeta},s}]_{i}\right\|^2+\left\|a-[\mu_{\bm{\zeta},a}]_{i}\right\|^2}{2 [\sigma_{\bm{\zeta}}]_{i}^2}\right), \quad\forall i\in[n], 
\end{align*}
where we set $[\mu_{\bm{\zeta},s}]_{1}=-1.3$, $[\mu_{\bm{\zeta},s}]_{2}=-0.7$, $[\mu_{\bm{\zeta},a}]_{1} = 0.1$, $[\mu_{\bm{\zeta},a}]_{2} = 0.5$, $[\sigma_{\bm{\zeta}}]_{1} = 10$, $[\sigma_{\bm{\zeta}}]_{2} = 5$. We consider the simplest scenario where $m=1$ in the Gaussian mixture parameterization and a standard $L_{1}$-norm constrained ambiguity set, that is $\Xi:= \{\bm{\eta}\mid\bm{\eta}\in\mathbb{R}^{2},\|\bm{\eta} - \bm{\eta}_{c}\|_{1}\leq\kappa_{\eta}\}$, where we set $\bm{\eta}_{c}:=[-2,3.5]$ and $\kappa_{\eta} = 15$. The initial updating step size for $\bm{\eta}$ on the inner problem is chosen as $\beta_{0}=0.1$. Other parameters are included in the published codes.

In this experiment, we implement the proposed Robust Monto-Carlo Policy Mirror Descent (RMCPMD) method to compute the optimal robust value of the underlying continuous inventory management problem under ambiguity. To illustrate the robustness of RMCPMD, we also apply the non-robust Monte-Carlo policy gradient (MC-PG) method as a benchmark to solve the optimal objective value of the same problem with nominal model. Similar to the experiments on the GARNET problem, we run both RMCPMD with the inner Gaussian mixture parameterization and the non-robust MC-PG over $20$ times and record the objectives $\Phi(\bm{\pi}^{\bm{\theta}^{1}_{t}})$ and $\Phi(\bm{\pi}^{\bm{\theta}^{2}_{t}})$ computed by RMCPMD and non-robust MC-PG at each iteration $t$. We then plot our results in Figure~\ref{RMCPMD-Inventory} with respect to the number of iterations $t$. The
upper and lower envelopes of the curves correspond to the
$95$ and $5$ percentiles of the $20$ curves, respectively. The comparison result shows that our method obtains the policy that outperforms the policy computed by the non-robust policy gradient method, which also showcases both the robustness of our method and the effectiveness of our inner parameterization.

\begin{figure}[t]
\centering
%\vspace{.3in}
\includegraphics[width=0.5\textwidth]{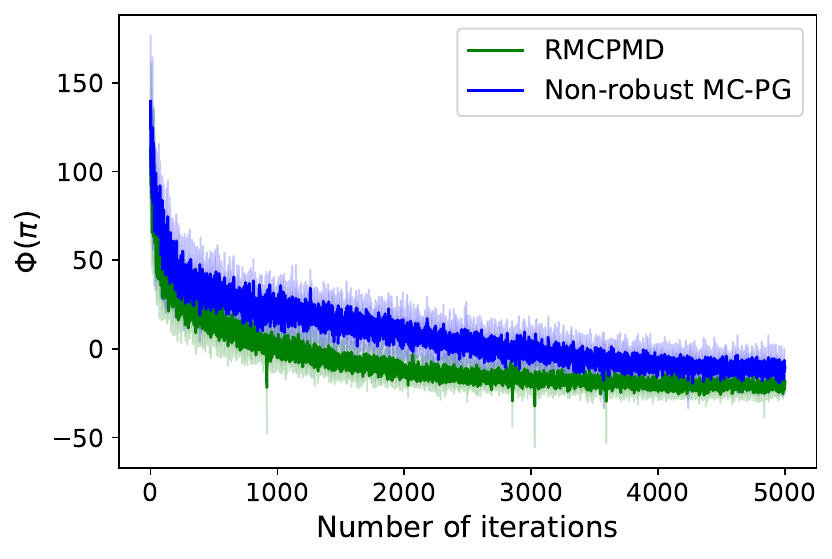}
%\vspace{.3in}
\caption{RMCPMD with GM parametric transitions v.s. Non-robust MC-PG on the inventory management problem}
\label{RMCPMD-Inventory}
\end{figure}

\subsection{Cart-Pole Problem}
We now broaden our empirical investigation by considering a more general neural policy parameterization in our experiment. We proceed to evaluate our algorithm in the cart-pole problem, where a pole has to be balanced upright on top of a cart that moves along a single dimension. At any point in time, the state of the system is described by
four continuous quantities: the cart’s position and velocity, as well as the pole’s angle and angular velocity. To balance the pole, one needs to apply a force to the cart from the left or right. 

To address this problem with continuous state space and discrete action space, we typically consider the softmax policy parameterization with a neural network function approximation and inner Gaussian mixture parametric transitions. Specifically, the cart-pole dynamics in this experiment are described by the Gaussian mixture parameterized transition kernel with $m=1$ as follows: 
\begin{equation*}
   p_{\bm{s}a\bm{s}'} ~:=~\frac{1}{ (2\pi)^{2}|\Sigma|^{\frac{1}{2}}} \exp \left( -\frac{1}{2}(\bm{s}'-(1+\delta) \bar{\bm{s}})^{\top}\Sigma^{-1}(\bm{s}'-(1+\delta) \bar{\bm{s}})\right), 
\end{equation*}
where $\bar{\bm{s}}$ is the deterministic next state while the action $a$ is taken at the state $\bm{s}$ in the ideal cart-pole physical environment, and $\delta$ represents the disturbance factor, indicating the potential unexpected deviation from the correct next state $\bm{s}$.

We trained two policies: (1) the non-robust policy, trained using the standard policy gradient method, and (2) the robust policy, trained using RMCPMD, in the disturbed experiment with the ambiguity $\delta\in[0,\kappa_{0}]$. To demonstrate the robustness of these two policies against environmental perturbations, we test them in 11 new perturbed environments with the form $\|\delta\|\leq\kappa$. We introduce the ``Uncertainty Level'' to quantify the extent of perturbation imposed on the environment: the zero uncertainty level corresponds to the ideal scenario with no perturbation ($\kappa = 0$), while higher uncertainty levels correspond to greater perturbations (as $\kappa$ increasing). The maximum number of steps is set to $10,000$, and we define ``Success'' in a cart-pole game as maintaining an upright cart position for the entire $10,000$ step duration. We repeat this game 1000 times across different uncertainty levels and record the steps before termination by both policies in each environment.

\begin{table}[h!]
\centering
\resizebox{\textwidth}{!}{
\begin{tabular}{ccccccccccccc}
\toprule
\multicolumn{2}{c}{Uncertainty Level} & 0 & 1 & 2 & 3 & 4 & 5 & 6 & 7 & 8 & 9 & 10 \\
\midrule
\multirow{4}{*}{\centering MDP Policy} & Min & 10000 & 352 & 115 & 151 & 97 & 97 & 113 & 108 & 81 & 103 & 78 \\
                          & Max & 10000 & 10000 & 10000 & 10000 & 4523 & 2442 & 1553 & 1763 & 1423 & 1113 & 1310 \\
                          & Mean & 10000 & 9542 & 6118 & 1961 & 902 & 615 & 470 & 413 & 373 & 360 & 356 \\
                          & Success Rate & 100 & 90 & 31.1 & 0.1 & 0 & 0 & 0 & 0 & 0 & 0 & 0 \\
\midrule
\multirow{4}{*}{\centering RMDP Policy} & Min & 10000 & 10000 & 6625 & 406 & 480 & 73 & 166 & 90 & 125 & 62 & 81 \\
                          & Max & 10000 & 10000 & 10000 & 10000 & 10000 & 10000 & 10000 & 10000 & 10000 & 10000 & 10000 \\
                          & Mean & 10000 & 10000 & 9994 & 9971 & 9893 & 9626 & 9157 & 8519 & 8021 & 7221 & 7121 \\
                          & Success Rate & 100 & 100 & 99.8 & 99.2 & 97.5 & 92 & 83.7 & 73.3 & 62.1 & 50 & 48.5 \\
\bottomrule
\end{tabular}
}
\caption{Comparative Results in the Cart-Pole Experiment}
\label{tab:Cartpole}
\end{table}

The results of our numerical study on the cart-pole problem are provided in Table~\ref{tab:Cartpole}. 
In the ideal environment with zero uncertainty level, both the robust and non-robust policies perform well, achieving a $100\%$ success rate.   However, as the uncertainty level increases, the performance of the non-robust policy deteriorates significantly, exhibiting extremely poor performance in perturbed environments with an uncertainty level greater than three, where it achieves a zero success rate. In contrast, the robust policy shows resilience against the inherent randomness, maintaining a success rate around $50\%$ in our most perturbed environment. As expected, the robust policy computed using the DRMCPMD method significantly outperforms the non-robust policy derived through the MC policy gradient method in a perturbed environment. This observation highlights the robustness of our approach.

\section{Conclusion}
We proposed a new policy optimization algorithm, DRPMD, to solve RMDPs over general compact ambiguity sets. Our algorithm converges to the globally optimal robust policy under both direct and softmax policy parametrizations. Our established iteration complexities match or improve upon the best-known convergence rate in the literature of policy gradient methods applied to RMDPs. Moreover, we developed the first gradient-based solution method, TMA, and its stochastic variant, MCTMA, for solving the inner maximization with two novel transition parameterizations. In our experiments, our numerical results demonstrate the global convergence of DRPMD, its reliable performance against the non-robust approach, and its applicability to broader policy classes.

% \acks{We would like to acknowledge support for this project
% from .... }

% Manual newpage inserted to improve layout of sample file - not
% needed in general before appendices/bibliography.

%\newpage
%\bibliographystyle{plain}
\bibliography{reference.bib}

\newpage
\appendix
\vskip 0.2in
\section{Auxiliary Lemmas}
\subsection{Properties of Markov Decision Process}\label{app:MDP-pre}
\begin{theorem}(Policy gradient theorem)\label{the:appA1-1}
Fix a map $\bm{\theta}\mapsto\pi^{\bm{\theta}}_{sa}$ that for any $(s,a)\in\mathcal{S}\times\mathcal{A}$ is differentiable and fix an initial distribution $\bm{\rho}\in\Delta^{S}$. Then
\begin{equation}
    \nabla_{\bm{\theta}}J_{\bm{\rho}}(\bm{\pi}^{\bm{\theta}},\bm{p}) ~=~ \frac{1}{1-\gamma}    \mathbb{E}_{s\sim\bm{d}^{\bm{\pi}^{\bm{\theta}},\bm{p}}_{\rho}}\left[\sum_{a\in\mathcal{A}}\nabla_{\bm{\theta}}\pi^{\bm{\theta}}_{sa}\cdot q^{\bm{\pi}^{\bm{\theta}},\bm{p}}_{sa}\right].
\end{equation}
\end{theorem}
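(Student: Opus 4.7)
The plan is to mirror the calculation used in Lemma~\ref{lem:inner_gradient} but differentiate with respect to the policy parameter rather than the transition parameter. The starting point is the Bellman recursion for the value function, namely
\begin{equation*}
v^{\bm{\pi}^{\bm{\theta}},\bm{p}}_{s} ~=~ \sum_{a\in\mathcal{A}} \pi^{\bm{\theta}}_{sa}\sum_{s'\in\mathcal{S}} p_{sas'}\bigl(c_{sas'}+\gamma v^{\bm{\pi}^{\bm{\theta}},\bm{p}}_{s'}\bigr).
\end{equation*}
Since $\bm{p}$ and $\bm{c}$ are independent of $\bm{\theta}$, differentiating term by term and recognizing the action-value function yields
\begin{equation*}
\nabla_{\bm{\theta}} v^{\bm{\pi}^{\bm{\theta}},\bm{p}}_{s}
~=~ \sum_{a\in\mathcal{A}} \nabla_{\bm{\theta}}\pi^{\bm{\theta}}_{sa}\cdot q^{\bm{\pi}^{\bm{\theta}},\bm{p}}_{sa}
~+~ \gamma \sum_{a\in\mathcal{A}} \pi^{\bm{\theta}}_{sa}\sum_{s'\in\mathcal{S}} p_{sas'}\,\nabla_{\bm{\theta}} v^{\bm{\pi}^{\bm{\theta}},\bm{p}}_{s'}.
\end{equation*}

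Next, I would unroll this recursion in the spirit of Lemma~\ref{lem:inner_gradient}. Writing $p^{\bm{\pi}^{\bm{\theta}}}_{ss'}(t)$ for the $t$-step transition probability under policy $\bm{\pi}^{\bm{\theta}}$ (with the convention $p^{\bm{\pi}^{\bm{\theta}}}_{ss'}(0)=\bm{1}\{s=s'\}$), repeated substitution of the recursion into itself produces
\begin{equation*}
\nabla_{\bm{\theta}} v^{\bm{\pi}^{\bm{\theta}},\bm{p}}_{s}
~=~ \sum_{t=0}^{\infty} \gamma^{t}\sum_{s'\in\mathcal{S}} p^{\bm{\pi}^{\bm{\theta}}}_{ss'}(t)\sum_{a\in\mathcal{A}} \nabla_{\bm{\theta}}\pi^{\bm{\theta}}_{s'a}\cdot q^{\bm{\pi}^{\bm{\theta}},\bm{p}}_{s'a}.
\end{equation*}
Since $\gamma\in(0,1)$ and $q$ is uniformly bounded by $1/(1-\gamma)$, the interchange of differentiation and the infinite series is justified by dominated convergence.

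Finally, I would take the expectation over the initial state $\tilde{s}_{0}\sim\bm{\rho}$, so that
\begin{equation*}
\nabla_{\bm{\theta}} J_{\bm{\rho}}(\bm{\pi}^{\bm{\theta}},\bm{p})
~=~ \sum_{s\in\mathcal{S}}\rho_{s}\sum_{t=0}^{\infty}\gamma^{t}\sum_{s'\in\mathcal{S}} p^{\bm{\pi}^{\bm{\theta}}}_{ss'}(t)\sum_{a\in\mathcal{A}} \nabla_{\bm{\theta}}\pi^{\bm{\theta}}_{s'a}\cdot q^{\bm{\pi}^{\bm{\theta}},\bm{p}}_{s'a},
\end{equation*}
and match the inner double sum over $s,t$ weighted by $\gamma^{t}\rho_{s}p^{\bm{\pi}^{\bm{\theta}}}_{ss'}(t)$ with the occupancy measure defined in~\eqref{def:occu}, namely $d^{\bm{\pi}^{\bm{\theta}},\bm{p}}_{\bm{\rho}}(s')=(1-\gamma)\sum_{s}\rho_{s}\sum_{t\geq 0}\gamma^{t}p^{\bm{\pi}^{\bm{\theta}}}_{ss'}(t)$. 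Pulling out the factor $1/(1-\gamma)$ and rewriting the remaining sum as an expectation under $\bm{d}^{\bm{\pi}^{\bm{\theta}},\bm{p}}_{\bm{\rho}}$ delivers the claimed identity.

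The only technical delicacy is the swap of the derivative with the infinite series in the unrolled recursion; beyond that, the argument is a clean bookkeeping exercise parallel to the transition-gradient calculation already carried out in Lemma~\ref{lem:inner_gradient} and Theorem~\ref{the_sec6_1}, so I do not anticipate any real obstacle.
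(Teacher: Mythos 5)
Your derivation is correct, and it is essentially the standard proof of the policy gradient theorem: differentiate the Bellman recursion in $\bm{\theta}$ (the transition kernel and costs being $\bm{\theta}$-independent), unroll the resulting recursion into a discounted sum over $t$-step transition probabilities, and absorb the weights $\gamma^{t}\rho_{s}p^{\bm{\pi}^{\bm{\theta}}}_{ss'}(t)$ into the occupancy measure~\eqref{def:occu}. The paper itself does not prove Theorem~\ref{the:appA1-1}; it simply cites the literature (Sutton et al.), so your proposal supplies a self-contained argument that the paper omits. Structurally, what you wrote is exactly parallel to the paper's own appendix derivations of the direct-parameterization gradient (Lemma~\ref{lem:appC1-1}) and of the transition gradient (Lemma~\ref{lem:inner_gradient}, Theorem~\ref{the_sec6_1}), which recursively substitute the same kind of identity and condense the $t$-step kernels in the same way, so your route is the natural one and fully consistent with the paper's level of rigor.

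One small point of precision: the "technical delicacy" is not really the interchange of $\nabla_{\bm{\theta}}$ with an infinite series of $q$-values (the $q$-bound alone does not dominate the derivative terms); it is the vanishing of the residual $\gamma^{T}\sum_{s'}p^{\bm{\pi}^{\bm{\theta}}}_{ss'}(T)\,\nabla_{\bm{\theta}}v^{\bm{\pi}^{\bm{\theta}},\bm{p}}_{s'}$ after $T$ substitutions, which requires knowing that $\nabla_{\bm{\theta}}v^{\bm{\pi}^{\bm{\theta}},\bm{p}}$ is bounded. In the finite state--action setting this is immediate, e.g.\ from the closed form $\bm{v}^{\bm{\pi}^{\bm{\theta}},\bm{p}}=(\bm{I}-\gamma\bm{P}^{\bm{\pi}^{\bm{\theta}}})^{-1}\bm{c}^{\bm{\pi}^{\bm{\theta}}}$, which also gives differentiability of $v$ in $\bm{\theta}$ in the first place. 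With that observation substituted for the dominated-convergence remark, the argument is complete; the paper glosses over the same point in its own analogous proofs.
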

\begin{proof}
See, for example, Theorem 1 in \cite{sutton2018reinforcement}.
\end{proof}
The following lemmas are helpful throughout the convergence analysis of policy optimization.
\begin{lemma}(First performance difference lemma~\citep{kakade2002approximately})\label{lem_sec3_3}
For any $\bm{\pi},\bm{\pi}'\in\Pi$, $\bm{p}\in\mathcal{P}$ and  $\bm{\rho}\in\Delta^{S}$, we have
\begin{equation}
   J_{\bm{\rho}}(\bm{\pi},\bm{p})-J_{\bm{\rho}}(\bm{\pi}',\bm{p}) ~=~\frac{1}{1-\gamma} \sum_{s, a} d_{\bm{\rho}}^{\bm{\pi},\bm{p}}(s) \pi_{sa} \left(q^{\bm{\pi}',\bm{p}}_{sa} - v^{\bm{\pi}',\bm{p}}_{s}\right).
\end{equation}
\end{lemma}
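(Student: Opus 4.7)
The plan is to use the standard ``add and subtract'' telescoping trick attributed to \cite{kakade2002approximately}, which expresses the difference $J_{\bm{\rho}}(\bm{\pi},\bm{p})-J_{\bm{\rho}}(\bm{\pi}',\bm{p})$ as a discounted sum of one-step advantages of $\bm{\pi}'$ evaluated along trajectories generated by $\bm{\pi}$. Writing everything in terms of the trajectory distribution under $(\bm{\pi},\bm{p})$ will let me convert the time sum into the occupancy measure $\bm{d}_{\bm{\rho}}^{\bm{\pi},\bm{p}}$ at the end.

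First I would rewrite $J_{\bm{\rho}}(\bm{\pi}',\bm{p}) = \mathbb{E}_{\tilde{s}_0\sim\bm{\rho}}[v^{\bm{\pi}',\bm{p}}_{\tilde{s}_0}]$ and realize the initial-state expectation under the trajectory law induced by $(\bm{\pi},\bm{p})$, so that
\[
-J_{\bm{\rho}}(\bm{\pi}',\bm{p}) \;=\; \mathbb{E}_{\bm{\pi},\bm{p},\tilde{s}_0\sim\bm{\rho}}\!\left[\sum_{t=0}^{\infty}\bigl(\gamma^{t+1}v^{\bm{\pi}',\bm{p}}_{\tilde{s}_{t+1}}-\gamma^{t}v^{\bm{\pi}',\bm{p}}_{\tilde{s}_t}\bigr)\right],
\]
which is a harmless telescoping identity (the tail term vanishes because $v^{\bm{\pi}',\bm{p}}$ is bounded and $\gamma<1$). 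Adding this to the series expansion of $J_{\bm{\rho}}(\bm{\pi},\bm{p})$ from~\eqref{eq:return} merges both infinite sums into a single discounted sum of the one-step quantity $c_{\tilde{s}_t\tilde{a}_t\tilde{s}_{t+1}}+\gamma v^{\bm{\pi}',\bm{p}}_{\tilde{s}_{t+1}}-v^{\bm{\pi}',\bm{p}}_{\tilde{s}_t}$.

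Next I would take conditional expectation given $(\tilde{s}_t,\tilde{a}_t)$. Under the law of the trajectory generated by $\bm{\pi}$ and $\bm{p}$, we have $\tilde{s}_{t+1}\sim\bm{p}_{\tilde{s}_t\tilde{a}_t}$, so by the very definition of the action-value function under the \emph{other} policy $\bm{\pi}'$,
\[
\mathbb{E}\!\left[c_{\tilde{s}_t\tilde{a}_t\tilde{s}_{t+1}}+\gamma v^{\bm{\pi}',\bm{p}}_{\tilde{s}_{t+1}}\,\Big|\,\tilde{s}_t,\tilde{a}_t\right] \;=\; q^{\bm{\pi}',\bm{p}}_{\tilde{s}_t\tilde{a}_t}.
\]
The difference collapses to $\mathbb{E}_{\bm{\pi},\bm{p},\tilde{s}_0\sim\bm{\rho}}\!\left[\sum_{t=0}^{\infty}\gamma^t\bigl(q^{\bm{\pi}',\bm{p}}_{\tilde{s}_t\tilde{a}_t}-v^{\bm{\pi}',\bm{p}}_{\tilde{s}_t}\bigr)\right]$. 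Finally, I would swap the expectation and the sum (justified by absolute convergence since costs are in $[0,1]$), factor the inner expectation over $\tilde{a}_t\sim\bm{\pi}_{\tilde{s}_t}$, and collect the $\gamma^t$-weighted state visitation via the definition~\eqref{def:occu} of $d_{\bm{\rho}}^{\bm{\pi},\bm{p}}(s)$, giving the $1/(1-\gamma)$ prefactor and the claimed identity.

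I do not expect a real obstacle here; the only subtle point is making the telescoping argument rigorous, which requires Fubini/dominated convergence with the bound $v^{\bm{\pi}',\bm{p}}_s\le 1/(1-\gamma)$, and a careful bookkeeping that $\tilde{a}_t\sim\bm{\pi}_{\tilde{s}_t}$ (not $\bm{\pi}'_{\tilde{s}_t}$) so that the one-step expectation yields $q^{\bm{\pi}',\bm{p}}$ rather than $v^{\bm{\pi}',\bm{p}}$. That mismatch between the policy generating the trajectory and the policy whose value appears inside the expectation is the whole content of the lemma.
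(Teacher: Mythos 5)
Your proof is correct and rests on the same decomposition as the paper's: add and subtract the one-step Bellman backup of $v^{\bm{\pi}',\bm{p}}$ along the dynamics induced by $(\bm{\pi},\bm{p})$, identify the advantage $q^{\bm{\pi}',\bm{p}}_{sa}-v^{\bm{\pi}',\bm{p}}_{s}$, and collect the $\gamma^t$-weighted visitation into $\bm{d}_{\bm{\rho}}^{\bm{\pi},\bm{p}}$. The only difference is presentational — you unroll via a trajectory-level telescoping expectation with a Fubini/dominated-convergence justification, whereas the paper unrolls the value-function recursion deterministically through the $t$-step transition probabilities $p^{\bm{\pi}}_{ss'}(t)$ — and the two are equivalent bookkeepings of the same argument.
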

\begin{proof}
By the definition of $J_{\bm{\rho}}(\bm{\pi},\bm{p})$ in~\eqref{prob_RMDP}, we have
\begin{equation*}
    J_{\bm{\rho}}(\bm{\pi},\bm{p})-J_{\bm{\rho}}(\bm{\pi}',\bm{p}) = \sum_{s\in\mathcal{S}}\rho_{s}\left( v^{\bm{\pi},\bm{p}}_{s}-v^{\bm{\pi}',\bm{p}}_{s}\right).
\end{equation*}
Observe that, for any $s\in\mathcal{S}$,
\begin{align*} 
&v^{\bm{\pi},\bm{p}}_{s}-v^{\bm{\pi}',\bm{p}}_{s} \\
&= v^{\bm{\pi},\bm{p}}_{s}-  \sum_{a\in\mathcal{A}}\pi_{sa}\sum_{s'}p_{sas'}\left(c_{sas'}+\gamma v^{\bm{\pi}',\bm{p}}_{s'}\right)
+\sum_{a\in\mathcal{A}}\pi_{sa}\sum_{s'}p_{sas'}\left(c_{sas'}+\gamma v^{\bm{\pi}',\bm{p}}_{s'}\right)-
v^{\bm{\pi}',\bm{p}}_{s}\\
&=\sum_{a\in\mathcal{A}}\pi_{sa}\sum_{s'}p_{sas'}\left(c_{sas'}+\gamma v^{\bm{\pi},\bm{p}}_{s'}\right)-  \sum_{a\in\mathcal{A}}\pi_{sa}\sum_{s'}p_{sas'}\left(c_{sas'}+\gamma v^{\bm{\pi}',\bm{p}}_{s'}\right)\\
&\;\;\;\;+\sum_{a\in\mathcal{A}}\pi_{sa}\underbrace{\sum_{s'}p_{sas'}\left(c_{sas'}+\gamma v^{\bm{\pi}',\bm{p}}_{s'}\right)}_{q^{\bm{\pi}',\bm{p}}_{sa}}-
v^{\bm{\pi}',\bm{p}}_{s}.
\end{align*}
We notice that $q^{\bm{\pi}',\bm{p}}_{sa} = \sum_{s'}p_{sas'}\left(c_{sas'}+\gamma v^{\bm{\pi}',\bm{p}}_{s'}\right)$, then we have
\begin{align*}
v^{\bm{\pi},\bm{p}}_{s}-v^{\bm{\pi}',\bm{p}}_{s} &= \gamma\sum_{a\in\mathcal{A}}\pi_{sa}\sum_{s'}p_{sas'}\left(v^{\bm{\pi},\bm{p}}_{s'}-v^{\bm{\pi}',\bm{p}}_{s'} \right) + \sum_{a\in\mathcal{A}}\pi_{sa} \left(q^{\bm{\pi}',\bm{p}}_{sa}- v^{\bm{\pi}',\bm{p}}_{s}\right)\\
&= \gamma\sum_{a\in\mathcal{A}}\pi_{sa}\sum_{s'}p_{sas'}\left(v^{\bm{\pi},\bm{p}}_{s'}-v^{\bm{\pi}',\bm{p}}_{s'} \right) + \sum_{a\in\mathcal{A}}\pi_{sa} \psi^{\bm{\pi}',\bm{p}}_{sa}.     
\end{align*}
Using the recursion result and condensing the notations as
\begin{align*}
    \sum_{\hat{a}}\pi_{s\hat{a}} p_{s\hat{a}s'} &= p^{\bm{\pi}}_{ss'}(1),\\
    \sum_{s'\in\mathcal{S}}p^{\bm{\pi}}_{ss'}(t-1)\cdot\sum_{a\in\mathcal{A}}&\pi_{s'a} p_{s'as''} = p^{\bm{\pi}}_{ss''}(t),
\end{align*}
we then obtain
\begin{align*}
v^{\bm{\pi},\bm{p}}_{s}-v^{\bm{\pi}',\bm{p}}_{s} &= \gamma\sum_{s'} p^{\bm{\pi}}_{ss'}(1)\left(\gamma\sum_{s''}p^{\bm{\pi}}_{s's''}(1)\left(v^{\bm{\pi},\bm{p}}_{s''}-v^{\bm{\pi}',\bm{p}}_{s''} \right)+\sum_{a'}\pi_{s'a'} \psi^{\bm{\pi}',\bm{p}}_{s'a'} \right) + \sum_{a\in\mathcal{A}}\pi_{sa} \psi^{\bm{\pi}',\bm{p}}_{sa}\\
&= \sum_{a\in\mathcal{A}}\pi_{sa} \psi^{\bm{\pi}',\bm{p}}_{sa} + \gamma\sum_{s'} p^{\bm{\pi}}_{ss'}(1)\sum_{a'}\pi_{s'a'} \psi^{\bm{\pi}',\bm{p}}_{s'a'} + \gamma^{2}\sum_{s'} p^{\bm{\pi}}_{ss'}(2)\left(v^{\bm{\pi},\bm{p}}_{s'}-v^{\bm{\pi}',\bm{p}}_{s'} \right)\\
&= \cdots\\
&= \sum_{t=0}^{\infty}\gamma^{t}\sum_{s'}p^{\bm{\pi}}_{ss'}(t)\left( \sum_{a'}\pi_{s'a'} \psi^{\bm{\pi}',\bm{p}}_{s'a'}\right).
\end{align*}
Finally, we can prove this lemma
\begin{align*}
    J_{\bm{\rho}}(\bm{\pi},\bm{p})-J_{\bm{\rho}}(\bm{\pi}',\bm{p}) &= \sum_{s\in\mathcal{S}}\rho_{s}\left( v^{\bm{\pi},\bm{p}}_{s}-v^{\bm{\pi}',\bm{p}}_{s}\right)\\
    &= \sum_{s\in\mathcal{S}}\rho_{s}\sum_{t=0}^{\infty}\gamma^{t}\sum_{s'}p^{\bm{\pi}}_{ss'}(t)\left( \sum_{a'}\pi_{s'a'} \psi^{\bm{\pi}',\bm{p}}_{s'a'}\right)\\
    &= \sum_{s'}\left(\sum_{s\in\mathcal{S}}\sum_{t=0}^{\infty}\gamma^{t}\rho_{s}p^{\bm{\pi}}_{ss'}(t)\right)\left( \sum_{a'}\pi_{s'a'} \psi^{\bm{\pi}',\bm{p}}_{s'a'}\right)\\
    &= \frac{1}{1-\gamma} \sum_{s, a} d_{\bm{\rho}}^{\bm{\pi},\bm{p}}(s) \pi_{sa} \psi^{\bm{\pi}',\bm{p}}_{sa}.
\end{align*}
The last equality is obtained by the definition of state occupancy measure (See Definition~\ref{def:occu}).
\end{proof}

\begin{lemma}(Second performance difference lemma~\citep{mei2020global})\label{lem:2nd-performance-diff}
For any $\bm{\pi},\bm{\pi}'\in\Pi$, $\bm{p}\in\mathcal{P}$ and  $\bm{\rho}\in\Delta^{S}$, we have
\begin{equation}
   J_{\bm{\rho}}(\bm{\pi}',\bm{p})-J_{\bm{\rho}}(\bm{\pi},\bm{p}) ~=~\frac{1}{1-\gamma} \sum_{s\in\mathcal{S}} d_{\bm{\rho}}^{\bm{\pi},\bm{p}}(s)\sum_{a\in\mathcal{A}}\left(\pi^{\prime}_{sa}-\pi_{sa}\right)\cdot q^{\bm{\pi}',\bm{p}}_{sa}
\end{equation}    
\end{lemma}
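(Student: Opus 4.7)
The plan is to imitate the derivation of Lemma~\ref{lem_sec3_3} (the first performance difference lemma) but arrange the telescoping so that the recursion is driven by $\bm{\pi}$ rather than $\bm{\pi}'$, so that the occupancy measure $d_{\bm{\rho}}^{\bm{\pi},\bm{p}}$ appears at the end. Starting from the Bellman identity $v^{\bm{\pi}',\bm{p}}_s = \sum_a \pi'_{sa} q^{\bm{\pi}',\bm{p}}_{sa}$, the key algebraic move is to add and subtract $\sum_a \pi_{sa} q^{\bm{\pi}',\bm{p}}_{sa}$, yielding
\begin{equation*}
v^{\bm{\pi}',\bm{p}}_s - v^{\bm{\pi},\bm{p}}_s
~=~ \sum_{a\in\mathcal{A}} (\pi'_{sa}-\pi_{sa})\, q^{\bm{\pi}',\bm{p}}_{sa}
+ \sum_{a\in\mathcal{A}} \pi_{sa}\bigl(q^{\bm{\pi}',\bm{p}}_{sa}-q^{\bm{\pi},\bm{p}}_{sa}\bigr).
\end{equation*}

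Next, I would expand the second term using the definition $q^{\bm{\pi}',\bm{p}}_{sa}-q^{\bm{\pi},\bm{p}}_{sa} = \gamma\sum_{s'} p_{sas'}(v^{\bm{\pi}',\bm{p}}_{s'} - v^{\bm{\pi},\bm{p}}_{s'})$, which produces a recursion in the value difference with the transition weighted by $\bm{\pi}$ (not $\bm{\pi}'$):
\begin{equation*}
v^{\bm{\pi}',\bm{p}}_s - v^{\bm{\pi},\bm{p}}_s
~=~ \sum_{a} (\pi'_{sa}-\pi_{sa})\,q^{\bm{\pi}',\bm{p}}_{sa}
+ \gamma \sum_{s'} p^{\bm{\pi}}_{ss'}(1)\bigl(v^{\bm{\pi}',\bm{p}}_{s'} - v^{\bm{\pi},\bm{p}}_{s'}\bigr),
\end{equation*}
where $p^{\bm{\pi}}_{ss'}(1):=\sum_a \pi_{sa} p_{sas'}$. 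Iterating this identity and using the multi-step notation $p^{\bm{\pi}}_{ss'}(t)$ from~\eqref{eq:sec5-mid3}, I obtain
\begin{equation*}
v^{\bm{\pi}',\bm{p}}_s - v^{\bm{\pi},\bm{p}}_s
~=~ \sum_{t=0}^{\infty} \gamma^{t}\sum_{s'} p^{\bm{\pi}}_{ss'}(t) \sum_{a}(\pi'_{s'a}-\pi_{s'a})\, q^{\bm{\pi}',\bm{p}}_{s'a}.
\end{equation*}

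Finally, averaging over $s\sim\bm{\rho}$ and swapping the order of summation lets me collapse $\sum_{s}\rho_s \sum_{t\ge 0} \gamma^t p^{\bm{\pi}}_{ss'}(t)$ into $(1-\gamma)^{-1} d^{\bm{\pi},\bm{p}}_{\bm{\rho}}(s')$ by definition~\eqref{def:occu}, which yields exactly the stated identity. The argument is a straightforward mirror of the proof of Lemma~\ref{lem_sec3_3}; the only subtlety — and thus the ``main obstacle'' in the mild sense — is to be careful about which policy the add/subtract step introduces, so that the recursion is in $\bm{\pi}$ (producing $d^{\bm{\pi},\bm{p}}_{\bm{\rho}}$) while the $q$-function inside the bracket is $q^{\bm{\pi}',\bm{p}}$. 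No Lipschitz, smoothness, or rectangularity assumptions are needed.
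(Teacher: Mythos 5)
Your derivation is correct: the add/subtract of $\sum_a \pi_{sa} q^{\bm{\pi}',\bm{p}}_{sa}$ produces exactly the recursion $v^{\bm{\pi}',\bm{p}}_s - v^{\bm{\pi},\bm{p}}_s = \sum_a(\pi'_{sa}-\pi_{sa})q^{\bm{\pi}',\bm{p}}_{sa} + \gamma\sum_{s'}p^{\bm{\pi}}_{ss'}(1)\bigl(v^{\bm{\pi}',\bm{p}}_{s'}-v^{\bm{\pi},\bm{p}}_{s'}\bigr)$, whose unrolling (valid since values are bounded and $\gamma<1$) and averaging over $\bm{\rho}$ collapse into $(1-\gamma)^{-1}d^{\bm{\pi},\bm{p}}_{\bm{\rho}}$ as claimed. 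The paper itself only cites Lemma~20 of \cite{mei2020global} for this statement, and your argument is essentially that standard telescoping, mirroring the paper's own appendix proof of Lemma~\ref{lem_sec3_3}, so there is nothing to correct.
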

\begin{proof}
    See Lemma 20 in~\cite{mei2020global}.
\end{proof}

\subsection{Standard Definition in Optimization}\label{app:Opt-results}
In this subsection, we present some standard optimization definitions~\citep{ghadimi2016accelerated,beck2017first}, which are used in our work. Consider the following optimization problem
\begin{equation}\label{eq:app-A5-1}
    \min_{\bm{x}\in\mathcal{X}} h(\bm{x})
\end{equation}
with $\mathcal{X}$ being a nonempty closed and convex set and $h\colon\mathbb{R}^{d}\rightarrow\mathbb{R}$ being proper, closed and $\ell$-smooth. We first introduce the crucial definitions of smoothness and Lipschitz continuity.
\begin{definition}\label{def:Lip}
A function $h: \mathcal{X}\rightarrow\mathbb{R}$ is \emph{$L$-Lipschitz} if for any $\bm{x}_{1},\bm{x}_{2}\in\mathcal{X}$, we have that $\|h(\bm{x}_{1})-h(\bm{x}_{2})\|\leq L\|\bm{x}_{1}-\bm{x}_{2}\|$, and \emph{$\ell$-smooth} if for any $\bm{x}_{1},\bm{x}_{2}\in\mathcal{X}$, we have $\|\nabla h(\bm{x}_{1})-\nabla h(\bm{x}_{2})\|\leq \ell\|\bm{x}_{1}-\bm{x}_{2}\|$.
\end{definition}
Another common definition we need to clarify is the indicator function.
\begin{definition}[Indicator functions]\label{def:indi}
    For any subset $\mathcal{X}\subseteq\mathbb{R}^{d}$, the indicator function of $\mathcal{X}$ is defined to be the extended real-valued function given by
\begin{equation*}
\mathbb{I}_{\mathcal{X}}(\bm{x})= \begin{cases}0, & \bm{x} \in \mathcal{X}, \\ \infty, & \bm{x} \notin \mathcal{X}.\end{cases}
\end{equation*}    
\end{definition}

\section{Deferred Proofs for Section~\ref{sec:DRPMD}}\label{app:tech-proof-sec3}
\subsection{Policy Gradient in Direct Parameterization}
Here, we provide the following result, which derives the analytical form of the direct parameterization policy gradient.
\begin{lemma}\label{lem:appC1-1}
    Direct parameterization policy gradient w.r.t. $\pi_{sa}$ for any $(s,a)\in\mathcal{S}\times\mathcal{A}$ is 
\begin{equation*}
    \frac{\partial J_{\bm{\rho}}(\bm{\pi},\bm{p})}{\partial \pi_{sa}} ~=~ \frac{1}{1-\gamma} \cdot d_{\bm{\rho}}^{\bm{\pi},\bm{p}}(s)\cdot q^{\bm{\pi},\bm{p}}_{sa}.
\end{equation*}
\end{lemma}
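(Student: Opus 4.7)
The plan is to mirror the strategy of Lemma~\ref{lem:inner_gradient} (the transition-derivative proof) almost step-for-step, but now differentiating with respect to $\pi_{sa}$ rather than $p_{sas'}$. First I would write $J_{\bm{\rho}}(\bm{\pi},\bm{p})=\sum_{\hat s\in\mathcal{S}}\rho_{\hat s}\,v^{\bm{\pi},\bm{p}}_{\hat s}$, which reduces the problem to computing $\partial v^{\bm{\pi},\bm{p}}_{\hat s}/\partial\pi_{sa}$ for each $\hat s$.

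Next, I would apply the Bellman recursion $v^{\bm{\pi},\bm{p}}_{\hat s}=\sum_{\hat a}\pi_{\hat s\hat a}\sum_{\hat s'}p_{\hat s\hat a\hat s'}(c_{\hat s\hat a\hat s'}+\gamma v^{\bm{\pi},\bm{p}}_{\hat s'})$ and differentiate in two cases. For $\hat s\neq s$, the outer factor $\pi_{\hat s\hat a}$ does not involve $\pi_{sa}$, so only the $v^{\bm{\pi},\bm{p}}_{\hat s'}$ term contributes, giving
\[
\frac{\partial v^{\bm{\pi},\bm{p}}_{\hat s}}{\partial\pi_{sa}}=\gamma\sum_{\hat a}\pi_{\hat s\hat a}\sum_{\hat s'}p_{\hat s\hat a\hat s'}\,\frac{\partial v^{\bm{\pi},\bm{p}}_{\hat s'}}{\partial\pi_{sa}}.
\]
For $\hat s=s$, I pick up an additional ``source'' term from differentiating $\pi_{sa}$ itself, namely $q^{\bm{\pi},\bm{p}}_{sa}=\sum_{s'}p_{sas'}(c_{sas'}+\gamma v^{\bm{\pi},\bm{p}}_{s'})$, so that $\partial v^{\bm{\pi},\bm{p}}_{s}/\partial\pi_{sa}=q^{\bm{\pi},\bm{p}}_{sa}+\gamma\sum_{\hat a}\pi_{s\hat a}\sum_{\hat s'}p_{s\hat a\hat s'}\,\partial v^{\bm{\pi},\bm{p}}_{\hat s'}/\partial\pi_{sa}$.

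Then I would unroll the recursion exactly as in the proof of Lemma~\ref{lem:inner_gradient}, using the condensed multi-step transition notation $p^{\bm{\pi}}_{\hat s s}(t)$ introduced in~\eqref{eq:sec5-mid3}. At each level of the expansion, a fresh source term $q^{\bm{\pi},\bm{p}}_{sa}$ appears only when the chain first revisits state $s$, yielding
\[
\frac{\partial v^{\bm{\pi},\bm{p}}_{\hat s}}{\partial\pi_{sa}}=\sum_{t=0}^{\infty}\gamma^{t}\,p^{\bm{\pi}}_{\hat s s}(t)\,q^{\bm{\pi},\bm{p}}_{sa},
\]
with the convention $p^{\bm{\pi}}_{\hat s s}(0)=\mathbf{1}\{\hat s=s\}$ that unifies both cases.

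Finally, I would multiply by $\rho_{\hat s}$, sum over $\hat s$, and recognize the resulting double sum $(1-\gamma)\sum_{\hat s}\rho_{\hat s}\sum_{t=0}^{\infty}\gamma^{t}p^{\bm{\pi}}_{\hat s s}(t)$ as exactly $d_{\bm{\rho}}^{\bm{\pi},\bm{p}}(s)$ by definition~\eqref{def:occu}, which yields the claimed formula. The argument is essentially a bookkeeping exercise, so I expect no real obstacle beyond keeping the $\hat s=s$ source term separate from the pure recursive piece; this is the one spot where a careless expansion would miss the $q^{\bm{\pi},\bm{p}}_{sa}$ factor or double-count it, so I would be careful to justify the unified closed form via the $p^{\bm{\pi}}_{\hat s s}(0)$ convention.
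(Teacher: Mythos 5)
Your proposal is correct and follows essentially the same route as the paper's own proof: decompose $J_{\bm{\rho}}$ via $\rho_{\hat s}v^{\bm{\pi},\bm{p}}_{\hat s}$, differentiate the Bellman recursion in the two cases $\hat s\neq s$ and $\hat s=s$ (the latter producing the source term $q^{\bm{\pi},\bm{p}}_{sa}$), unroll with the multi-step notation $p^{\bm{\pi}}_{\hat s s}(t)$, and identify the resulting weighted sum with $d_{\bm{\rho}}^{\bm{\pi},\bm{p}}(s)$. Your $p^{\bm{\pi}}_{\hat s s}(0)=\mathbf{1}\{\hat s=s\}$ convention is just a compact way of merging the two cases the paper treats separately, so no substantive difference.
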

\begin{proof}
Notice that,
\begin{equation*}
    \frac{\partial J_{\bm{\rho}}(\bm{\pi},\bm{p})}{\partial \pi_{sa}} = \sum_{\hat{s} \in \mathcal{S}} \frac{\partial v^{\bm{\pi},\bm{p}}_{\hat{s}}}{\partial \pi_{sa}} \rho_{\hat{s}}.
\end{equation*}
Then, we discuss $\frac{\partial v^{\bm{\pi},\bm{p}}_{\hat{s}}}{\partial \pi_{sa}}$ over two cases: $\hat{s}\neq s$ and $\hat{s}= s$
\begin{align*}
    \frac{\partial v^{\bm{\pi},\bm{p}}_{\hat{s}}}{\partial \pi_{sa}}\Big\vert_{\hat{s}\neq s} &= \frac{\partial}{\partial \pi_{sa}}\left[\sum_{\hat{a}}\pi_{\hat{s}\hat{a}}\sum_{s' \in \mathcal{S}} p_{\hat{s}\hat{a}s'}\left(c_{\hat{s}\hat{a}s'}+\gamma v^{\bm{\pi},\bm{p}}_{s'}\right)\right]
    = \gamma\sum_{\hat{a}}\pi_{\hat{s}\hat{a}}\sum_{s' \in \mathcal{S}} p_{\hat{s}\hat{a}s'} \frac{\partial v^{\bm{\pi},\bm{p}}_{s'}}{\partial \pi_{sa}}, \\
    \frac{\partial v^{\bm{\pi},\bm{p}}_{\hat{s}}}{\partial \pi_{sa}}\Big\vert_{\hat{s}= s} &= \frac{\partial}{\partial \pi_{sa}}\left[\sum_{\hat{a}}\pi_{s\hat{a}}\underbrace{\sum_{s' \in \mathcal{S}} p_{s\hat{a}s'}\left(c_{s\hat{a}s'}+\gamma v^{\bm{\pi},\bm{p}}_{s'}\right)}_{q^{\bm{\pi},\bm{p}}_{s\hat{a}}}\right]
    = q^{\bm{\pi},\bm{p}}_{sa}+\gamma\sum_{\hat{a}}\pi_{s\hat{a}}\sum_{s' \in \mathcal{S}} p_{s\hat{a}s'} \frac{\partial v^{\bm{\pi},\bm{p}}_{s'}}{\partial \pi_{sa}}. 
\end{align*}
Condense the notation as
\begin{align}\label{inter_no_1}
    \sum_{\hat{a}}\pi_{s\hat{a}} p_{s\hat{a}s'} = p^{\bm{\pi}}_{ss'}(1),\quad
    \sum_{s'}p^{\bm{\pi}}_{ss'}(t-1)\cdot\sum_{a\in\mathcal{A}}\pi_{s'a} p_{s'as''} = p^{\bm{\pi}}_{ss''}(t),
\end{align}
and then, combining these two equations, we can obtain,
\begin{align*}
    \frac{\partial v^{\bm{\pi},\bm{p}}_{\hat{s}}}{\partial \pi_{sa}}\Big\vert_{\hat{s}\neq s} 
    &= \gamma\sum_{s'\neq s}p^{\bm{\pi}}_{\hat{s}s'}(1)\frac{\partial v^{\bm{\pi},\bm{p}}_{s'}}{\partial \pi_{sa}}+ \gamma\sum_{s'= s}p^{\bm{\pi}}_{\hat{s}s'}(1)\frac{\partial v^{\bm{\pi},\bm{p}}_{s'}}{\partial \pi_{sa}}\\
    &=\gamma^{2}\sum_{s'\neq s}p^{\bm{\pi}}_{\hat{s}s'}(1)\sum_{\hat{a}}\pi_{s'\hat{a}}\sum_{s'' \in \mathcal{S}} p_{s'\hat{a}s''} \frac{\partial v^{\bm{\pi},\bm{p}}_{s''}}{\partial \pi_{sa}}\\
    &+ \gamma p^{\bm{\pi}}_{\hat{s}s}(1)\left(q^{\bm{\pi},\bm{p}}_{sa}+\gamma\sum_{\hat{a}}\pi_{s\hat{a}}\sum_{s' \in \mathcal{S}} p_{s\hat{a}s'} \frac{\partial v^{\bm{\pi},\bm{p}}_{s'}}{\partial \pi_{sa}}\right)
    \\
    &= \gamma p^{\bm{\pi}}_{\hat{s}s}(1)q^{\bm{\pi},\bm{p}}_{sa}+ \gamma^{2}\sum_{s'}p^{\bm{\pi}}_{\hat{s}s'}(2)\frac{\partial v^{\bm{\pi},\bm{p}}_{s'}}{\partial \pi_{sa}}\\
    &= \gamma p^{\bm{\pi}}_{\hat{s}s}(1)q^{\bm{\pi},\bm{p}}_{sa}+ \gamma^{2}p^{\bm{\pi}}_{\hat{s}s}(2)q^{\bm{\pi},\bm{p}}_{sa}+\gamma^{3}\sum_{s'}p^{\bm{\pi}}_{\hat{s}s'}(3)\frac{\partial v^{\bm{\pi},\bm{p}}_{s'}}{\partial \pi_{sa}}\\
    &= \cdots\\
    &= \sum_{t=1}^{\infty}\gamma^{t}p^{\bm{\pi}}_{\hat{s}s}(t)q^{\bm{\pi},\bm{p}}_{sa} = \sum_{t=0}^{\infty}\gamma^{t}p^{\bm{\pi}}_{\hat{s}s}(t)q^{\bm{\pi},\bm{p}}_{sa}.
\end{align*}
The last equality is from the initial assumption $\hat{s} \neq s$, \ie, $p^{\bm{\pi}}_{\hat{s}s}(0)=0$, and similarly for the case $\hat{s}= s$ we have,
\begin{equation*}
    \frac{\partial v^{\bm{\pi},\bm{p}}_{\hat{s}}}{\partial \pi_{sa}}\Big\vert_{\hat{s}= s} = \sum_{t=0}^{\infty}\gamma^{t}p^{\bm{\pi}}_{ss}(t)q^{\bm{\pi},\bm{p}}_{sa}.
\end{equation*}
Hence, the partial derivative is obtained
\begin{align*}
    \frac{\partial J_{\bm{\rho}}(\bm{\pi},\bm{p})}{\partial \pi_{sa}} &= \left(\frac{\partial v^{\bm{\pi},\bm{p}}_{s}}{\partial \pi_{sa}} \rho_{s}+\sum_{\hat{s} \neq s} \frac{\partial v^{\bm{\pi},\bm{p}}_{\hat{s}}}{\partial \pi_{sa}} \rho_{\hat{s}}\right)
    = \frac{1}{1-\gamma}\left(\underbrace{(1-\gamma) \sum_{\hat{s}\in\mathcal{S}}\sum_{t=0}^{\infty} \gamma^{t} \rho_{\hat{s}}p^{\bm{\pi}}_{\hat{s}s}(t)}_{d_{\bm{\rho}}^{\bm{\pi},\bm{p}}(s)}\right)q^{\bm{\pi},\bm{p}}_{sa}.
\end{align*}
\end{proof}

\subsection{Linear Convergence Rate of DRPMD Under Rectangularity}\label{subapp:DRPMD-linear}
In this subsection, we discuss the linear convergence of DRPMD under different rectangularity conditions. For completeness, we here recall the definition of two common rectangular ambiguity sets from Section~\ref{sec:Inner-Rec}. An ambiguity set $\mathcal{P}$ is $(s,a)$-rectangular~\citep{iyengar2005robust,nilim2005robust,le2007robust} if it is a Cartesian product of sets $\mathcal{P}_{s,a}\subseteq\Delta^{S}$ for each state $s\in\mathcal{S}$ and action $a\in\mathcal{A}$, \ie,
\begin{equation*}
\mathcal{P}~:=~\{\bm{p}\in(\Delta^{S})^{S\times A}\mid\bm{p}_{s,a}\in\mathcal{P}_{s,a},\;\forall s\in\mathcal{S},a\in\mathcal{A}\},
\end{equation*}
whereas an ambiguity set $\mathcal{P}$ is $s$-rectangular~\citep{wiesemann2013robust} if it is defined as a Cartesian product of
sets $\mathcal{P}_{s}\subseteq(\Delta^{S})^{A}$, \ie,
\begin{equation*}
\mathcal{P}~:=~\{\bm{p}\in(\Delta^{S})^{S\times A}\mid\bm{p}_{s}=(\bm{p}_{s,a})_{a\in\mathcal{A}}\in\mathcal{P}_{s},\;\forall s\in\mathcal{S}\}.
\end{equation*}

A fundamental result for analyzing first-order methods addressing rectangular RMDPs is the \textit{robust performance difference lemma}~\citep{li2022first,kumar2023towards}. We then summarize different forms of robust performance difference lemma within the following lemma. 
\begin{lemma}(Robust performance difference under rectangularity)\label{lem:robust-performance-diff}
Consider any pair of policies $\bm{\pi}_{1},\bm{\pi}_{2}\in\Pi$ and their corresponding inner worst-case transition kernel
\begin{equation*}
\bm{p}_{1}:= 
\mathop{\arg\max}\limits_ {\bm{p}\in\mathcal{P}}J_{\bm{\rho}}(\bm{\pi}_{1},\bm{p}),\quad\bm{p}_{2}:= \mathop{\arg\max}\limits_ {\bm{p}\in\mathcal{P}}J_{\bm{\rho}}(\bm{\pi}_{2},\bm{p}).    
\end{equation*}
Then, for RMDPs with the $(s,a)$-rectangular ambiguity set, we have
\begin{equation*}
    \Phi(\bm{\pi}_{2})-\Phi(\bm{\pi}_{1})\leq \frac{1}{1-\gamma} \sum_{s\in\mathcal{S}} d_{\bm{\rho}}^{\bm{\pi}_{2},\bm{p}_{2}}(s)\sum_{a\in\mathcal{A}}\left(\pi_{2,sa}-\pi_{1,sa}\right)\cdot q^{\bm{\pi}_{1},\bm{p}_{1}}_{sa}.
\end{equation*}
For RMDPs with the $s$-rectangular ambiguity set, we have
\begin{equation*}
    \Phi(\bm{\pi}_{2})-\Phi(\bm{\pi}_{1})\leq \sum_{s\in\mathcal{S}} d_{\bm{\rho}}^{\bm{\pi}_{2},\bm{p}_{1}}(s)\sum_{a\in\mathcal{A}}\left(\pi_{2,sa}-\pi_{1,sa}\right)\cdot q^{\bm{\pi}_{1},\bm{p}_{1}}_{sa}.    
\end{equation*}
\end{lemma}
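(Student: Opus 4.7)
Both inequalities compare robust values across policies, so the natural strategy is to start from the two robust Bellman fixed-point identities
\[
v^{\bm{\pi}_i,\bm{p}_i}_s ~=~ \sum_{a\in\mathcal{A}} \pi_{i,sa} \sum_{s'\in\mathcal{S}} p_{i,sas'}\bigl(c_{sas'}+\gamma v^{\bm{\pi}_i,\bm{p}_i}_{s'}\bigr),\quad i=1,2,
\]
derive a one-step entry-wise inequality on the pointwise difference $v^{\bm{\pi}_2,\bm{p}_2}_s - v^{\bm{\pi}_1,\bm{p}_1}_s$, unroll it, and then recognize the resulting geometric sum $\sum_{t\ge 0}\gamma^{t} P^{t}$ as the discounted state occupancy $\bm{d}^{\bm{\pi}_2,\bm{p}}_{\bm{\rho}}$ of \eqref{def:occu} after averaging against $\bm{\rho}$. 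The ingredient that distinguishes the two cases is the rectangularity property that controls the cross term mixing $\bm{p}_1$ and $\bm{p}_2$.

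\textbf{$(s,a)$-rectangular case.} Here $\bm{p}_{1,sa}$ is the pointwise maximiser of $\bm{p}_{sa}^{\top}(\bm{c}_{sa}+\gamma \bm{v}^{\bm{\pi}_1,\bm{p}_1})$ over $\mathcal{P}_{sa}$, so for every feasible $\bm{p}\in\mathcal{P}$ and every $(s,a)$,
\[
\sum_{s'} p_{sas'}\bigl(c_{sas'}+\gamma v^{\bm{\pi}_1,\bm{p}_1}_{s'}\bigr) \;\leq\; q^{\bm{\pi}_1,\bm{p}_1}_{sa}.
\]
Applying this with $\bm{p}=\bm{p}_2$, inserting $\pm\,\gamma v^{\bm{\pi}_1,\bm{p}_1}_{s'}$ inside the Bellman identity for $v^{\bm{\pi}_2,\bm{p}_2}_s$, and subtracting $v^{\bm{\pi}_1,\bm{p}_1}_s = \sum_a \pi_{1,sa}\,q^{\bm{\pi}_1,\bm{p}_1}_{sa}$ yields the contractive recursion
\[
v^{\bm{\pi}_2,\bm{p}_2}_s - v^{\bm{\pi}_1,\bm{p}_1}_s \;\leq\; \sum_a (\pi_{2,sa}-\pi_{1,sa})\,q^{\bm{\pi}_1,\bm{p}_1}_{sa} + \gamma \sum_a \pi_{2,sa}\sum_{s'} p_{2,sas'}\bigl(v^{\bm{\pi}_2,\bm{p}_2}_{s'} - v^{\bm{\pi}_1,\bm{p}_1}_{s'}\bigr).
\]
Iterating this inequality and then taking $\bm{\rho}$-expectation collapses $\sum_{t\ge 0}\gamma^{t} P^{t}_{\bm{\pi}_2,\bm{p}_2}$ into $\tfrac{1}{1-\gamma}\,\bm{d}^{\bm{\pi}_2,\bm{p}_2}_{\bm{\rho}}$, producing the claimed bound.

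\textbf{$s$-rectangular case.} The principal obstacle is that $(s,a)$-pointwise optimality of $\bm{p}_1$ is no longer available: under $s$-rectangularity, $\bm{p}_{1,s}$ maximises only the $\bm{\pi}_1$-weighted inner sum $\sum_a \pi_{1,sa}\sum_{s'} p_{sas'}(c_{sas'}+\gamma v^{\bm{\pi}_1,\bm{p}_1}_{s'})$, not the $\bm{\pi}_2$-weighted one, so the pointwise bound used above fails. My plan is to sidestep this by routing through a hybrid value $J_{\bm{\rho}}(\bm{\pi}_2,\bm{p}_1)$: the non-robust performance difference lemma (Lemma~\ref{lem_sec3_3}) applied to the nominal MDP with kernel $\bm{p}_1$ gives
\[
J_{\bm{\rho}}(\bm{\pi}_2,\bm{p}_1) - J_{\bm{\rho}}(\bm{\pi}_1,\bm{p}_1) = \frac{1}{1-\gamma}\sum_{s} d^{\bm{\pi}_2,\bm{p}_1}_{\bm{\rho}}(s)\sum_{a} (\pi_{2,sa}-\pi_{1,sa})\,q^{\bm{\pi}_1,\bm{p}_1}_{sa},
\]
which already produces the target right-hand side in the claim. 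The remaining piece is to bound the exchange term $J_{\bm{\rho}}(\bm{\pi}_2,\bm{p}_2) - J_{\bm{\rho}}(\bm{\pi}_2,\bm{p}_1)$ in the correct direction using the $s$-rectangular joint Bellman characterisation of $\bm{p}_{2,s}$ together with the definition of the action--next-state value $g^{\bm{\pi},\bm{p}}_{sas'}$. I expect this exchange step to be the main technical hurdle, as it must convert a comparison of worst-case transitions into a sign-compatible statement involving $\bm{d}^{\bm{\pi}_2,\bm{p}_1}_{\bm{\rho}}$, and it is precisely where $s$-rectangularity (as opposed to mere compactness of $\mathcal{P}$) is indispensable.
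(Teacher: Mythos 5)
The paper itself does not prove this lemma; it only cites Lemma~2.6 of \cite{li2022first} for the $(s,a)$-rectangular part and Lemma~2 of \cite{kumar2023towards} for the $s$-rectangular part, so your self-contained attempt is by construction a different route. Your $(s,a)$-rectangular argument is essentially sound: under $(s,a)$-rectangularity the worst-case kernel of $\bm{\pi}_1$ can be taken to satisfy $q^{\bm{\pi}_1,\bm{p}_1}_{sa}=\max_{\bm{p}_{sa}\in\mathcal{P}_{sa}}\bm{p}_{sa}^{\top}(\bm{c}_{sa}+\gamma\bm{v}^{\bm{\pi}_1,\bm{p}_1})$, your one-step inequality is then valid, and unrolling against the nonnegative transition matrix of $(\bm{\pi}_2,\bm{p}_2)$ and averaging over $\bm{\rho}$ gives exactly the stated bound. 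The one step you should make explicit is why the maximizer of $J_{\bm{\rho}}(\bm{\pi}_1,\cdot)$ is pointwise Bellman-optimal at every $(s,a)$; this requires $\bm{\rho}$ to have full support (as assumed elsewhere in the paper) or a selection argument replacing $\bm{p}_1$ by a Bellman-attaining worst-case kernel with the same robust value.

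The $s$-rectangular half, however, is a plan rather than a proof, and the plan cannot be completed as described. First, the performance-difference step does not ``already produce the target right-hand side'': Lemma~\ref{lem_sec3_3} gives $J_{\bm{\rho}}(\bm{\pi}_2,\bm{p}_1)-J_{\bm{\rho}}(\bm{\pi}_1,\bm{p}_1)=\tfrac{1}{1-\gamma}\sum_{s} d^{\bm{\pi}_2,\bm{p}_1}_{\bm{\rho}}(s)\,\langle\bm{\pi}_{2,s}-\bm{\pi}_{1,s},\bm{q}^{\bm{\pi}_1,\bm{p}_1}_{s}\rangle$, which is $\tfrac{1}{1-\gamma}$ times the displayed bound, since the occupancy measure in \eqref{def:occu} is normalized. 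Second, the exchange term you defer satisfies $J_{\bm{\rho}}(\bm{\pi}_2,\bm{p}_2)-J_{\bm{\rho}}(\bm{\pi}_2,\bm{p}_1)\geq 0$ by the very definition of $\bm{p}_2$, so closing your decomposition would force the sign-indefinite performance-difference term to absorb a nonnegative quantity, which cannot hold for arbitrary $\bm{\pi}_1,\bm{\pi}_2$. Indeed, with $q^{\bm{\pi}_1,\bm{p}_1}$ read as the action value under $\bm{\pi}_1$'s worst-case kernel, the displayed $s$-rectangular inequality can fail outright: take states $\{s,x,y\}$ with $x,y$ absorbing at per-step costs $1$ and $0$, zero cost at $s$, and $\mathcal{P}_s=\{(\lambda\bm{\delta}_x+(1-\lambda)\bm{\delta}_y,\ (1-\lambda)\bm{\delta}_x+\lambda\bm{\delta}_y):\lambda\in[0,1]\}$ with $\bm{\delta}_x,\bm{\delta}_y$ the point masses; let $\bm{\pi}_1$ play action $1$ at $s$, $\bm{\pi}_2$ play action $2$ at $s$, and let the two policies agree at $x,y$. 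Then $\Phi(\bm{\pi}_1)=\Phi(\bm{\pi}_2)=\gamma/(1-\gamma)$, so the left-hand side is $0$, while $q^{\bm{\pi}_1,\bm{p}_1}_{s1}=\gamma/(1-\gamma)$, $q^{\bm{\pi}_1,\bm{p}_1}_{s2}=0$, and the right-hand side equals $-\gamma<0$ (also after perturbing $\bm{\rho}$ to have full support). So any complete proof must work with the exact definitions and normalization used in the cited source --- in particular a robust action value in which the per-state maximization over $\mathcal{P}_s$ is taken inside, which is precisely where $s$-rectangularity enters --- rather than with the hybrid decomposition through $J_{\bm{\rho}}(\bm{\pi}_2,\bm{p}_1)$ alone.
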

\begin{proof}
See Lemma 2.6 of~\cite{li2022first} for RMDPs with the $(s,a)$-rectangular set and Lemma 2 of~\cite{kumar2023towards} for RMDPs with the $s$-rectangular set.
\end{proof}
The robust performance difference lemma introduces a characterization that upper bounds the difference of robust values for two policies and will be helpful in the convergence analysis. We are now ready to provide the complete proof of Theorem~\ref{the:linear-srec}, which establishes the convergence guarantee of DRPMD for $s$-rectangular RMDPs while assuming the inner maximization problem could be solved exactly.
\begin{proof}[\textbf{of Theorem~\ref{the:linear-srec}}]
We still denote $\bm{\pi}^{\star} = (\bm{\pi}^{\star}_{s})_{s\in\mathcal{S}}$ be the globally optimal policy. An essential inequality in the convergence analysis of Theorem~\ref{the:sublinear-dir-para} is
\begin{equation}\label{eq:subappC2-mid1}
        \underbrace{\langle \bm{q}^{\bm{\pi}_{t},\bm{p}_{t}}_{s}, \bm{\pi}_{t,s}-\bm{\pi}^{\star}_{s}\rangle}_{\text{(A)}}+\underbrace{\langle \bm{q}^{\bm{\pi}_{t},\bm{p}_{t}}_{s}, \bm{\pi}_{t+1,s}-\bm{\pi}_{t,s}\rangle}_{\text{(B)}}  \leq  \frac{1}{\alpha_{t}}B(\bm{\pi}^{\star}_{s},\bm{\pi}_{t,s}) - \frac{1}{\alpha_{t}}B(\bm{\pi}^{\star}_{s},\bm{\pi}_{t+1,s}).
    \end{equation}
    To handle term (A), we use the existing result 
\begin{align*}
        \Phi(\bm{\pi}_{t}) - \Phi(\bm{\pi}^{\star}) &= 
\max_{\bm{p}\in\mathcal{P}}J_{\bm{\rho}}(\bm{\pi}_{t},\bm{p})-\max_{\bm{p}\in\mathcal{P}}J_{\bm{\rho}}(\bm{\pi}^{\star},\bm{p})\\
   &\leq J_{\bm{\rho}}(\bm{\pi}_{t},\bm{p}_{t})-J_{\bm{\rho}}(\bm{\pi}^{\star},\bm{p}_{t})\\
   &= \frac{1}{1-\gamma} \sum_{s\in\mathcal{S}} d_{\bm{\rho}}^{\bm{\pi}^{\star},\bm{p}_{t}}(s)\sum_{a\in\mathcal{A}}\left(\pi_{t,sa}-\pi^{\star}_{sa}\right)\cdot q^{\bm{\pi}_{t},\bm{p}_{t}}_{sa},
\end{align*}
where $\bm{p}_{t}:= 
\mathop{\arg\max}\limits_ {\bm{p}\in\mathcal{P}}J_{\bm{\rho}}(\bm{\pi}_{t},\bm{p})$ for any $t\geq0$. Different from the analysis of Theorem~\ref{the:sublinear-dir-para} on the term (B), we apply Lemma~\ref{lem:robust-performance-diff} to obtain that
\begin{align*}
    \Phi(\bm{\pi}_{t+1})-\Phi(\bm{\pi}_{t})&\leq \sum_{s\in\mathcal{S}} d_{\bm{\rho}}^{\bm{\pi}_{t+1},\bm{p}_{t}}(s)\sum_{a\in\mathcal{A}}\left(\pi_{t+1,sa}-\pi_{t,sa}\right)\cdot q^{\bm{\pi}_{t},\bm{p}_{t}}_{sa}\\
    &= \sum_{s\in\mathcal{S}} \frac{d_{\bm{\rho}}^{\bm{\pi}_{t+1},\bm{p}_{t}}(s)}{d^{\bm{\pi}^{\star},\bm{p}_{t}}_{\bm{\rho}}(s)}d^{\bm{\pi}^{\star},\bm{p}_{t}}_{\bm{\rho}}(s)\sum_{a\in\mathcal{A}}\left(\pi_{t+1,sa}-\pi_{t,sa}\right)\cdot q^{\bm{\pi}_{t},\bm{p}_{t}}_{sa}.
\end{align*}
We notice that, by plugging in $\bm{y} = \bm{\pi}_{t}$ in~\eqref{eq:3-point-descent}, we have
\begin{equation*}
\sum_{a\in\mathcal{A}}\left(\pi_{t+1,sa}-\pi_{t,sa}\right)\cdot q^{\bm{\pi}_{t},\bm{p}_{t}}_{sa} \leq -B(\bm{\pi}_{t+1,s},\bm{\pi}_{t,s}) - B(\bm{\pi}_{t,s},\bm{\pi}_{t+1,s}) \leq 0.
\end{equation*}
Then, $\Phi(\bm{\pi}_{t+1})-\Phi(\bm{\pi}_{t})$ can be bounded as follow:
\begin{align*}
\Phi(\bm{\pi}_{t+1})-\Phi(\bm{\pi}_{t})&\leq \left(\min_{s\in\mathcal{S}}\left(\frac{d_{\bm{\rho}}^{\bm{\pi}_{t+1},\bm{p}_{t}}(s)}{d^{\bm{\pi}^{\star},\bm{p}_{t}}_{\bm{\rho}}(s)}\right)\right)  \sum_{s\in\mathcal{S}}d^{\bm{\pi}^{\star},\bm{p}_{t}}_{\bm{\rho}}(s)\sum_{a\in\mathcal{A}}\left(\pi_{t+1,sa}-\pi_{t,sa}\right)\cdot q^{\bm{\pi}_{t},\bm{p}_{t}}_{sa} \\
&= \left\|\frac{\bm{d}^{\bm{\pi}^{\star},\bm{p}_{t}}_{\bm{\rho}}}{\bm{d}_{\bm{\rho}}^{\bm{\pi}_{t+1},\bm{p}_{t}}}\right\|^{-1}  \sum_{s\in\mathcal{S}}d^{\bm{\pi}^{\star},\bm{p}_{t}}_{\bm{\rho}}(s)\sum_{a\in\mathcal{A}}\left(\pi_{t+1,sa}-\pi_{t,sa}\right)\cdot q^{\bm{\pi}_{t},\bm{p}_{t}}_{sa}\\
&\leq \frac{1-\gamma}{M} \sum_{s\in\mathcal{S}}d^{\bm{\pi}^{\star},\bm{p}_{t}}_{\bm{\rho}}(s)\sum_{a\in\mathcal{A}}\left(\pi_{t+1,sa}-\pi_{t,sa}\right)\cdot q^{\bm{\pi}_{t},\bm{p}_{t}}_{sa}.
\end{align*}
Then, by aggregating the above inequality across different states with weights set as $d^{\bm{\pi}^{\star},\bm{p}_{t}}_{\bm{\rho}}(s)$ on~\eqref{eq:subappC2-mid1} and incorporating the above results, we have
\begin{align*}
\frac{M}{1-\gamma}\left(\Phi(\bm{\pi}_{t+1})-\Phi(\bm{\pi}_{t})\right) + (1-\gamma)\left(\Phi(\bm{\pi}_{t}) - \Phi(\bm{\pi}^{\star})\right) \leq  \frac{1}{\alpha_{t}}\left(B_{\bm{d}^{\bm{\pi}^{\star},\bm{p}_{t}}_{\bm{\rho}}}(\bm{\pi}^{\star},\bm{\pi}_{t}) - B_{\bm{d}^{\bm{\pi}^{\star},\bm{p}_{t}}_{\bm{\rho}}}(\bm{\pi}^{\star},\bm{\pi}_{t+1})\right)
\end{align*}
By recursively applying the above inequality from $t=0$ to $t=k-1$, we obtain
\begin{align*}
\Phi(\bm{\pi}_{k}) - \Phi(\bm{\pi}^{\star}) &\leq (1-\frac{(1-\gamma)^{2}}{M})^{k}\left(\Phi(\bm{\pi}_{0}) - \Phi(\bm{\pi}^{\star})\right)\\
&\;\;+\frac{1-\gamma}{M}\sum^{k-1}_{t=0}\left[(1-\frac{(1-\gamma)^{2}}{M})^{k-1-t}\frac{1}{\alpha_{t}}B_{\bm{d}^{\bm{\pi}^{\star},\bm{p}_{t}}_{\bm{\rho}}}(\bm{\pi}^{\star},\bm{\pi}_{t})\right]\\
&\;\;- \frac{1-\gamma}{M}\sum^{k}_{t=1}\left[(1-\frac{(1-\gamma)^{2}}{M})^{k-t}\frac{1}{\alpha_{t-1}}B_{\bm{d}^{\bm{\pi}^{\star},\bm{p}_{t-1}}_{\bm{\rho}}}(\bm{\pi}^{\star},\bm{\pi}_{t})\right].
\end{align*}
Rearrange the above inequality and we can obtain
\begin{align*}
\Phi(\bm{\pi}_{k}) - \Phi(\bm{\pi}^{\star}) &\leq (1-\frac{(1-\gamma)^{2}}{M})^{k}\left(\Phi(\bm{\pi}_{0}) - \Phi(\bm{\pi}^{\star})\right)+\frac{1-\gamma}{M}(1-\frac{(1-\gamma)^{2}}{M})^{k-1}B_{\bm{d}^{\bm{\pi}^{\star},\bm{p}_{0}}_{\bm{\rho}}}(\bm{\pi}^{\star},\bm{\pi}_{0})\\
&\;\;+\frac{1-\gamma}{M}\sum^{k-1}_{t=1}(1-\frac{(1-\gamma)^{2}}{M})^{k-1-t}\frac{1}{\alpha_{t}}B_{\bm{d}^{\bm{\pi}^{\star},\bm{p}_{t}}_{\bm{\rho}}}(\bm{\pi}^{\star},\bm{\pi}_{t})\\
&\;\;-\frac{1-\gamma}{M}\sum^{k-1}_{t=1}(1-\frac{(1-\gamma)^{2}}{M})^{k-t}\frac{1}{\alpha_{t-1}}B_{\bm{d}^{\bm{\pi}^{\star},\bm{p}_{t-1}}_{\bm{\rho}}}(\bm{\pi}^{\star},\bm{\pi}_{t}).
\end{align*}
Given the definition of step sizes $\{\alpha_{t}\}_{t\geq0}$ in~\eqref{def:linear-stepsize-s}, we have
\begin{equation*}
(1-\frac{(1-\gamma)^{2}}{M})^{k-1-t}\frac{1}{\alpha_{t}}B_{\bm{d}^{\bm{\pi}^{\star},\bm{p}_{t}}_{\bm{\rho}}}(\bm{\pi}^{\star},\bm{\pi}_{t})-(1-\frac{(1-\gamma)^{2}}{M})^{k-t}\frac{1}{\alpha_{t-1}}B_{\bm{d}^{\bm{\pi}^{\star},\bm{p}_{t-1}}_{\bm{\rho}}}(\bm{\pi}^{\star},\bm{\pi}_{t})\leq 0.    
\end{equation*}
Therefore, we can reach the desired result that when~\eqref{def:linear-stepsize-s} holds,
\begin{align*}
\Phi(\bm{\pi}_{k}) - \Phi(\bm{\pi}^{\star}) &\leq (1-\frac{(1-\gamma)^{2}}{M})^{k}\left(\Phi(\bm{\pi}_{0}) - \Phi(\bm{\pi}^{\star})\right)+\frac{1-\gamma}{M}(1-\frac{(1-\gamma)^{2}}{M})^{k-1}B_{\bm{d}^{\bm{\pi}^{\star},\bm{p}_{0}}_{\bm{\rho}}}(\bm{\pi}^{\star},\bm{\pi}_{0})\\
&= (1-\frac{(1-\gamma)^{2}}{M})^{k}\left(\Phi(\bm{\pi}_{0}) - \Phi(\bm{\pi}^{\star}) + \frac{(1-\gamma)B_{\bm{d}^{\bm{\pi}^{\star},\bm{p}_{0}}_{\bm{\rho}}}(\bm{\pi}^{\star},\bm{\pi}_{0})}{M-(1-\gamma)^{2}}\right).
\end{align*}
\end{proof}
For completeness, we also provide the linear convergence result for $(s,a)$-rectangular RMDPs as well. A similar discussion can be found in~\cite{li2022first}.
\begin{theorem}(Linear convergence for $(s,a)$-rectangular RMDPs)\label{the:linear-sarec}
Suppose the step sizes $\{\alpha_{t}\}_{t\geq0}$ satisfy
\begin{equation}\label{def:linear-stepsize}
    \alpha_{t}\geq\frac{M}{1-\gamma}\cdot\left(1-\frac{1-\gamma}{M}\right)^{-1}\cdot\alpha_{t-1},\quad\forall t\geq1.
\end{equation}
Then, for any iteration $k$ and an initial tolerance $\epsilon_{0}\geq0$, DRPMD produces policy $\bm{\pi}_{k}$ for $(s,a)$-rectangular RMDPs satisfying
\begin{align*}
\Phi(\bm{\pi}_{k}) - \Phi(\bm{\pi}^{\star})\leq (1-\frac{1-\gamma}{M})^{k}\left(\Phi(\bm{\pi}_{0}) - \Phi(\bm{\pi}^{\star}) + (1-\frac{1-\gamma}{M})^{-1}\cdot\frac{B_{\bm{d}^{\bm{\pi}^{\star},\bm{p}_{0}}_{\bm{\rho}}}(\bm{\pi}^{\star},\bm{\pi}_{0})}{M}\right).
\end{align*}   
\end{theorem}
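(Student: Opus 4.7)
The proof will closely mirror that of Theorem~\ref{the:linear-srec}, adapted to the form of the robust performance difference inequality for $(s,a)$-rectangular ambiguity sets stated in Lemma~\ref{lem:robust-performance-diff}. The crucial structural difference is that in the $(s,a)$-rectangular case, $\Phi(\bm{\pi}_{t+1})-\Phi(\bm{\pi}_{t})$ is upper-bounded by a sum weighted by $d^{\bm{\pi}_{t+1},\bm{p}_{t+1}}$ with an extra factor of $1/(1-\gamma)$, as opposed to the weighting by $d^{\bm{\pi}_{t+1},\bm{p}_{t}}$ without the extra factor in the $s$-rectangular setting. This single change explains why the contraction constant degrades from $1-(1-\gamma)^{2}/M$ to $1-(1-\gamma)/M$ and also forces the modified step-size schedule in~\eqref{def:linear-stepsize}.

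First, I would start from the three-point inequality of Lemma~\ref{lem:3point} with $\bm{y}=\bm{\pi}^{\star}_{s}$, split the left-hand side into term (A) $=\langle \bm{q}^{\bm{\pi}_{t},\bm{p}_{t}}_{s},\bm{\pi}_{t,s}-\bm{\pi}^{\star}_{s}\rangle$ and term (B) $=\langle \bm{q}^{\bm{\pi}_{t},\bm{p}_{t}}_{s},\bm{\pi}_{t+1,s}-\bm{\pi}_{t,s}\rangle$, and aggregate across states with the weight $d^{\bm{\pi}^{\star},\bm{p}_{t}}_{\bm{\rho}}(s)$. For term (A) I will use exactly the same chain as in Theorem~\ref{the:linear-srec}: assuming $\bm{p}_{t}$ is computed exactly, $\Phi(\bm{\pi}_{t})-\Phi(\bm{\pi}^{\star})\le J_{\bm{\rho}}(\bm{\pi}_{t},\bm{p}_{t})-J_{\bm{\rho}}(\bm{\pi}^{\star},\bm{p}_{t})$, and then Lemma~\ref{lem:2nd-performance-diff} gives $(1-\gamma)[\Phi(\bm{\pi}_{t})-\Phi(\bm{\pi}^{\star})]\le\sum_{s}d^{\bm{\pi}^{\star},\bm{p}_{t}}_{\bm{\rho}}(s)\,(\mathrm{A})_{s}$.

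For term (B), I would apply the $(s,a)$-rectangular form of Lemma~\ref{lem:robust-performance-diff} with $\bm{\pi}_{2}=\bm{\pi}_{t+1}$ and $\bm{\pi}_{1}=\bm{\pi}_{t}$, yielding $\Phi(\bm{\pi}_{t+1})-\Phi(\bm{\pi}_{t})\le \tfrac{1}{1-\gamma}\sum_{s}d^{\bm{\pi}_{t+1},\bm{p}_{t+1}}_{\bm{\rho}}(s)\,(\mathrm{B})_{s}$. The next step is to rewrite $d^{\bm{\pi}_{t+1},\bm{p}_{t+1}}_{\bm{\rho}}(s)$ as the ratio $d^{\bm{\pi}_{t+1},\bm{p}_{t+1}}_{\bm{\rho}}(s)/d^{\bm{\pi}^{\star},\bm{p}_{t}}_{\bm{\rho}}(s)$ times $d^{\bm{\pi}^{\star},\bm{p}_{t}}_{\bm{\rho}}(s)$. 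The three-point inequality with $\bm{y}=\bm{\pi}_{t}$ shows that $(\mathrm{B})_{s}\le 0$, so I can lower-bound this ratio by $(1-\gamma)/M$ using $d^{\bm{\pi},\bm{p}}_{\bm{\rho}}(s)\ge(1-\gamma)\rho_{s}$ together with the mismatch bound $M$. Because $(\mathrm{B})_{s}\le 0$, the inequality flips in our favor and produces $\Phi(\bm{\pi}_{t+1})-\Phi(\bm{\pi}_{t})\le \tfrac{1}{M}\sum_{s}d^{\bm{\pi}^{\star},\bm{p}_{t}}_{\bm{\rho}}(s)\,(\mathrm{B})_{s}$. The extra $1/(1-\gamma)$ inherited from the $(s,a)$-rectangular performance difference is precisely what makes the net factor $1/M$ rather than $(1-\gamma)/M$, which is the source of the weaker contraction constant later on.

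Combining the two bounds with the weighted three-point inequality gives the one-step recursion
\begin{equation*}
\Phi(\bm{\pi}_{t+1})-\Phi(\bm{\pi}^{\star}) \le \Bigl(1-\tfrac{1-\gamma}{M}\Bigr)\bigl[\Phi(\bm{\pi}_{t})-\Phi(\bm{\pi}^{\star})\bigr] + \tfrac{1}{M\alpha_{t}}\bigl[B_{\bm{d}^{\bm{\pi}^{\star},\bm{p}_{t}}_{\bm{\rho}}}(\bm{\pi}^{\star},\bm{\pi}_{t}) - B_{\bm{d}^{\bm{\pi}^{\star},\bm{p}_{t}}_{\bm{\rho}}}(\bm{\pi}^{\star},\bm{\pi}_{t+1})\bigr].
\end{equation*}
Unrolling from $0$ to $k-1$ produces a geometric sum plus a chain of Bregman cross-terms of the form $\tfrac{(1-(1-\gamma)/M)^{k-1-t}}{\alpha_{t}}B_{\bm{d}^{\bm{\pi}^{\star},\bm{p}_{t}}_{\bm{\rho}}}(\bm{\pi}^{\star},\bm{\pi}_{t}) - \tfrac{(1-(1-\gamma)/M)^{k-t}}{\alpha_{t-1}}B_{\bm{d}^{\bm{\pi}^{\star},\bm{p}_{t-1}}_{\bm{\rho}}}(\bm{\pi}^{\star},\bm{\pi}_{t})$, and I would show each such term is non-positive by invoking $d^{\bm{\pi}^{\star},\bm{p}_{t}}_{\bm{\rho}}(s)/d^{\bm{\pi}^{\star},\bm{p}_{t-1}}_{\bm{\rho}}(s)\le M/(1-\gamma)$ together with the step-size condition~\eqref{def:linear-stepsize}. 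The main obstacle here is verifying that this step-size growth rate is exactly what is needed to absorb both the mismatch of occupancy measures across consecutive iterates and the different contraction factor $1-(1-\gamma)/M$ simultaneously; this is a delicate balancing of constants that I expect to be where the bookkeeping gets messiest. After that, factoring out $(1-(1-\gamma)/M)^{k}$ and identifying the residual Bregman term at $t=0$ yields the stated bound.
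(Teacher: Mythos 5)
Your proposal follows essentially the same route as the paper's own proof: the identical (A)/(B) split of the three-point inequality aggregated with weights $d^{\bm{\pi}^{\star},\bm{p}_{t}}_{\bm{\rho}}(s)$, the $(s,a)$-rectangular robust performance difference lemma for term (B) combined with the sign of (B) and the occupancy-ratio bound to get the $1/M$ factor, the same one-step recursion $\Phi(\bm{\pi}_{t+1})-\Phi(\bm{\pi}^{\star})\le(1-\tfrac{1-\gamma}{M})(\Phi(\bm{\pi}_{t})-\Phi(\bm{\pi}^{\star}))+\tfrac{1}{M\alpha_{t}}(B_{t}-B_{t+1})$, and the same telescoping of the Bregman cross-terms using $d^{\bm{\pi}^{\star},\bm{p}_{t}}_{\bm{\rho}}(s)/d^{\bm{\pi}^{\star},\bm{p}_{t-1}}_{\bm{\rho}}(s)\le M/(1-\gamma)$ together with~\eqref{def:linear-stepsize}. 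The only quibble is cosmetic: since $(1-\gamma)^{2}<1-\gamma$, the factor $1-\tfrac{1-\gamma}{M}$ is smaller than $1-\tfrac{(1-\gamma)^{2}}{M}$, so the contraction in the $(s,a)$-rectangular case is in fact tighter rather than ``degraded''; this wording does not affect your derivation, whose constants match the paper.
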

\begin{proof}
We also let $\bm{\pi}^{\star} = (\bm{\pi}^{\star}_{s})_{s\in\mathcal{S}}$ be the optimal robust policy and obtain the following inequality from the proof of Theorem~\ref{the:sublinear-dir-para}
\begin{equation}\label{eq:subappC2-mid2}
        \underbrace{\langle \bm{q}^{\bm{\pi}_{t},\bm{p}_{t}}_{s}, \bm{\pi}_{t,s}-\bm{\pi}^{\star}_{s}\rangle}_{\text{(A)}}+\underbrace{\langle \bm{q}^{\bm{\pi}_{t},\bm{p}_{t}}_{s}, \bm{\pi}_{t+1,s}-\bm{\pi}_{t,s}\rangle}_{\text{(B)}}  \leq  \frac{1}{\alpha_{t}}B(\bm{\pi}^{\star}_{s},\bm{\pi}_{t,s}) - \frac{1}{\alpha_{t}}B(\bm{\pi}^{\star}_{s},\bm{\pi}_{t+1,s}).
    \end{equation}
    As for the term (A), we have
\begin{align*}
        \Phi(\bm{\pi}_{t}) - \Phi(\bm{\pi}^{\star}) &\leq \frac{1}{1-\gamma} \sum_{s\in\mathcal{S}} d_{\bm{\rho}}^{\bm{\pi}^{\star},\bm{p}_{t}}(s)\sum_{a\in\mathcal{A}}\left(\pi_{t,sa}-\pi^{\star}_{sa}\right)\cdot q^{\bm{\pi}_{t},\bm{p}_{t}}_{sa},
\end{align*}
where $\bm{p}_{t}:= 
\mathop{\arg\max}\limits_ {\bm{p}\in\mathcal{P}}J_{\bm{\rho}}(\bm{\pi}_{t},\bm{p})$ for any $t\geq0$. As for the term (B),  by applying the result of Lemma~\ref{lem:robust-performance-diff} and the definition of $M:=\sup_{\bm{\pi}\in\Pi,\bm{p}\in\mathcal{P}}\left\|\nicefrac{\bm{d}_{\bm{\rho}}^{\bm{\pi},\bm{p}}}{\bm{\rho}}\right\|_{\infty}$, we have
\begin{align*}
    \Phi(\bm{\pi}_{t+1})-\Phi(\bm{\pi}_{t})&\leq \frac{1}{1-\gamma} \sum_{s\in\mathcal{S}} d_{\bm{\rho}}^{\bm{\pi}_{t+1},\bm{p}_{t+1}}(s)\sum_{a\in\mathcal{A}}\left(\pi_{t+1,sa}-\pi_{t,sa}\right)\cdot q^{\bm{\pi}_{t},\bm{p}_{t}}_{sa}\\
    &= \frac{1}{1-\gamma} \sum_{s\in\mathcal{S}} \frac{d_{\bm{\rho}}^{\bm{\pi}_{t+1},\bm{p}_{t+1}}(s)}{d^{\bm{\pi}^{\star},\bm{p}_{t}}_{\bm{\rho}}(s)}d^{\bm{\pi}^{\star},\bm{p}_{t}}_{\bm{\rho}}(s)\sum_{a\in\mathcal{A}}\left(\pi_{t+1,sa}-\pi_{t,sa}\right)\cdot q^{\bm{\pi}_{t},\bm{p}_{t}}_{sa}\\
    &\leq \frac{1}{M} \sum_{s\in\mathcal{S}}d^{\bm{\pi}^{\star},\bm{p}_{t}}_{\bm{\rho}}(s)\sum_{a\in\mathcal{A}}\left(\pi_{t+1,sa}-\pi_{t,sa}\right)\cdot q^{\bm{\pi}_{t},\bm{p}_{t}}_{sa}.
\end{align*}
Then, by taking expectation with respect to $s\sim\bm{d}^{\bm{\pi}^{\star},\bm{p}_{t}}_{\bm{\rho}}$ on~\eqref{eq:subappC2-mid2} and aggregating the above results, we have
\begin{align*}
M\left(\Phi(\bm{\pi}_{t+1})-\Phi(\bm{\pi}_{t})\right) + (1-\gamma)\left(\Phi(\bm{\pi}_{t}) - \Phi(\bm{\pi}^{\star})\right) \leq  \frac{1}{\alpha_{t}}\left(B_{\bm{d}^{\bm{\pi}^{\star},\bm{p}_{t}}_{\bm{\rho}}}(\bm{\pi}^{\star},\bm{\pi}_{t}) - B_{\bm{d}^{\bm{\pi}^{\star},\bm{p}_{t}}_{\bm{\rho}}}(\bm{\pi}^{\star},\bm{\pi}_{t+1})\right)
\end{align*}
By recursively applying the above inequality from $t=0$ to $t=k-1$, we obtain
\begin{align*}
\Phi(\bm{\pi}_{k}) - \Phi(\bm{\pi}^{\star}) &\leq (1-\frac{1-\gamma}{M})^{k}\left(\Phi(\bm{\pi}_{0}) - \Phi(\bm{\pi}^{\star})\right)\\
&\;\;+\frac{1}{M}\sum^{k-1}_{t=0}\left[(1-\frac{1-\gamma}{M})^{k-1-t}\frac{1}{\alpha_{t}}B_{\bm{d}^{\bm{\pi}^{\star},\bm{p}_{t}}_{\bm{\rho}}}(\bm{\pi}^{\star},\bm{\pi}_{t})\right]\\
&\;\;- \frac{1}{M}\sum^{k}_{t=1}\left[(1-\frac{1-\gamma}{M})^{k-t}\frac{1}{\alpha_{t-1}}B_{\bm{d}^{\bm{\pi}^{\star},\bm{p}_{t-1}}_{\bm{\rho}}}(\bm{\pi}^{\star},\bm{\pi}_{t})\right].
\end{align*}
Rearrange the above inequality to obtain
\begin{align*}
\Phi(\bm{\pi}_{k}) - \Phi(\bm{\pi}^{\star}) &\leq (1-\frac{1-\gamma}{M})^{k}\left(\Phi(\bm{\pi}_{0}) - \Phi(\bm{\pi}^{\star})\right)+\frac{1}{M}(1-\frac{1-\gamma}{M})^{k-1}B_{\bm{d}^{\bm{\pi}^{\star},\bm{p}_{0}}_{\bm{\rho}}}(\bm{\pi}^{\star},\bm{\pi}_{0})\\
&\;\;+\frac{1}{M}\sum^{k-1}_{t=1}(1-\frac{1-\gamma}{M})^{k-1-t}\left[\frac{1}{\alpha_{t}}B_{\bm{d}^{\bm{\pi}^{\star},\bm{p}_{t}}_{\bm{\rho}}}(\bm{\pi}^{\star},\bm{\pi}_{t})-(1-\frac{1-\gamma}{M})\frac{1}{\alpha_{t-1}}B_{\bm{d}^{\bm{\pi}^{\star},\bm{p}_{t-1}}_{\bm{\rho}}}(\bm{\pi}^{\star},\bm{\pi}_{t})\right].    
\end{align*}
Given the definition of step sizes $\{\alpha_{t}\}_{t\geq0}$ in~\eqref{def:linear-stepsize}, the last term in the right-hand side of the above inequality should be non-positive. Therefore, we can reach the desired result that when~\eqref{def:linear-stepsize} holds,
\begin{align*}
\Phi(\bm{\pi}_{k}) - \Phi(\bm{\pi}^{\star}) &\leq (1-\frac{1-\gamma}{M})^{k}\left(\Phi(\bm{\pi}_{0}) - \Phi(\bm{\pi}^{\star})\right)+\frac{1}{M}(1-\frac{1-\gamma}{M})^{k-1}B_{\bm{d}^{\bm{\pi}^{\star},\bm{p}_{0}}_{\bm{\rho}}}(\bm{\pi}^{\star},\bm{\pi}_{0})\\
&\leq (1-\frac{1-\gamma}{M})^{k}\left(\Phi(\bm{\pi}_{0}) - \Phi(\bm{\pi}^{\star}) + (1-\frac{1-\gamma}{M})^{-1}\cdot\frac{B_{\bm{d}^{\bm{\pi}^{\star},\bm{p}_{0}}_{\bm{\rho}}}(\bm{\pi}^{\star},\bm{\pi}_{0})}{M}\right).
\end{align*}
\end{proof}

\subsection{Policy Gradient in Softmax Parameterization}
Here, we give the form of policy gradient under the softmax parameterization for completeness.
\begin{lemma}\label{lem:sec4_2_1}
    Softmax parameterization gradient w.r.t. $\bm{\theta}$ is 
\begin{equation*}
    \frac{\partial J_{\bm{\rho}}(\bm{\pi}^{\bm{\theta}},\bm{p})}{\partial \theta_{sa}} ~=~ \frac{1}{1-\gamma} \cdot d_{\bm{\rho}}^{\bm{\pi}^{\bm{\theta}},\bm{p}}(s) \cdot\pi^{\bm{\theta}}_{sa}\cdot \psi^{\bm{\pi}^{\bm{\theta}},\bm{p}}_{sa}.
\end{equation*}    
\end{lemma}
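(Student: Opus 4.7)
The plan is to derive the softmax gradient by specializing the policy gradient theorem (Theorem~\ref{the:appA1-1} in the appendix), which gives
\[
\nabla_{\bm{\theta}}J_{\bm{\rho}}(\bm{\pi}^{\bm{\theta}},\bm{p}) ~=~ \frac{1}{1-\gamma}    \mathbb{E}_{s'\sim\bm{d}^{\bm{\pi}^{\bm{\theta}},\bm{p}}_{\rho}}\left[\sum_{\hat{a}\in\mathcal{A}}\nabla_{\bm{\theta}}\pi^{\bm{\theta}}_{s'\hat{a}}\cdot q^{\bm{\pi}^{\bm{\theta}},\bm{p}}_{s'\hat{a}}\right],
\]
and then reading off the $\theta_{sa}$-coordinate. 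The key observation is that for the softmax parameterization~\eqref{def:Soft-policy} the quantity $\pi^{\bm{\theta}}_{s'\hat{a}}$ depends on $\theta_{sa}$ only when $s'=s$, so all the outer state indices except $s'=s$ drop out and the expectation collapses to a single term carrying the factor $d^{\bm{\pi}^{\bm{\theta}},\bm{p}}_{\bm{\rho}}(s)$.

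Next, I would compute the partial derivatives of the softmax distribution explicitly. Exactly as was done in the computation inside the proof of Lemma~\ref{lem:sec4_2_2}, one obtains
\[
\frac{\partial \pi^{\bm{\theta}}_{s\hat{a}}}{\partial \theta_{sa}} ~=~ \pi^{\bm{\theta}}_{sa}\bigl(\mathbf{1}\{\hat{a}=a\}-\pi^{\bm{\theta}}_{s\hat{a}}\bigr).
\]
Plugging this into the per-state sum gives
\[
\sum_{\hat{a}\in\mathcal{A}}\frac{\partial \pi^{\bm{\theta}}_{s\hat{a}}}{\partial \theta_{sa}}\cdot q^{\bm{\pi}^{\bm{\theta}},\bm{p}}_{s\hat{a}} ~=~ \pi^{\bm{\theta}}_{sa}\Bigl(q^{\bm{\pi}^{\bm{\theta}},\bm{p}}_{sa} - \sum_{\hat{a}\in\mathcal{A}}\pi^{\bm{\theta}}_{s\hat{a}}\, q^{\bm{\pi}^{\bm{\theta}},\bm{p}}_{s\hat{a}}\Bigr).
\]

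Finally I would invoke the standard identity $\sum_{\hat{a}}\pi^{\bm{\theta}}_{s\hat{a}}q^{\bm{\pi}^{\bm{\theta}},\bm{p}}_{s\hat{a}} = v^{\bm{\pi}^{\bm{\theta}},\bm{p}}_{s}$ to recognize the parenthesis as the advantage $\psi^{\bm{\pi}^{\bm{\theta}},\bm{p}}_{sa}$, yielding exactly the claimed expression. There is no serious obstacle here; the only care needed is bookkeeping when separating the $\hat{a}=a$ and $\hat{a}\neq a$ cases in the softmax Jacobian and observing that the $\hat{s}\neq s$ contributions to $\nabla_{\theta_{sa}}\pi^{\bm{\theta}}_{\hat{s}\cdot}$ vanish so that the state occupancy factor localizes to $d^{\bm{\pi}^{\bm{\theta}},\bm{p}}_{\bm{\rho}}(s)$.
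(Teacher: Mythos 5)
Your proposal is correct, and the computation it carries out (specialize the policy gradient theorem, note that $\pi^{\bm{\theta}}_{s'\hat{a}}$ depends on $\theta_{sa}$ only through $s'=s$, use the softmax Jacobian $\partial\pi^{\bm{\theta}}_{s\hat{a}}/\partial\theta_{sa}=\pi^{\bm{\theta}}_{sa}(\bm{1}\{\hat{a}=a\}-\pi^{\bm{\theta}}_{s\hat{a}})$, and collapse $\sum_{\hat{a}}\pi^{\bm{\theta}}_{s\hat{a}}q^{\bm{\pi}^{\bm{\theta}},\bm{p}}_{s\hat{a}}=v^{\bm{\pi}^{\bm{\theta}},\bm{p}}_{s}$ into the advantage) is exactly the standard argument in the references the paper cites for this lemma, and mirrors the log-softmax derivative computation the paper performs inside the proof of Lemma~\ref{lem:sec4_2_2}. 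The paper gives no self-contained proof here, only the citation, so your write-up is simply an explicit version of the same approach.
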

\begin{proof}
    See for example Lemma 1 of~\cite{mei2020global} or Lemma C.1 of~\cite{agarwal2021theory}.
\end{proof}
A crucial technical lemma is introduced as follow, playing an important role in our analysis of DRPMD with softmax parameterization.
\begin{lemma}\label{lem:PG_seq}
    Consider the optimization problem $\min_{\bm{x}\in\mathbb{R}^{d}}f(\bm{x})$ where the continuously differentiable function $f\colon \mathbb{R}^{d}\rightarrow\mathbb{S}\in\mathbb{R}$ is $L$-Lipschitz and $\ell$-smooth. Then, given an arbitrary $\hat{\bm{x}}\in\mathbb{R}^{d}$, for $k\geq 0$, there exists a sequence $\{\bm{x}_{n}\}_{n\geq0}$, generated by the gradient descent update $\bm{x}_{n+1}=\bm{x}_{n} - \alpha\nabla f(\bm{x}_{n})$ with constant step size $\alpha>0$, while the $k$-th iterative point $\bm{x}_{k}$ exactly equals to $\hat{\bm{x}}$.
\end{lemma}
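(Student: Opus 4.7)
The plan is to prove the lemma by a backward construction. Set $\bm{x}_k := \hat{\bm{x}}$, and then, working backwards from $n = k$ down to $n = 1$, construct each $\bm{x}_{n-1}$ so that the gradient descent identity $\bm{x}_n = \bm{x}_{n-1} - \alpha \nabla f(\bm{x}_{n-1})$ holds. Once $\bm{x}_0$ is obtained, the tail of the sequence $\{\bm{x}_n\}_{n > k}$ is produced by simply iterating the forward update. The whole construction therefore reduces to one question: given any $\bm{y} \in \mathbb{R}^d$, can one always find a preimage $\bm{x}$ satisfying $\bm{x} - \alpha \nabla f(\bm{x}) = \bm{y}$?

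To resolve this inversion question, the idea is to introduce the auxiliary function
\[
g_{\bm{y}}(\bm{x}) \;:=\; \frac{1}{2\alpha}\|\bm{x} - \bm{y}\|^{2} - f(\bm{x}),
\]
whose gradient is $\nabla g_{\bm{y}}(\bm{x}) = \frac{1}{\alpha}(\bm{x} - \bm{y}) - \nabla f(\bm{x})$. Critical points of $g_{\bm{y}}$ are thus precisely the desired preimages. Since $f$ is continuously differentiable, $g_{\bm{y}}$ is $C^{1}$, and since $f$ is $L$-Lipschitz, one has $|f(\bm{x})| \leq |f(\bm{0})| + L\|\bm{x}\|$, so the quadratic term dominates the at-most-linear growth of $-f$, giving $g_{\bm{y}}(\bm{x}) \to \infty$ as $\|\bm{x}\| \to \infty$. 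Hence $g_{\bm{y}}$ is coercive and continuous, so it attains its global minimum on $\mathbb{R}^{d}$, which in turn yields a stationary point solving the inversion equation.

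The main obstacle is exactly this existence of a valid backward step; the rest of the argument is essentially bookkeeping. Note that smoothness is not strictly needed to guarantee existence (Lipschitzness plus coercivity of $g_{\bm{y}}$ suffice), but it can be invoked to obtain uniqueness of the preimage whenever $\alpha \leq 1/\ell$, since in that regime $g_{\bm{y}}$ becomes (strictly) convex via $\nabla^{2} g_{\bm{y}} \succeq (1/\alpha - \ell) I$. Once existence of a single backward step is established, iterating the argument $k$ times produces $\bm{x}_{0}, \bm{x}_{1}, \ldots, \bm{x}_{k} = \hat{\bm{x}}$ satisfying the gradient descent recursion, and the remainder of the sequence $\{\bm{x}_{n}\}_{n > k}$ is extended forward trivially, completing the proof.
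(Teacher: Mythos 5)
Your proposal is correct, and its core step differs from the paper's in an interesting way. Structurally you both reduce the lemma to the same question -- given $\bm{y}$, does the gradient-descent map have a preimage, i.e.\ does $\bm{x}-\alpha\nabla f(\bm{x})=\bm{y}$ admit a solution? -- and then you both finish by stitching backward steps to the point $\hat{\bm{x}}$ and iterating forward afterwards (the paper packages this stitching as an induction on $k$, you simply repeat the backward step $k$ times; this is a cosmetic difference). Where you genuinely diverge is in how existence of the preimage is established. The paper argues that $h_{1}(\bm{x})=\nabla f(\bm{x})$ is bounded and Lipschitz while $h_{2}(\bm{x})=\frac{1}{\alpha}(\bm{x}-\bm{z}_{t})$ is linear, and asserts they must intersect; as written this is loose for $d>1$ (the functions are vector-valued, and a rigorous version really needs something like Brouwer's fixed-point theorem applied to $\bm{x}\mapsto\bm{y}+\alpha\nabla f(\bm{x})$ on a ball of radius $\alpha L$). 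Your variational argument -- minimizing the coercive $C^{1}$ function $g_{\bm{y}}(\bm{x})=\frac{1}{2\alpha}\|\bm{x}-\bm{y}\|^{2}-f(\bm{x})$, whose unconstrained minimizer is a critical point and hence a preimage -- uses only the linear-growth bound from $L$-Lipschitzness and is both more elementary and more airtight than the paper's intersection claim, and it correctly identifies that $\ell$-smoothness is not needed for existence. One small caveat on your optional uniqueness remark: invoking $\nabla^{2}g_{\bm{y}}\succeq(1/\alpha-\ell)I$ presumes twice differentiability, which the hypotheses do not grant (only a Lipschitz gradient); the same conclusion follows for $\alpha<1/\ell$ from strong monotonicity of $\bm{x}\mapsto\frac{1}{\alpha}(\bm{x}-\bm{y})-\nabla f(\bm{x})$, but this aside is not needed for the lemma in any case.
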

\begin{proof}
We prove this lemma by induction. When $k=0$, it implies that $\hat{\bm{x}}$ is assumed to be the initial point, so we can generate a sequence $\{\bm{x}_{n}\}_{n\geq1}$ with $\bm{x}_{0} = \hat{\bm{x}}$ using the updating rule $\bm{x}_{n+1}=\bm{x}_{n} - \alpha\nabla f(\bm{x}_{n})$. Next, we assume that, for $k=t-1$, there exists a sequence $\{\bm{y}_{n}\}_{n\geq0}$ such that $\bm{y}_{t-1}=\hat{\bm{x}}$, and consider the case $k=t$. To generate a sequence $\{\bm{z}_{n}\}_{n\geq0}$ such that $\bm{z}_{t}=\hat{\bm{x}}$, we consider the following scheme. First, it is natural to generate a subsequence $\{\bm{z}_{n}\}_{n> t}$ for $n> t$, step by step from $\bm{z}_{t}$ using the update rule $\bm{z}_{n+1}=\bm{z}_{n} - \alpha\nabla f(\bm{z}_{n})$. Then, we  generate another subsequence $\{\bm{z}_{n}\}_{n< t}$ for $n<t$. To do so, we begin to find $\bm{z}_{t-1}\in\operatorname{dom}(f)$ which satisfies the following update rule 
\begin{equation*}
    \bm{z}_{t}
  ~=~
  \bm{z}_{t-1} - \alpha\cdot\nabla f(\bm{z}_{t-1}).    
\end{equation*}
Note that $\bm{z}_{t}$ in the above equation is fixed as $\hat{\bm{x}}$. Let us define two functions $h_{1}(\bm{x})= \nabla f(\bm{x})\colon \mathbb{R}^{d}\rightarrow\mathbb{R}$ and $h_{2}(\bm{x})= \frac{1}{\alpha}\left(\bm{x}-\bm{z}_{t}\right) \colon \mathbb{R}^{d}\rightarrow\mathbb{R}$. Since $f$ is $L$-Lipschitz and $\ell$-smooth, it implies that $h_{1}(\bm{x})$ is a bounded and $\ell$-Lipschitz continuous function, while $h_{2}(\bm{x})$ is a simple linear function. Therefore, we can obtain that there exists at least one point $\bm{x}^{\star}:= \bm{z}_{t-1}$ such that $h_{1}(\bm{x}^{\star}) = h_{2}(\bm{x}^{\star})$. Once we obtain $\bm{z}_{t-1}$, by using the assumption $k=t-1$, we can immediately obtain a sequence, denoted as $\{\hat{\bm{y}}_{n}\}_{n\geq 0}$, satisfying $\hat{\bm{y}}_{t-1}:= \bm{z}_{t-1}$. Therefore, the sequence $\{\bm{z}_{n}\}_{n\geq 0}$ becomes
\begin{equation*}
    \{\bm{z}_{n}\}_{n\geq 0}:=\{\hat{\bm{y}}_{0},\cdots,\hat{\bm{y}}_{t-2}, \bm{x}^{\star} = \bm{z}_{t-1}, \hat{\bm{x}} = \bm{z}_{t},\cdots\}
\end{equation*}
so that the proof is completed.
\end{proof}

\vskip 0.2in
\section{Entropy Transition Parameterization}\label{app:entropy_trans}
We now discuss the relationship between the proposed entropy transition parameterization and the solution to the policy evaluation problem of the $(s,a)$-rectangular RMDPs with the KL-divergence constraint. In \cite{nilim2005robust}, the RMDP aims to compute the worst-case performance with respect to the ambiguous transition lying in a $(s,a)$-rectangular ambiguity set
\begin{equation*}
\min_{\bm{\pi}\in\Pi}\max_{\bm{p}\in\mathcal{P}} \, \left\{J_{\bm{\rho}}(\bm{\pi},\bm{p})~:=~ \mathbb{E}_{\bm{\pi},\bm{p},\tilde{s}_0 \sim\bm{\rho}}\left[\sum_{t=0}^{\infty} \gamma^{t}\cdot  c_{\tilde{s}_{t} \tilde{a}_{t}}\right]\right\}.  
\end{equation*}
The fundamental solution method to find the optimal robust value function is the \emph{robust value iteration}, which repeatedly computes the sequence $\bm{v}_{t+1} = \mathcal{T}\bm{v}_{t}$ with any initial values $\bm{v}_{0}$. Here, $\mathcal{T}$ is called the robust Bellman optimality operator~\citep{ho2021partial} and defined for each state $s\in\mathcal{S}$ as
\begin{equation*}
    (\mathcal{T}\bm{v})_{s} ~:=~ \min_{a\in\mathcal{A}}\left(c_{sa}+\gamma\max_{\bm{p}_{sa}\in\mathcal{P}_{sa}}\bm{p}_{sa}^{\top}\bm{v}\right),
\end{equation*}
Each step of the robust value iteration involves the solution of an optimization problem, referred to in this paper as the inner robust evaluation problem. This problem is decoupled across state-action pairs and has the following form
\begin{equation*}
\max_{\bm{p}\in\mathcal{P}_{sa}}\bm{p}^{\top}\bm{v},\quad\forall (s,a)\in\mathcal{S}\times\mathcal{A}.
\end{equation*}
Consider the $(s,a)$-rectangular ambiguity set with KL-divergence constraint
\begin{equation*}
    \mathcal{P}_{sa}~:=~\{\bm{p}\in\Delta^{S} \mid D\left(\bm{p}\|\bm{p}_{c}\right) \leq \kappa\},
\end{equation*}
where $\kappa>0$ is fixed, $\bm{q}$ is a given distribution, and $D\left(\bm{p}\|\bm{q}\right)$denotes the Kullback-Leibler (KL) divergence from $\bm{q}$ to $\bm{p}$, \ie,
\begin{equation*}
D\left(\bm{p}\|\bm{q}\right) ~:=~\sum_{s\in\mathcal{S}} p_{s}\log\left(\frac{p_{s}}{q_{s}} \right). 
\end{equation*}
Then, by standard duality arguments, the inner problem is equivalent to its dual
\begin{equation*}
\min_{\lambda>0, \mu} \mu+\kappa \lambda+\lambda \sum_{s\in\mathcal{S}} q_{s} \exp \left(\frac{v_{s}-\mu}{\lambda}-1\right).    
\end{equation*}
Then, the optimal distribution is
\begin{equation*}
  p\opt
  =\frac{q_{s} \exp ( \frac{v_{s}}{\lambda})}{\sum_{s\in\mathcal{S}} q_{s} \exp (\frac{v_{s}}{\lambda})}.
\end{equation*}
While we introduce the linear approximation on $v_{s}$ and $\lambda$, we can immediately have the entropy transition parameterization form.  

\vskip 0.2in
\section{Details of Monte-Carlo Transition Gradient Method}\label{app:MCTG}

    In this case, we fix the policy $\bm{\pi}$ to choose action and consider the gradient of the inner problem under the episodic case. Denote trajectory $\tau = (S_{0},A_{0},R_{1},S_{1},\cdots,S_{T})$ and the total reward of this trajectory as $G_{\tau}= \sum^{T}_{i=1}\gamma^{i}R_{i}$. Then, for the inner problem, the value of transition $\bm{p}^{\bm{\xi}}$ is
\begin{equation*}
    J_{\bm{\rho}}(\bm{\pi}, \bm{p}^{\bm{\xi}}) ~=~ \mathbb{E}_{\bm{\pi},\bm{p}^{\bm{\xi}}} \left[G_{\tau}\right] ~=~ \sum_{\tau}P^{\bm{\pi},\bm{p}^{\bm{\xi}}}(\tau)G_{\tau},
\end{equation*}
where $P^{\bm{\pi},\bm{p}^{\bm{\xi}}}(\tau)$ is the probability of having trajectory $\tau$ under policy $\bm{\pi}$ and transition $\bm{p}^{\bm{\xi}}$. The gradient of the inner is then 
\begin{align*}
      \nabla_{\bm{\xi}} J(\bm{\pi}, \bm{p}^{\bm{\xi}}) &=  \nabla_{\bm{\xi}}\sum_{\tau}G_{\tau} P^{\bm{\pi},\bm{p}^{\bm{\xi}}}(\tau) \\
     &=\sum_{\tau}G_{\tau}\nabla_{\bm{\xi}} P^{\bm{\pi},\bm{p}^{\bm{\xi}}}(\tau)\\
     &= \sum_{\tau} P^{\bm{\pi},\bm{p}^{\bm{\xi}}}(\tau) \cdot G_{\tau}\nabla_{\bm{\xi}} \log P^{\bm{\pi},\bm{p}^{\bm{\xi}}}(\tau).
\end{align*}
We notice that the gradient is estimated by the expectation of $G_{\tau}\nabla_{\bm{\xi}} \log P^{\bm{\pi},\bm{p}^{\bm{\xi}}}(\tau)$ over all trajectories. To compute $\nabla_{\bm{\xi}} \log P^{\bm{\pi},\bm{p}^{\bm{\xi}}}(\tau)$, we have
\begin{align*}
    \nabla_{\bm{\xi}} \log P^{\bm{\pi},\bm{p}^{\bm{\xi}}}(\tau) &= \nabla_{\bm{\xi}} \log \left[\rho(S_{0})\prod_{t=0}^{T-1}\pi(A_{t}|S_{t})\cdot p^{\bm{\xi}}_{S_{t}A_{t}S_{t+1}}\right] \\
    &= \nabla_{\bm{\xi}} \log\rho(S_{0}) + \sum_{t=0}^{T-1}\left[\nabla_{\bm{\xi}}\log\pi(A_{t}|S_{t}) + \nabla_{\bm{\xi}} \log p^{\bm{\xi}}_{S_{t}A_{t}S_{t+1}}\right]\\
    &= \sum_{t=0}^{T-1}\nabla_{\bm{\xi}} \log p^{\bm{\xi}}_{S_{t}A_{t}S_{t+1}}
\end{align*}
In particular, one can approximate the gradient using $m$ sample trajectories without the model
\begin{align*}
    \nabla_{\bm{\xi}} J(\bm{\pi}, \bm{p}^{\bm{\xi}}) &= \mathbb{E}_{\tau\sim P^{\bm{\pi},\bm{p}^{\bm{\xi}}}}\left[G_{\tau} \nabla_{\bm{\xi}} \log P^{\bm{\pi},\bm{p}^{\bm{\xi}}}(\tau)\right]\\
    &\approx \frac{1}{m}\sum_{i=1}^{m}  G_{\tau_{i}} \nabla_{\bm{\xi}} \log P^{\bm{\pi},\bm{p}^{\bm{\xi}}}(\tau_{i})\\
    &=  \frac{1}{m}\sum_{i=1}^{m}  G_{\tau_{i}} \sum_{t=0}^{T_{i}-1}\nabla_{\bm{\xi}} \log p^{\bm{\xi}}_{S_{t}A_{t}S_{t+1}}
\end{align*}
Notice that we have
\begin{align*}
    \nabla_{\bm{\xi}} J(\bm{\pi}, \bm{p}^{\bm{\xi}}) = \nabla_{\bm{\xi}}\mathbb{E}_{\bm{\pi},\bm{p}^{\bm{\xi}}} \left[G_{\tau}\right] &= \mathbb{E}_{\tau\sim P^{\bm{\pi},\bm{p}^{\bm{\xi}}}}\left[G_{\tau} \nabla_{\bm{\xi}} \log P^{\bm{\pi},\bm{p}^{\bm{\xi}}}(\tau)\right] \\
    &= \mathbb{E}_{\tau\sim P^{\bm{\pi},\bm{p}^{\bm{\xi}}}}\left[\left(\sum^{T}_{i=1}\gamma^{i}R_{i}\right)\left(\sum_{t=0}^{T-1}\nabla_{\bm{\xi}} \log p^{\bm{\xi}}_{S_{t}A_{t}S_{t+1}}\right)\right]
\end{align*}
Alternatively, using the same argument, one can show that for any fixed $t'$, we have
\begin{equation*}
    \nabla_{\bm{\xi}} \mathbb{E}_{\bm{\pi},\bm{p}^{\bm{\xi}}}\left[R_{t'}\right] = \mathbb{E}_{\tau\sim P^{\bm{\pi},\bm{p}^{\bm{\xi}}}}\left[R_{t'}\left(\sum_{t=0}^{t'-1}\nabla_{\bm{\xi}} \log p^{\bm{\xi}}_{S_{t}A_{t}S_{t+1}}\right)\right]
\end{equation*}
Then, summing over all $t'$ with discounted factor, we have
\begin{align*}
    \nabla_{\bm{\xi}} \mathbb{E}_{\bm{\pi},\bm{p}^{\bm{\xi}}}\left[G_{\tau}\right] &= \nabla_{\bm{\xi}} \mathbb{E}_{\bm{\pi},\bm{p}^{\bm{\xi}}}\left[\sum^{T}_{i=1}\gamma^{i}R_{i}\right]\\
    &= \sum^{T}_{i=1}\gamma^{i}\cdot\nabla_{\bm{\xi}} \mathbb{E}_{\bm{\pi},\bm{p}^{\bm{\xi}}}\left[R_{i}\right]\\
    &= \mathbb{E}_{\tau\sim P^{\bm{\pi},\bm{p}^{\bm{\xi}}}}\left[\sum^{T}_{t'=1}\gamma^{t'}R_{t'}\left(\sum_{t=0}^{t'-1}\nabla_{\bm{\xi}} \log p^{\bm{\xi}}_{S_{t}A_{t}S_{t+1}}\right)\right]
\end{align*}
Rearranging the terms, we have
\begin{align*}
\mathbb{E}_{\tau\sim P^{\bm{\pi},\bm{p}^{\bm{\xi}}}}\left[\sum^{T}_{t'=1}\gamma^{t'}R_{t'}\left(\sum_{t=0}^{t'-1}\nabla_{\bm{\xi}} \log p^{\bm{\xi}}_{S_{t}A_{t}S_{t+1}}\right)\right] &= \mathbb{E}_{\tau\sim P^{\bm{\pi},\bm{p}^{\bm{\xi}}}}\left[\sum^{T-1}_{t=0}\nabla_{\bm{\xi}} \log p^{\bm{\xi}}_{S_{t}A_{t}S_{t+1}}\left(\sum_{t'=t+1}^{T}\gamma^{t'}R_{t'}\right)\right]\\
&= \mathbb{E}_{\tau\sim P^{\bm{\pi},\bm{p}^{\bm{\xi}}}}\left[\sum^{T-1}_{t=0}\nabla_{\bm{\xi}} \log p^{\bm{\xi}}_{S_{t}A_{t}S_{t+1}}G_{t}\right]
\end{align*}
where $G_{t} = \sum^{T}_{t}\gamma^{i}R_{i}$. Therefore, we have
\begin{equation*}
    \nabla_{\bm{\xi}} J(\bm{\pi}, \bm{p}^{\bm{\xi}}) = \mathbb{E}_{\tau\sim P^{\bm{\pi},\bm{p}^{\bm{\xi}}}}\left[G_{\tau} \nabla_{\bm{\xi}} \log P^{\bm{\xi}}(\tau)\right]  \approx \frac{1}{m}\sum_{i=1}^{m}\sum_{t=0}^{T_{i}-1} \nabla_{\bm{\xi}} \log p^{\bm{\xi}}_{S_{i,t}A_{i,t}S_{i,t+1}}\cdot G_{i,t}
\end{equation*}

\end{document}